\theoremstyle{plain}
\newtheorem{proposition}{Proposition}
\theoremstyle{definition}
\theoremstyle{remark}
\definecolor{ered}{rgb}{0.72, 0.16, 0.2}
\definecolor{limegreen}{rgb}{0.2, 0.8, 0.2}
\definecolor{cvprblue}{rgb}{0.21,0.49,0.74}
\definecolor{paramred}{RGB}{205, 95, 95}
\newcommand{\q}[1]{``#1''} 
\newcommand{\rankdown}[1]{\scriptsize\textcolor{ered}{\boldmath $(\!#1_{\downarrow}\!)$}}
\newcommand{\rankup}[1]{\scriptsize\textcolor{limegreen}{\boldmath$(\!#1^{\uparrow}\!)$}}
\newcommand{\paramstimes}[1]{\scriptsize\textcolor{ered}{\boldmath $(\times #1)$}}
\newcommand*\samethanks[1][\value{footnote}]{\footnotemark[#1]}
    \crefname{section}{Sec.}{Secs.}
    \Crefname{section}{Section}{Sections}
    \Crefname{table}{Table}{Tables}
    \crefname{table}{Tab.}{Tabs.}
    \Crefname{equation}{Eq.}{Eqs.}
    \Crefname{figure}{Fig.}{Figs.}
    \Crefname{tabular}{Tab.}{Tabs.}
    \Crefname{algorithm}{Alg.}{Algs.}
    \Crefname{proposition}{Prop.}{Props.}
    \Crefname{appendix}{App.}{Apps.}
\title{Learning on Model Weights using Tree Experts}
\author{Eliahu Horwitz\thanks{Equal contribution} \qquad Bar Cavia\samethanks \qquad Jonathan Kahana\samethanks \qquad Yedid Hoshen \\
The Hebrew University of Jerusalem, Israel\\
  \small\url{https://horwitz.ai/probex/}\\
\small\texttt{\{eliahu.horwitz, bar.cavia, jonathan.kahana, yedid.hoshen\}@mail.huji.ac.il} \\
}
\begin{document}
\maketitle

\begin{abstract} 
The number of publicly available models is rapidly increasing, yet most remain undocumented. Users looking for suitable models for their tasks must first determine what each model does. Training machine learning models to infer missing documentation directly from model weights is challenging, as these weights often contain significant variation unrelated to model functionality (denoted nuisance). Here, we identify a key property of real-world models: most public models belong to a small set of \textit{Model Trees}, where all models within a tree are fine-tuned from a common ancestor (e.g., a foundation model). Importantly, we find that within each tree there is less nuisance variation between models. Concretely, while learning across Model Trees requires complex architectures, even a linear classifier trained on a single model layer often works within trees. While effective, these linear classifiers are computationally expensive, especially when dealing with larger models that have many parameters. To address this, we introduce \textit{Probing Experts} (ProbeX), a theoretically motivated and lightweight method. Notably, ProbeX is the first probing method specifically designed to learn from the weights of a single hidden model layer. We demonstrate the effectiveness of ProbeX by predicting the categories in a model's training dataset based only on its weights. Excitingly, ProbeX can map the weights of Stable Diffusion into a weight-language embedding space, enabling model search via text, i.e., zero-shot model classification.
\end{abstract}

\section{Introduction}
\label{sec:intro}
In recent years, the number of publicly available neural network models has skyrocketed, with over one million models now hosted on Hugging Face. 
Ideally, this abundance would allow users to simply download the most suitable model for their task, thereby saving resources, reducing training time, and potentially improving accuracy. However, the lack of adequate documentation for most models makes it challenging for users to determine a model’s suitability for specific tasks. This motivates developing machine learning methods that can infer model functionality and missing documentation directly from model weights. The emerging field of weight-space learning \citep{schurholt2021self,navon2023equivariant,neural_functional_transformers,zhou2024universal,pal2024model} studies how to design and train metanetworks, neural networks that take the weights of other neural networks as inputs (see \cref{fig:weight_space_learning}). Previous works learned metanetworks that predict training data attributes \citep{sane,lim2023graph}, model performance \citep{navon2023equivariant}, and even generate new model parameters \citep{weights2weights,peebles2022learning}. In this work, we focus on predicting the categories in a model's training dataset as a proxy for the concepts it can recognize or generate. As a initial step towards model search-by-text, we demonstrate that it is sometimes possible to align weights with language, enabling zero-shot model classification.

\begin{figure}[t]
\centering
\includegraphics[width=0.95\linewidth]{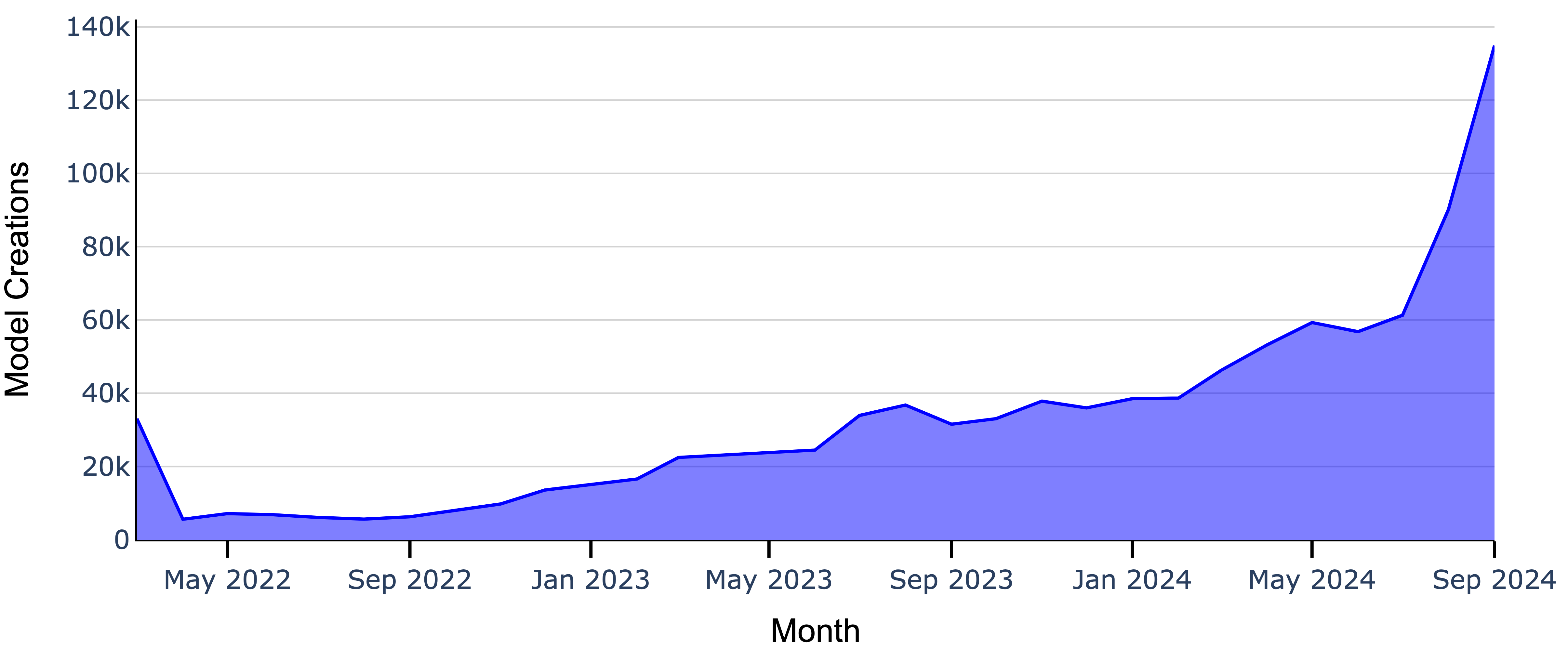} \\
\vspace{-5pt}
 \caption{\textit{\textbf{Growth in Hugging Face models:}} 
 The number of public models is growing fast, but they are mostly undocumented. Fully benefiting from them requires effective model search.}
 \vspace{-15pt}
 \label{fig:hf_growth}
\end{figure}

\begin{figure*}[t]
\centering
\includegraphics[width=0.95\linewidth]{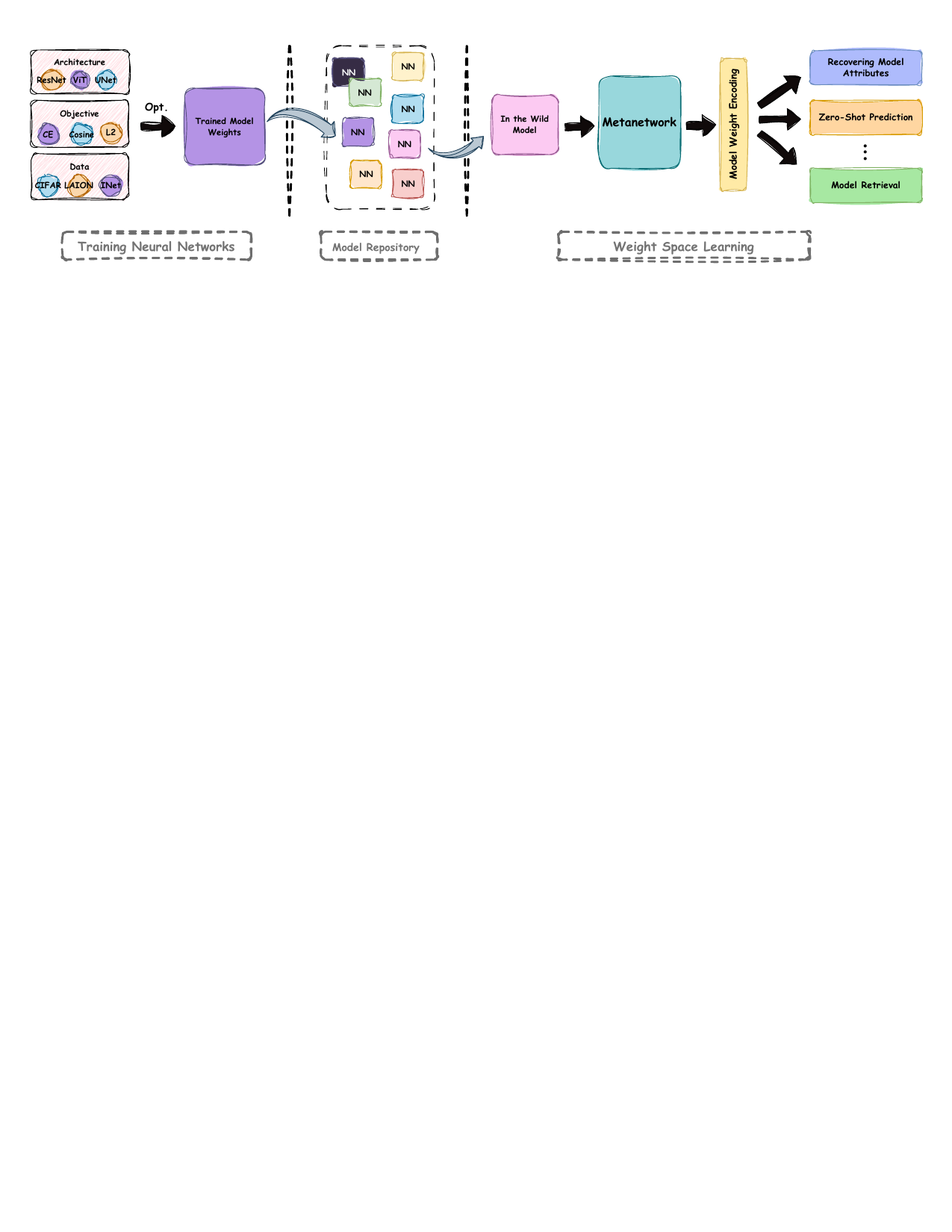}
\vspace{-5pt}
 \caption{\textit{\textbf{Weight-space learning:}} 
 \textit{Left.} Model weights are a direct product of the optimization process and training data. \textit{Center.} Models are uploaded to public repositories, e.g., Hugging Face, typically lacking documentation. This prevents model search, making models hard to discover and reuse. \textit{Right.} Weight-space learning treats each model as a data point and designs metanetworks: networks that process weights of other models as inputs. We train metanetworks that predict categories in a model's training dataset. As a first step towards model search-by-text, we also align weights with language, enabling zero-shot model classification.
 }
 \vspace{-10pt}
\label{fig:weight_space_learning}
\end{figure*}

However, extracting meaningful information from model weights is challenging. While the weights of a neural network are a function of its training data, they are also affected by the optimization process, which may introduce nuisance variation unrelated to attributes of interest. Neuron permutation \citep{hecht1990algebraic} is perhaps the most studied nuisance factor and has driven research into permutation-invariant architectures \citep{lim2023graph,navon2023equivariant_alignment,navon2023equivariant,kofinas2024graph,neural_functional_transformers} and carefully designed augmentations \citep{sane,schurholt2021self,schurholt2022hyper,schmidhuber}. In this paper, we highlight another important nuisance factor, the weights at the beginning of optimization.

The key insight in this paper is that models within a Model Tree \citep{mother} have reduced nuisance variation. A Model Tree describes a set of models that share a common ancestor (root), with each model derived by fine-tuning either from the root or from one of its descendants. For example, the Llama3 \citep{dubey2024llama} Model Tree includes all models fine-tuned from Llama3 or any of its descendants. In practice, most public models belong to a relatively small number of Model Trees, for instance, on Hugging Face, fewer than $20$ Model Trees cover most models (see \cref{fig:hf_breakdown}). \textit{We hypothesize that learning on models from the same tree is significantly simpler than learning from models across different trees.} We demonstrate this empirically, showing that standard linear models perform well on models within the same tree but fail to learn when applied to models from many different trees.

However, standard linear classifiers require too many parameters for learning on large models. To address this, we present single layer \textit{\textbf{Probing} \textbf{eX}perts} (ProbeX), a theoretically grounded architecture that scales weight-space learning to large models. Unlike conventional probing methods, ProbeX operates on hidden model layers. Remarkably, ProbeX can handle models with hundreds of millions of parameters, requiring under $10$ minutes to train. When working with multiple Model Trees, we use a Mixture-of-Experts.

To evaluate our method, we introduce a dataset of $14{,}000$ models across $5$ disjoint Model Trees spanning multiple architectures and functionalities. ProbeX achieves state-of-the-art results on the task of training category prediction, accurately identifying the specific classes within a model’s training dataset. Excitingly, ProbeX can also align fine-tuned Stable Diffusion weights with language representations. This capability enables a new task: zero-shot model classification, where models are classified via a text prompt describing their training data, ProbeX achieves $89.8\%$ accuracy on this.

To summarize, our main contributions are:
\begin{enumerate}
    \item Identifying that learning within Model Trees is much simpler than learning across trees. 
    \item Introducing Probing Experts (ProbeX), a lightweight, theoretically motivated method for weight-space learning.
    \item Proposing the task of zero-shot model weight classification and tackling it by aligning model weights with language representations, achieving strong results.
\end{enumerate}

\begin{figure*}[t]
    \centering
    \begin{tabular}{cccc}
        \begin{minipage}{0.21\textwidth}
            \centering
            \includegraphics[width=\textwidth]{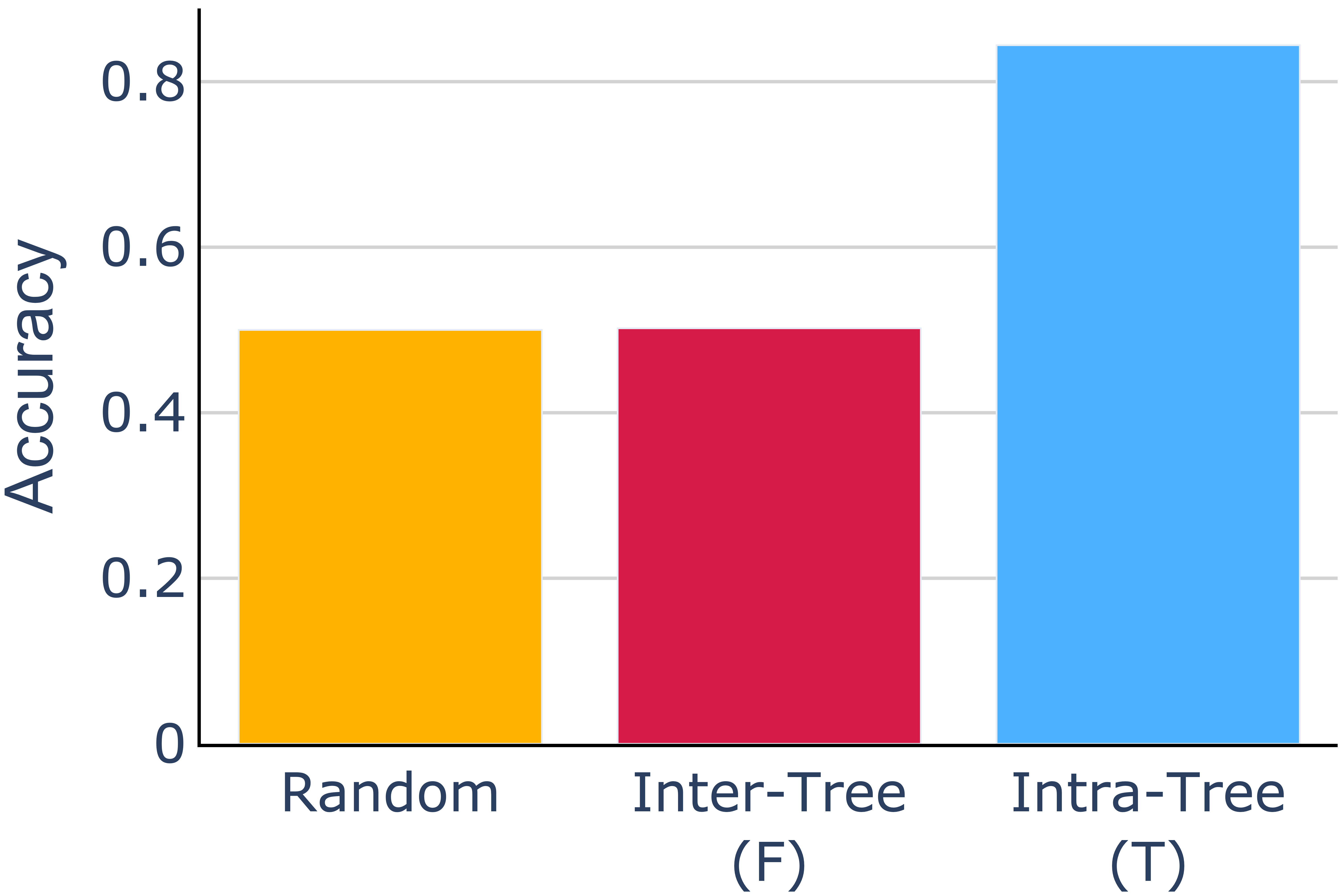}
            \subcaption{\textbf{\textit{Intra vs. Inter tree learning}}}
            \label{fig:resnet9_t_f}
        \end{minipage} &
        
        \begin{minipage}{0.21\textwidth}
            \centering
            \includegraphics[width=\textwidth]{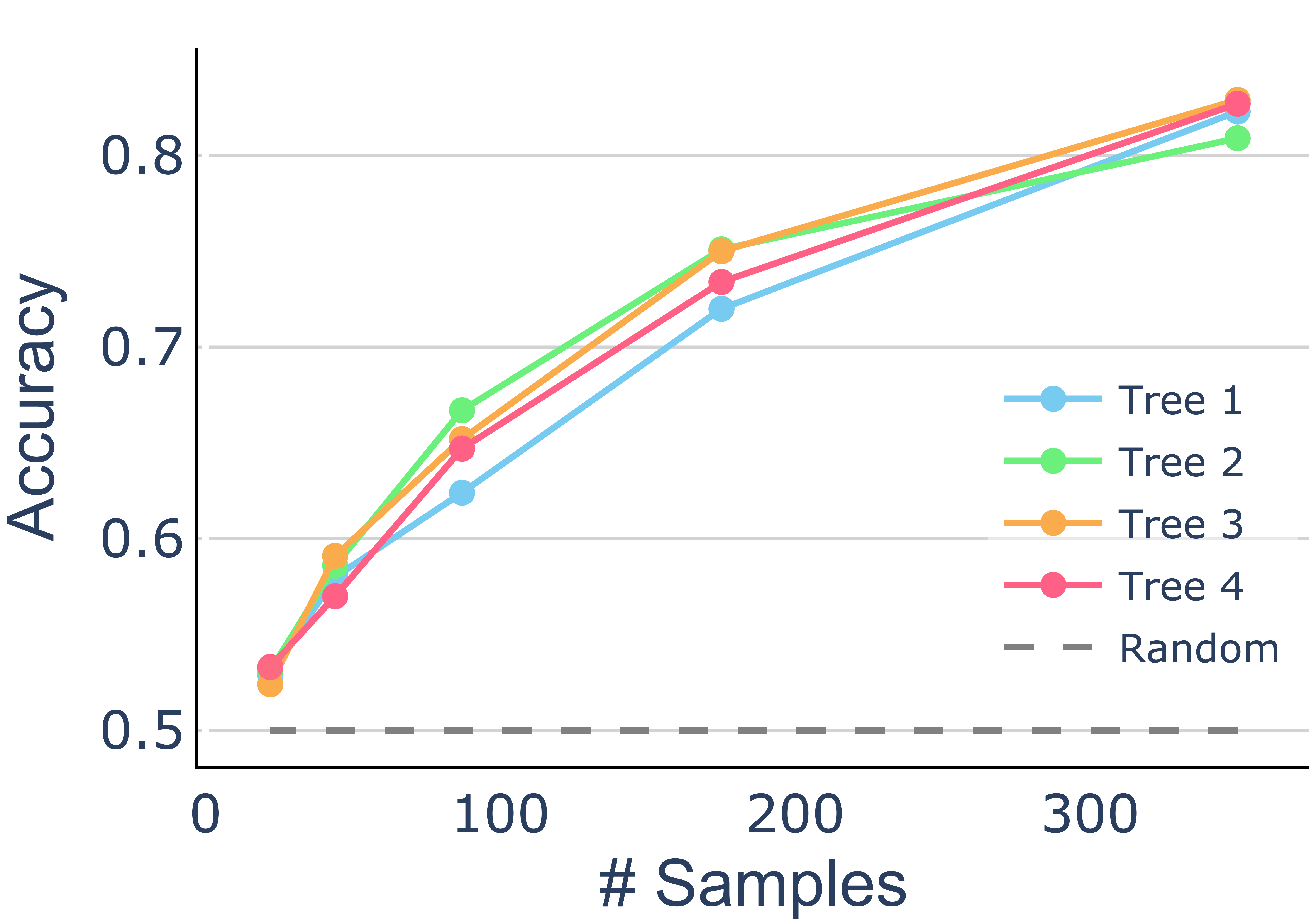}
            \subcaption{\textbf{\textit{Positive transfer within trees}}}
            \label{fig:intra_train_size}
        \end{minipage} &
        
        \begin{minipage}{0.26\textwidth}
            \centering \subcaption{\textbf{\textit{Negative transfer between trees}}}
            \begin{tabular}{ccc}
                \textbf{\# Trees} & \textbf{\# Models} & \textbf{Acc.} \\
                \toprule
                1 & 350  & 0.844 \\
                2 & 700  & 0.752 \\
                3 & 1050 & 0.686 \\
                4 & 1400 & 0.724 \\
            \end{tabular}
            \label{tab:negative_transfer}
        \end{minipage} &
        \begin{minipage}{0.21\textwidth}
            \centering
            \includegraphics[width=\textwidth]{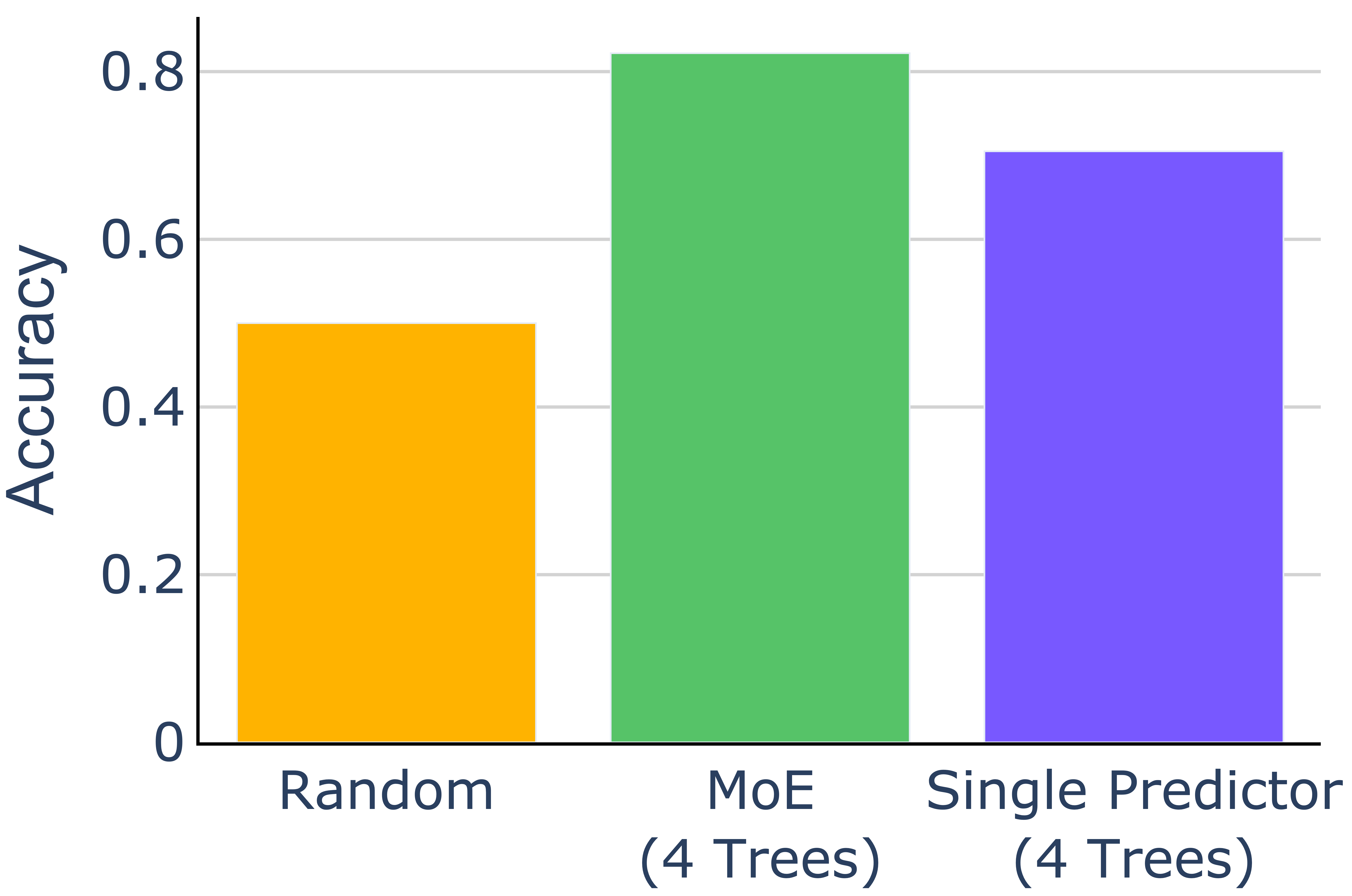}
            \subcaption{\textbf{\textit{MoE vs. Shared predictor}}}
            \label{fig:moe_vs_joint}
        \end{minipage}
    \end{tabular}
    \caption{\textbf{\textit{Tree membership and weight-space learning:}} We conduct 4 motivating experiments that illustrate the benefits of learning within Model Trees. In each experiment, we train a linear classifier to predict the classes a ViT model was fine-tuned on. First, we show that \textit{learning within Model Trees is significantly simpler (a)} by comparing a metanetwork trained on models from the same tree ($T$) with one trained on models from different trees ($F$). Next, we demonstrate \textit{positive transfer within the same tree (b)} by showing that adding more models from the same tree improves the performance.  Surprisingly, we observe \textit{negative transfer between Model Trees (c)}, where adding samples from other trees degrades performance on a single tree. Finally, we find that \textit{expert learning is preferable when learning from multiple trees (d)}, as a single shared metanetwork performs worse than an expert metanetwork per tree (MoE).}
\end{figure*}

\section{Related works}
\label{sec:related_works}
Despite the popularity of neural networks, little research has explored using their weights as inputs for machine learning methods. \citet{statnn,nws} were among the first to systematically analyze model weights to predict undocumented properties like the training dataset or generalization error. Some works aim to learn general representations \citep{sane,schurholt2021self,schurholt2022hyper,schmidhuber} for multiple properties, while others incorporate specific priors \citep{lim2023graph,navon2023equivariant_alignment,navon2023equivariant,kofinas2024graph,neural_functional_transformers,dsire} to directly predict the property. A major challenge with model weights is the presence of many parameter space symmetries \citep{hecht1990algebraic}. For instance, permuting neurons in hidden layers of an MLP doesn't change the network output. Thus, neural networks designed to take weights as inputs need to account for these symmetries. To avoid the issue of weight symmetries, recent methods \citep{probegen,kofinas2024graph,schmidhuber,cunningham2023sparse,lu2023content} propose using \textit{probing}. In this approach, a set of probes are optimized to serve as inputs to the model and the outputs act as the model representation. Probes are also used in other fields such as \citep{huang2024lg,tahan2024label,bau2017network,shaham2024multimodal,dravid2023rosetta} mechanistic interpretability for solving different tasks. However, until now, this was limited to passing the probes through the entire model and did not apply to single layers. Concurrent to our work, \citet{putterman2024learning} propose a method for linearly classifying LoRA \citep{lora} models to infer their functionality. While they do not describe it this way, it can be viewed as probing.

Other applications include generating weights \citep{ha2016hypernetworks,nern,peebles2022learning,erkocc2023hyperdiffusion,weights2weights}, merging models \citep{ties_merging,shah2023ziplora,ilharco2022editing,wortsman2022model,ortiz2023task}, and recovering the weights of unpublished models \citep{spectral_detuning, carlini2024stealing}.

\section{Motivation}
\label{sec:motivation}

\subsection{The challenge}
While machine learning on images, text, and audio is fairly advanced, learning from model weights is still in its infancy and the key nuisance factors remain unclear. Many approaches focused on neuron permutations \citep{navon2023equivariant,zhou2024permutation,kofinas2024graph} as the core nuisance factor. However, permutations are not likely to describe all nuisance variation, as neurons and layers can serve different roles across models and architectures. This paper highlights that learning within Model Trees \citep{mother} reduces nuisance variation, making learning simpler.

\subsection{Seeing the forest by seeing the trees}
\textbf{Background: Model Trees.} Following \citet{mother}, we represent model populations as a \textit{Model Graph} comprising disjoint directed \textit{Model Trees}. In this graph, each node is a model, with directed edges connecting each model to those directly fine-tuned from it. Since a model has at most one parent, the graph forms a set of non-overlapping trees. Importantly, while all the models within a Model Tree share the same architecture, two models with the same architecture but different roots belong to different Model Trees. E.g., DINO \citep{dino} and MAE \citep{mae} both use the same ViT-B/16 \citep{vit} architecture but form disjoint trees. Note that while Model Trees were originally proposed for model attribution, here, we use them differently: to group models with shared initial weights, thereby reducing nuisance variation. Consequently, we do not need to recover the precise tree structure, but only to map each model into a particular tree.

\noindent \textbf{Tree membership and weight-space learning.} Current weight-space methods generally rely on a single metanetwork to learn from a diverse model population spanning multiple trees. We hypothesize that learning within a single Model Tree is significantly simpler than learning across multiple Model Trees. I.e., we expect that dividing the population into distinct Model Trees and learning within each tree, can greatly simplify weight-space learning. To test this hypothesis, we simulate various model populations.

First, we create dataset $A$ by randomly selecting $50$ classes from CIFAR100 and dataset $B$ by randomly choosing $25$ of the remaining classes. Second, we \textit{pre-train} a classifier on $B$ for a \textit{single epoch}. We then train two different model populations, $T$ (Model Tree) and $F$ (Model Forest), each with $500$ ResNet9 models. All models are trained to classify between $25$ randomly selected classes from $A$. The populations differ in one aspect only: models in $F$ are initialized randomly, while models in $T$ are all initialized from the same model pre-trained on $B$. Therefore, \textit{all models in $T$ belong to the same Model Tree, while each model in $F$ belongs to a different tree.} Given a model, the task is to predict which $25$ out of the $50$ classes from $A$ it was trained on. Using $T$ and $F$, we can analyze learning within and across Model Trees.

\textit{\textbf{Is learning within the same Model Tree beneficial?}} We begin with a simple experiment, training a \textit{linear} metanetwork for models in $T$ and another one for models in $F$.
In line with our hypothesis, there is a large performance gap between the two settings (see \cref{fig:resnet9_t_f}). While learning on models within the same tree achieved good results ($0.844$), learning on models from many different trees achieved near random accuracy ($0.502$). This demonstrates that i) Model Tree membership introduces significant non-semantic variations in model weights, and ii) even a single epoch of shared pre-training might be enough to eliminate the variation.
 
 \begin{figure*}[t]
\centering
 \includegraphics[width=\linewidth]{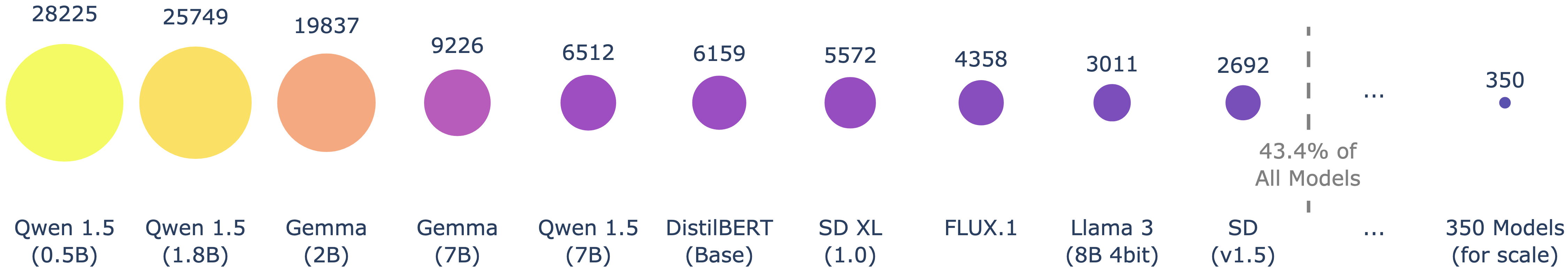} \\
 \caption{\textit{\textbf{Largest Model Trees on Hugging Face:}} We show the 10 largest Model Trees on Hugging Face. Our insight is that learning an expert for each tree greatly simplifies weight-space learning. This is a practical setting as a few large Model Trees dominate the landscape.} 
 \label{fig:hf_breakdown}
\end{figure*}

\textit{\textbf{Which models have positive transfer?}} Next, we investigate whether models have positive transfer. I.e., whether increasing the size of a training set helps learning. To this end, we pre-train $4$ different models on $B$ and use them to fine-tune populations $T_1, \ldots, T_4$ similarly to the way $T$ was created. First, we train different metanetworks on increasingly larger population subsets. We find that when all models belong to the same tree, increasing the size of the training set results in better performance (see \cref{fig:intra_train_size}). I.e., models in the same tree have positive transfer. 

Next, we test whether adding models from different trees is helpful. We start by training and evaluating a metanetwork on models from $T_1$. We gradually add to the training set models from other trees ($T_2,\ldots, T_4$) and check whether training different metanetworks on these larger datasets improves the classification of models from $T_1$. Surprisingly, we find that adding models from different trees \textit{decreases} the accuracy on $T_1$ (see \cref{tab:negative_transfer}), demonstrating that learning from multiple trees has a negative transfer effect.  

\textit{\textbf{How to learn from multiple Model Trees?}} Finally, we compare learning a separate expert model per tree and combine them via a Mixture-of-Experts (MoE) approach vs. learning a single shared metanetwork for all trees. We find that the MoE approach outperforms joint training (see \cref{fig:moe_vs_joint}), motivating us to learn an expert per tree.

\noindent \textbf{A Few Large Trees Dominate the Landscape.} To explore the practicality of working within Model Trees, we analyzed approximately $250k$ models from the Hugging Face model hub\footnote{We only consider models with information about their pre-training.}, for more details see \cref{app:hf_breakdown}. We find that most public models belong to a small number of large Model Trees. For instance, $20$ Model Trees already cover $50\%$ of the models. Moreover, the $196$ trees which contain $100$ or more models, collectively cover over $70\%$ of all models. \cref{fig:hf_breakdown} shows a breakdown of the top $10$ Model Trees on Hugging Face. \textit{We conclude that learning metanetworks within Model Trees is both effective and practical.}

\section{Probing Expert}
\noindent \textbf{Notation and task definition.} Consider a model $\mathcal{F}$ with $s$ layers and denote the dimension of each layer by $d_{H}$ and $d_{W}$\footnote{We reshape higher-dimensional weight tensors (e.g., convolutional layers) into 2D matrices, with the first dimension being the output channels.}. Let $X^{(1)}, \ldots, X^{(s)}$ denote the weight matrices of the layers. For brevity, we omit the layer index superscript in the notation, although in practice, we apply the described method to each layer of the model individually. In case the models use LoRA \citep{lora}, we multiply the decomposed matrices $X=BA$ and work with the full matrix. Our task is to map a weight matrix $X \in \mathbb{R}^{d_{W} \times d_{H}}$ to an output vector $\textbf{y} \in \mathbb{R}^{d_Y}$, where $\textbf{y}$ is a logits vector in classification tasks or an external semantic representation in text alignment tasks.

\subsection{Dense experts}
\label{sec:prob_form}
Building on the motivation discussed in \cref{sec:motivation}, we learn a separate expert metanetwork for each Model Tree. A simple choice for the expert architecture is a linear function. As the input is a 2D weight matrix $X \in \mathbb{R}^{d_W \times d_H}$, the linear function is a 3D tensor  $W \in \mathbb{R}^{d_H \times d_W \times d_Y }$. Formally,

\begin{equation}
\label{eq:pred}
    y_k = \sum_{ij} W_{ijk} X_{ij}
\end{equation}

Although such an expert can achieve impressive performance, its high parameter count (often exceeding 1 billion) makes it impractical due to excessive memory requirements.

\begin{figure*}[t]
\centering
\includegraphics[width=\linewidth]{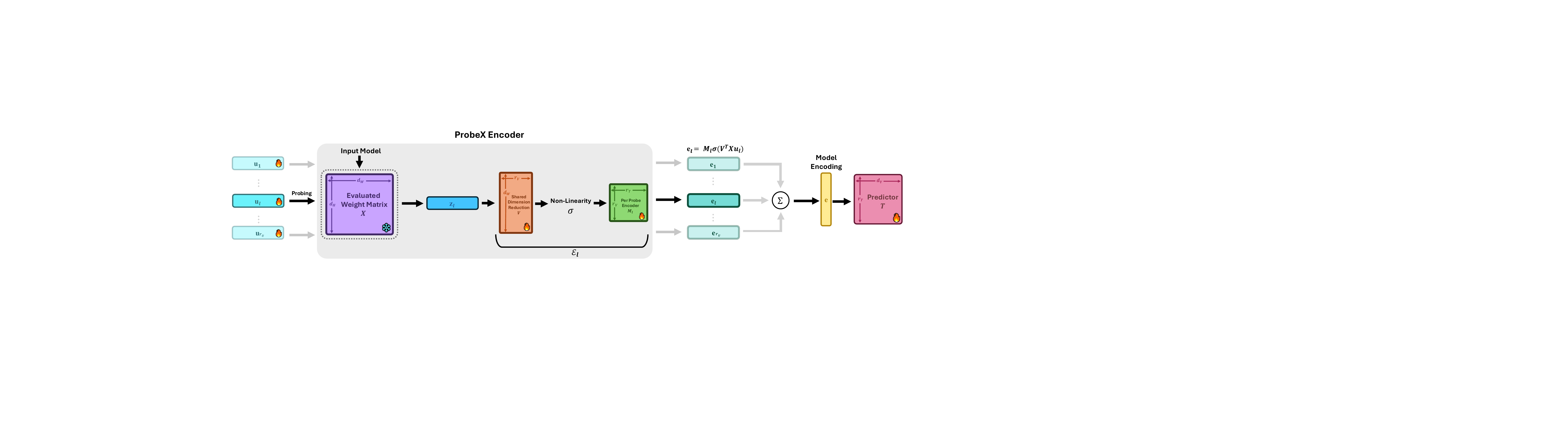}
 \caption{\textit{\textbf{ProbeX overview.}} Unlike conventional probing methods that operate only on inputs and outputs, our lightweight architecture scales weight-space learning to large models by probing hidden model layers. ProbeX begins by passing a set of learned probes, $\textbf{u}_1,\textbf{u}_2,\cdots,\textbf{u}_{r_U}$, through the input weight matrix $X$. A projection matrix $V$, shared between all probes, reduces the dimensionality of the probe responses, followed by a non-linear activation. Each probe response is then mapped to a probe encoding $\textbf{e}_l$ via a per-probe encoder matrix $M_l$. We sum the probe encodings to obtain the final model encoding $\textbf{e}$, which the predictor head maps to the task output $\textbf{y}$.}
\label{fig:probe_overview}
\end{figure*}

\subsection{Probing}
\label{sec:slpn}
Probing recently emerged as a promising approach for processing neural networks \citep{kofinas2024graph,schmidhuber,probegen}. Instead of directly processing the weights of the target model, it passes \textit{probes} (input vectors) through the model and represents the model by its outputs. As each probe provides partial information about the model, fusing information from a diverse set of probes improves representations. Passing probes through the model is typically cheaper than passing all the network weights through a metanetwork, making probing much more parameter efficient than the alternatives.

Formally, let $f_X:\mathbb{R}^{d_W} \rightarrow \mathbb{R}^{d_H}$ be the input function (e.g., a neural network). Probing methods first select a set of probes $\textbf{u}_1,\textbf{u}_2,\cdots,\textbf{u}_{r_U} \in \mathbb{R}^{d_W}$ and pass each probe $\textbf{u}_l$ through the function $f_X$, resulting in a probe response $\textbf{z}_l = f_X(\textbf{u}_l) \in \mathbb{R}^{d_H}$. A per-probe encoder $\mathcal{E}_l$ then maps the response $\textbf{z}_l$ of each probe to encoding $\textbf{e}_l \in \mathbb{R}^{d_V}$. The final model encoding $\textbf{e}$ is the sum of the encodings of all probes:

\begin{equation*}
\label{eq:probing}
    \textbf{e} = \sum_l \mathcal{E}_l(f_X(\textbf{u}_l))
\end{equation*}

Finally, a prediction head $ \mathcal{T}:\mathbb{R}^{d_V} \rightarrow \mathbb{R}^{d_Y}$, maps the model encoding to the final prediction:
\begin{equation}
\label{eq:probing_output}
    \textbf{y} = \mathcal{T}(\textbf{e})
\end{equation}

We begin with a linear probe encoder, where we can theoretically motivate our architectural design choices. We later extend our architecture to the non-linear case.

\subsection{Single layer probing experts}
\noindent \textbf{Dense vs. linear probing experts.} Traditionally, probing methods focus only on model inputs and outputs, thus avoiding many nuisance factors (e.g., neuron permutations).
However, as working within Model Trees reduces nuisance variation (see \cref{sec:motivation}), we hypothesize that probing can succeed even when applied to hidden layers. We focus on the case where $f_X(u) = X(u)$ and probing encoders are linear.

\newtheorem*{prop:generality}{\textbf{Proposition} \ref{prop:generality}}

\begin{proposition}
\label{prop:generality} Assume $\mathcal{E}_1,\ldots,\mathcal{E}_{r_U}$ are all linear operations and a sufficient number of probes. The dense expert (\cref{eq:pred}) and linear probing network (\cref{eq:probing_output}) have identical expressivity.
\end{proposition}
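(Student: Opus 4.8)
The plan is to establish \cref{prop:generality} by comparing the two architectures as \emph{families} of maps $X \mapsto \textbf{y}$, with every other quantity treated as a free (learnable) parameter, and proving the two inclusions between these families. Throughout I would fix the convention $f_X(\textbf{u}) = X\textbf{u}$ with $X \in \mathbb{R}^{d_H \times d_W}$ (the transpose bookkeeping in the notation is immaterial), so that a probe $\textbf{u}_l \in \mathbb{R}^{d_W}$ produces the response $X\textbf{u}_l \in \mathbb{R}^{d_H}$, a linear encoder is a matrix $\mathcal{E}_l \in \mathbb{R}^{d_V \times d_H}$, and the head is $\mathcal{T} = T \in \mathbb{R}^{d_Y \times d_V}$. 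The linear probing network of \cref{eq:probing_output} then realizes $X \mapsto T\sum_l \mathcal{E}_l X\textbf{u}_l$, while the dense expert of \cref{eq:pred} realizes, since the tensor $W$ is unconstrained, an \emph{arbitrary} linear map from the matrix space $\mathbb{R}^{d_H\times d_W}$ to $\mathbb{R}^{d_Y}$.

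For the easy inclusion (probing network $\subseteq$ dense expert) I would simply expand the probing output coordinatewise: $y_k = \sum_{l,a} T_{ka}\,(\mathcal{E}_l X\textbf{u}_l)_a = \sum_{ij} X_{ij}\big(\sum_{l,a} T_{ka}(\mathcal{E}_l)_{ai}(\textbf{u}_l)_j\big)$, which is exactly the form $y_k = \sum_{ij} W_{ijk}X_{ij}$ of \cref{eq:pred} with $W_{ijk} := \sum_{l,a} T_{ka}(\mathcal{E}_l)_{ai}(\textbf{u}_l)_j$. This direction needs no assumption on the number of probes.

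The substantive inclusion is the other one (dense expert $\subseteq$ probing network), and this is where the hypothesis of ``a sufficient number of probes'' is used. The idea is to choose probes and encoders so that the encoding map $\Phi: X \mapsto \sum_l \mathcal{E}_l X\textbf{u}_l$ is \emph{injective}; once it is, every linear $\mathcal{W}:\mathbb{R}^{d_H\times d_W}\to\mathbb{R}^{d_Y}$ factors as $\mathcal{W} = (\mathcal{W}\circ\Phi^{-1})\circ\Phi$, so taking $T := \mathcal{W}\circ\Phi^{-1}$ on $\mathrm{im}\,\Phi$ (extended arbitrarily elsewhere) makes the probing network compute exactly $\mathcal{W}$. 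Concretely I would use $r_U = d_W$ probes equal to the standard basis $\textbf{u}_l = \hat{\textbf{e}}_l$ of $\mathbb{R}^{d_W}$, so that $X\textbf{u}_l$ is the $l$-th column of $X$, and pick each $\mathcal{E}_l$ to write $\mathbb{R}^{d_H}$ into the $l$-th coordinate block of $\mathbb{R}^{d_V}$ (possible once $d_V \ge d_H d_W$); then $\Phi(X)$ is the column-stacking $\mathrm{vec}(X)$, a linear isomorphism onto a $d_H d_W$-dimensional subspace. Combining the two inclusions yields the claimed identical expressivity.

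The step requiring the most care is making ``sufficient'' precise and checking it is genuinely necessary, rather than any deep obstacle. If $\mathrm{span}\{\textbf{u}_l\}$ is a proper subspace of $\mathbb{R}^{d_W}$ (in particular whenever $r_U < d_W$), pick $\textbf{v}$ with $\textbf{v}^\top\textbf{u}_l = 0$ for all $l$ and a nonzero $\textbf{w}$; then $X = \textbf{w}\textbf{v}^\top \ne 0$ lies in $\ker\Phi$, so $\Phi$ is not injective, and any dense expert $\mathcal{W}$ with $\mathcal{W}(\textbf{w}\textbf{v}^\top)\ne 0$ cannot be realized by the probing network. Hence one truly needs at least $d_W$ spanning probes (plus encoding width $d_V \ge d_H d_W$), which I would absorb into the ``sufficient'' hypothesis. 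I would also add a short remark that the ProbeX model of \cref{fig:probe_overview} additionally constrains the encoders through the shared low-rank projection $V$ and a nonlinearity, so the equivalence proved here concerns the \emph{general linear} probing form of \cref{eq:probing_output}, not the parameter-reduced ProbeX instantiation, whose non-linear extension is treated separately.
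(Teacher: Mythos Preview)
Your proof is correct, and both directions match the paper's overall structure. The difference lies in how you realize the harder inclusion (dense expert $\subseteq$ probing). The paper, like you, takes $r_U = d_W$ probes equal to the standard basis, but then sets $T = I$ and loads the tensor $W$ directly into the per-probe encoders via $\mathcal{E}[l]_{ik} = W_{ilk}$; a short index computation then gives $y_k = \sum_{ij} W_{ijk}X_{ij}$ immediately. Your route instead makes the encoders $W$-independent block embeddings so that $\Phi(X) = \mathrm{vec}(X)$, and pushes all of $W$ into the head $T$. Both are valid constructions; the paper's is more economical in the encoding width (it only needs $d_V = d_Y$, whereas your vectorization requires $d_V \ge d_H d_W$), while yours is more modular and comes with a clean necessity argument---the observation that if the probes fail to span $\mathbb{R}^{d_W}$ then any $X = \textbf{w}\textbf{v}^\top$ with $\textbf{v}\perp\{\textbf{u}_l\}$ is annihilated---which the paper does not spell out. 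Your closing caveat distinguishing the general linear probing form from the constrained ProbeX instantiation is also a useful clarification that the paper leaves implicit.
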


\begin{proof}
\cref{app:proof_1}
\end{proof}
\noindent \textbf{Deriving a single layer probing architecture.}
Having demonstrated that the linear probing framework can match the expressivity of the dense expert, we now address the primary issue of dense experts: their high parameter count. Recall that each of the $r_U$ probes has a dedicated encoder, parameterized by a large matrix. We can therefore factorize each probe encoder into a product of two matrices. The first is a dimensionality reduction matrix $V \in \mathbb{R}^{d_W \times r_V}$, \textit{shared} across probes. This matrix projects the high-dimensional outputs of $X \in \mathbb{R}^{d_W \times d_H}$ into a lower dimension $r_V$. The second matrix, $M_{l} \in \mathbb{R}^{r_V \times r_T}$ is \textit{unique} to each probe encoder and can be much smaller. By sharing the larger matrix $V$ among all probes and using a smaller, probe-specific matrix $M_{l}$, we significantly reduce the overall parameter count. Finally, the per-probe encoder is given by:
\begin{equation}
\label{eq:probex_formulation_rep}
    \mathcal{E}_l(\textbf{z}_l) = M_lV^T\textbf{z}_l
\end{equation}

The prediction head $\mathcal{T}$ is simply the matrix $T \in \mathbb{R}^{r_T \times d_Y}$. Putting everything together, our single layer probing expert, named  ProbeX (\textbf{Prob}ing \textbf{eX}pert) is:

\begin{equation}
\label{eq:probex_formulation_with_head}
    \textbf{y} = T \sum_{l} M_lV^TX^T\textbf{u}_l
\end{equation}

In \cref{prop:tucker_decomp_short} we derive our linear ProbeX (\cref{eq:probex_formulation_with_head}) from the dense expert (\cref{eq:pred}) using the Tucker tensor decomposition.

\newtheorem*{prop:tucker_decomp_short}{\textbf{Proposition} \ref{prop:tucker_decomp_short}}
\begin{proposition}
\label{prop:tucker_decomp_short}
The linear ProbeX (\cref{eq:probex_formulation_with_head}) has identical expressivity as using the dense predictor (\cref{eq:pred}), when the weight tensor $W$ obeys the Tucker decomposition:
\begin{equation*}
\label{eq:decomposition}
    W_{\text{Tucker}} = \sum_{nml} M_{nml} \cdot \textbf{t}_n  \otimes \textbf{v}_m \otimes \textbf{u}_l
\end{equation*}
\end{proposition}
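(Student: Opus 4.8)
The plan is to show a two-way expressivity equivalence between the dense predictor $y_k = \sum_{ij} W_{ijk} X_{ij}$ and the linear ProbeX formula $\textbf{y} = T \sum_l M_l V^T X^T \textbf{u}_l$, by unrolling the latter into the same bilinear-in-$X$ form as the former and identifying exactly which weight tensors $W$ arise. First I would expand the ProbeX output coordinate-wise. Writing $\textbf{u}_l$ with components $(u_l)_a$, $V$ with entries $V_{bm}$, $M_l$ with entries $(M_l)_{mn}$, and $T$ with entries $T_{nk}$, a direct index computation gives
\begin{equation*}
y_k = \sum_{a,b} \Big( \sum_{l,m,n} T_{nk}\, (M_l)_{mn}\, V_{bm}\, (u_l)_a \Big) X_{ab}.
\end{equation*}
Comparing with $y_k = \sum_{ab} W_{abk} X_{ab}$ shows ProbeX realizes exactly those tensors of the form $W_{abk} = \sum_{l,m,n} (M_l)_{mn}\, (u_l)_a\, V_{bm}\, T_{nk}$. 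Relabeling $M_{nml} := \sum_n T_{nk}(M_l)_{mn}$ — or more cleanly absorbing $T$ into the core and letting $\textbf{t}_n$ be the rows of $T$ — this is precisely the Tucker form $W_{\text{Tucker}} = \sum_{nml} M_{nml}\, \textbf{t}_n \otimes \textbf{v}_m \otimes \textbf{u}_l$ with $\textbf{v}_m$ the columns of $V$ and $\textbf{u}_l$ the probes. So one direction is just bookkeeping: every ProbeX network is a dense expert whose weight tensor admits this decomposition.

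For the converse I would argue that, given $r_U, r_V, r_T$ at least as large as the respective mode dimensions, \emph{every} tensor $W$ admits such a decomposition, so ProbeX with enough probes matches the full dense expert. Concretely, pick $\{\textbf{u}_l\}$, $\{\textbf{v}_m\}$, $\{\textbf{t}_n\}$ to be (respectively) bases — e.g. the standard bases — of $\mathbb{R}^{d_W}$, $\mathbb{R}^{d_W}$ (or $\mathbb{R}^{r_V}$ once a full-rank $V$ is fixed), and $\mathbb{R}^{d_Y}$; then the Tucker core $M_{nml}$ is uniquely determined by the change-of-basis formula and reproduces $W$ exactly. This reduces the statement to the elementary fact that a mode-$k$ factor matrix with full column rank spans the corresponding mode space, which is standard multilinear algebra (and is essentially the content of Proposition~\ref{prop:generality}, which I would invoke to handle the splitting of a generic per-probe encoder $\mathcal{E}_l$ into $M_l V^T$). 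I would also note the bilinear/linear-in-$X$ structure is preserved throughout, so "identical expressivity" means the two parameterizations realize the same set of linear maps $X \mapsto \textbf{y}$.

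**The main obstacle** is not any single computation but making the bookkeeping airtight: there are four index families ($l,m,n$ plus the output index $k$ and the two input indices $a,b$), and one must be careful about where the shared matrix $V$ lives (the excerpt writes $V \in \mathbb{R}^{d_W \times r_V}$ and applies $V^T$ to $\textbf{z}_l = X^T\textbf{u}_l \in \mathbb{R}^{d_W}$, so the relevant mode dimension is $d_W$, not $d_H$), and about whether $T$ is folded into the core tensor $M$ or kept separate — the proposition's displayed decomposition has no explicit $T$, so I would absorb it, stating this identification explicitly. The only genuinely mathematical point is the dimension/rank counting needed for the converse ("a sufficient number of probes"): I would state precisely that $r_U \ge d_W$, $r_V \ge \operatorname{rank}$ considerations, and $r_T \ge d_Y$ suffice, and that under these conditions the Tucker decomposition of an arbitrary $W$ exists, which then gives the two-sided inclusion of realizable maps and hence identical expressivity.
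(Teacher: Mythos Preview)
Your proposal is correct and takes essentially the same approach as the paper: both arguments unroll the linear ProbeX expression into index notation and identify the coefficient of $X_{ab}$ with a Tucker-decomposed weight tensor, differing only in direction (the paper starts from the Tucker form and reorders sums into matrix products to arrive at the ProbeX formula, while you expand ProbeX first and read off the Tucker structure). Your extra discussion of rank conditions for the unconditional converse goes slightly beyond what the proposition states and what the paper actually proves, but it is correct and does no harm.
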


\begin{proof}
\cref{app:proof_2}
\end{proof}

\noindent \textbf{Non-linear probing experts}
In \cref{prop:generality,prop:tucker_decomp_short} we establish the relation between linear ProbeX and the dense expert. To make ProbeX more expressive, we add a non-linearity $\sigma$ between the two matrices $V, M_l$ making ProbeX a factorized one hidden layer neural network:
 \begin{equation*}
     \mathcal{E}_l(\textbf{z}_l) = M_l\sigma(V^T \textbf{z}_l)
 \end{equation*}
In our experiments we chose $\sigma$ to be the ReLU function. Note the approach can easily be extended to deeper probe encoders. We present an overview of ProbeX in \cref{fig:probe_overview}. 

\begin{table*}[t]
\centering
\caption{\textbf{\textit{Training dataset class prediction results.}} In this challenging task, each model is trained on $50$ randomly selected CIFAR100 classes (out of a total of $100$). We train ProbeX tree experts to predict which of the $100$ classes were used during training. While the dense expert performs moderately well, ProbeX achieves better accuracy with roughly $\times 30$ fewer parameters.}
 \setlength{\tabcolsep}{4pt}
\begin{tabular}{lcccccccc|cc}
 & \multicolumn{2}{c}{\textbf{ResNet}}              & \multicolumn{2}{c}{\textbf{DINO}} & \multicolumn{2}{c}{\textbf{MAE}}                 & \multicolumn{2}{c}{\textbf{Sup. ViT}}          & \multicolumn{2}{c}{\textbf{MoE}} \\
 \cmidrule(lr){2-3} \cmidrule(lr){4-5} \cmidrule(lr){6-7} \cmidrule(lr){8-9} \cmidrule(lr){10-11}
           \textbf{Method}                        & \textbf{Acc. $\uparrow$} & \textbf{\# Params $\downarrow$}  & \textbf{Acc.  $\uparrow$} & \textbf{\# Params $\downarrow$} & \textbf{Acc.  $\uparrow$} & \textbf{\# Params $\downarrow$} & \textbf{Acc. $\uparrow$} & \textbf{\# Params $\downarrow$} & \textbf{Acc. $\uparrow$} & \textbf{\# Params   $\downarrow$} \\
\toprule
Random    & 0.5            &   -                      & 0.5             & -     & 0.5             & -                      & 0.5   & -                      & 0.5   & -                      \\
StatNN    & 0.631          &   -                      & 0.511          & - & 0.502           & -                      & 0.522 & -                      & 0.541 & -                      \\
Dense     & 0.713          & 105m  \paramstimes{45} & 0.614           & 59m \paramstimes{25} & 0.666 & 59m   \paramstimes{25} & 0.663 & 59m  \paramstimes{25}  & 0.664 & 282m \paramstimes{30} \\
ProbeX    & \textbf{0.842} & 2.3m                    & \textbf{0.705}           & 2.3m & \textbf{0.765}  & 2.3m  & \textbf{0.885}         & 2.3m  & \textbf{0.799}         & 9.2m        \\
\end{tabular}
\label{tab:vit_resnet_moe}
\end{table*} 

\noindent \textbf{Training ProbeX.}
For classification tasks, we use ProbeX to map model weights to logits via the cross-entropy loss (\cref{sec:exp_classification}). For representation alignment, we use a contrastive loss (\cref{sec:exp_alignment}). In all cases, we optimize $V,\textbf{u}_1,\cdots,,\textbf{u}_{r_{U}},M_1,\cdots,M_{r_{U}},T$ end-to-end. Note that while our formulation describes the case of a single layer, there is no loss of generality. Given multiple layers, we can extract an encoding from each layer using ProbeX and concatenate them. Finally, we map the concatenated encoding to the output $y$ using a matrix $T$, training everything end-to-end. Notably, training ProbeX on a single layer takes under $10$ minutes on a single small GPU (e.g., $10GB$ of VRAM).

\subsection{Handling multiple Model Trees} 
\label{sec:method_moe}
In practice, models may belong to multiple Model Trees. We therefore propose a mixture-of-tree-experts approach, consisting of a router metanetwork that maps models to their tree and a per-tree expert metanetwork. Differently from recent MoE methods  \citep{Yksel2012TwentyYO} that learn the router and experts end-to-end, we decouple the two; first learning the routing function and then the ProbeX experts. For the routing function, we opt for a fast and simple clustering algorithm. Specifically, we cluster the set of models into trees using hierarchical clustering. After completing the clustering step, we compute the center of each cluster $\hat{X}_1,\hat{X}_2,\cdots,\hat{X}_k$. The routing function assigns models to the nearest cluster in $\ell_2$:
\begin{equation}
    \label{eq:moe_routing}
    R(X) = arg\min_{k} \|X - \hat{X}_k\|_2
\end{equation}

In practice, we find that these clusters perfectly match the division into Model Trees \citep{mother,gueta2023knowledge} (see \cref{app:moe}).

\section{Model jungle dataset}
\label{sec:dataset}
We construct \textit{Model Jungle} (Model-J), a dataset that simulates the structure of real-world model repositories, with models organized into a small set of \textit{disjoint} Model Trees. These trees consist of large models that vary in architecture, task, and size, with each fine-tuned model using a set of randomly sampled hyperparameters. Model-J includes $14{,}000$ models, divided into two main splits. We shortly describe the splits here, for further details see \cref{app:dataset_details}.

\noindent \textbf{Discriminative.} We fine-tune $4{,}000$ models for image classification. These models belong to one of 4 Model Trees: i) Supervised ViT (Sup. ViT) \citep{vit}, ii) DINO \citep{dino}, iii) MAE \citep{mae}, and iv) ResNet-101 \citep{resnet}. Each model is fine-tuned (using \q{vanilla} full fine-tuning) to classify images from a random subset of $50$ out of the $100$ CIFAR100 classes.

\noindent \textbf{Generative.} We fine-tune $10,000$ Stable Diffusion (SD) \citep{stable_diffusion} personalized models \citep{dreambooth}. Each model fine-tuned on $5-10$ images, randomly sampled without replacement, originating from the same ImageNet \citep{imagenet} class. This split consists of $2$ variants each with $5,000$ models: i) 
\textit{$SD_{200}$.} A fine-grained variant with $25$ models from each class using the first $200$ ImageNet classes (mostly different animal breeds). ii) $SD_{1k}$. A low resource variant with $5$ models per class for all ImageNet classes. To save compute and storage, we follow common practice and use LoRA  \citep{lora} fine-tuning. We set aside an additional test subset of models trained on randomly selected \textit{holdout} classes, $30 \in SD_{200}$ and $150 \in SD_{1k}$.

\section{Experiments}
\label{sec:experiments}

We train ProbeX and the baselines on each layer and choose the best layer according to the validation set. For more implementation details see \cref{app:implementation_details}. We use accuracy as the evaluation metric, we also report the parameter count.

\noindent \textbf{Baselines.} Most state-of-the-art methods do not scale to large models with hundreds of millions of parameters. We therefore compare to the following baselines: i) \textit{StatNN \citep{statnn}.} This permutation-invariant baseline extracts $7$ simple statistics (mean, variance, and $5$ different quantiles) for the weights and biases of each layer. It then trains a gradient-boosted tree on the concatenated statistics. ii) \textit{Dense Expert.} Training a single linear layer on the flattened raw weights. Note that this baseline produces impractically large classifiers. E.g., a single layer classifier trained to classify $SD_{1k}$ typically has $1.4B$ parameters, twice the size of the entire SD model. We also attempted to run Neural Graphs \citep{kofinas2024graph}, but it struggled to scale to the ViT architecture even when adapted to the single-layer case (see \cref{app:implementation_details}). Since all attempts yielded near-random results, we did not report it.

\subsection{Predicting training dataset classes}
\label{sec:exp_classification}
Here, we train a metanetwork to predict the training dataset classes for models in the discriminative split of Model-J. As each model was trained on $50$ randomly selected classes out of $100$, we predict a set of $100$ binary labels, each indicating whether a specific class was included in the model’s fine-tuning data. Concretely, we train \cref{eq:probex_formulation_rep} with $100$ jointly optimized binary classification heads. This task is particularly challenging, as each class represents only $2\%$ of the model's training data, making its signature relatively weak. 

This task is quite practical; consider a model repository such as Hugging Face, which currently relies on the model metadata (e.g., model card) when searching for a model. However, these model cards are often poorly documented and lack details about the specific classes a model was trained on. In contrast, our metanetwork could allow users to filter for suitable models more effectively.

In \cref{tab:vit_resnet_moe}, we present the results of ProbeX for each Model Tree in the discriminative split. While dense expert performs better than random, ProbeX performs significantly better, improving accuracy by more than $10\%$ on average with roughly $\times 30$ fewer parameters. The MoE router (\cref{eq:moe_routing}) achieves perfect accuracy, for more details see \cref{app:moe}.

\subsection{Aligning weights to text representations}
\label{sec:exp_alignment}
We hypothesize that the weights of models conditioned on text can be aligned with a text representation. We therefore learn a mapping between the weights of models in the generative split (see \cref{sec:dataset}) and the CLIP text embeddings of the model's training dataset categories. This process creates a shared weight-text embedding space. We evaluate these aligned representations across various tasks and demonstrate strong generalization. To the best of our knowledge, ProbeX is the first method that learns weight representations with zero-shot capabilities.

\noindent \textbf{Representation alignment.} We train ProbeX to map model encodings to pre-trained text embeddings (e.g., CLIP). This mapping is supervised, as we have paired data consisting of i) model weights and ii) text embedding of the category of their fine-tuning dataset. Our training loss is similar to CLIP, i.e., the optimization objective is that the cosine similarity between the ProbeX model encoding to the ground truth class text embedding will be high, and all other classes lower. 

\begin{table}[t]
\centering
\caption{\textbf{\textit{Aligned weight-text representation results:}} 
We report the text-guided classification accuracy on both the in-distribution and holdout splits. Our method generalizes not only to unseen models trained on the same classes (in-distribution) but also to entirely new object categories in a zero-shot manner, without requiring additional training. This suggests that ProbeX successfully aligns model encodings with CLIP representations.
}
\setlength{\tabcolsep}{3pt}
\begin{tabular}{llccc}
& \textbf{Method} & \begin{tabular}[c]{@{}c@{}}\textbf{In Dist.} $\uparrow$\\ \textbf{Acc.}\end{tabular}  & \begin{tabular}[c]{@{}c@{}}\textbf{Zero-shot.} $\uparrow$\\ \textbf{Acc.}\end{tabular} & \textbf{\# Params $\downarrow$} \\

\toprule
\multirow{5}{*}{\rotatebox[origin=c]{90}{$SD_{200}$}} & Random & 0.006 & 0.033 & - \\ 
& $\text{StatNN}_{{\text{MLP}}}$ & 0.018 & 0.075 & 2.6m \\
& $\text{StatNN}_{{\text{Linear}}}$ & 0.030 & 0.147 & 689k \\
& Dense & 0.801 & 0.706 & 32m \paramstimes{13} \\
& ProbeX  & \textbf{0.973} & \textbf{0.898} & 2.5m \\
\midrule
\multirow{5}{*}{\rotatebox[origin=c]{90}{$SD_{1k}$}} & Random & 0.001 & 0.006 & - \\ 
& $\text{StatNN}_{{\text{MLP}}}$ & 0.001 & 0.029 & 2.6m \\
& $\text{StatNN}_{{\text{Linear}}}$ & 0.01 & 0.045 & 689k \\
& Dense & \textbf{0.382} & 0.343 & 210m \paramstimes{84} \\
& ProbeX  & 0.296 & \textbf{0.505} & 2.5m \\
\end{tabular}
\label{tab:zeroshot}
\end{table}

\subsubsection{Zero-shot classification}
We begin by testing the zero-shot capabilities of our aligned representation on the holdout splits of Model-J (see \cref{sec:dataset}). Specifically, given a weights-to-text mapping function, we compute the similarity between the model encoding and all possible classes. The similarity score is calculated for all held-out classes (unseen during ProbeX’s training), and the model is labeled with the class that has the highest matching score (see \cref{fig:zeroshot_overview}). We perform a similar experiment for in-distribution data (categories seen during training), i.e., a standard classification setting. In \cref{tab:zeroshot}, we show the top-1 accuracy of our method compared to the dense expert. Importantly, our method generalizes not only to unseen models trained on the same classes (i.e., in-distribution) but also to entirely new object categories (i.e., zero-shot). ProbeX detects classes unseen during training with $50\%$ accuracy when there are $150$ held-out classes and nearly $90\%$ accuracy with $30$ held-out classes. This demonstrates that ProbeX successfully aligns model encodings with CLIP’s representations.

\noindent \textbf{kNN classification.} Similarly to the zero-shot setting, using the aligned representations, kNN can correctly classify the training dataset class. The score is the average kNN distances between the text aligned ProbeX representation of the test model and the training models from this class. \cref{tab:knn_occ} compares our aligned representation with simply using raw weights, our representation performs much better.

\subsubsection{Unsupervised downstream tasks}
\label{sec:retrieval}
\textbf{Model retrieval.} Given a model, we search for the models that were trained on the most similar datasets. We use the cosine distance between the ProbeX text-aligned model representations as the similarity metric. \cref{fig:ret} shows the $3$ nearest-neighbors for $3$ query models, each fine-tuned using a different dataset. We visualize each model by showing $2$ images from its training set. Indeed, the retrieved models are closely related to the query models, showing our representation captures highly semantic attributes even in fine-grained cases. For instance, while $SD_{200}$ contains many different dog and cat breeds, our retrieval accurately returns the breed that the query model was trained on.

\begin{figure}[t]
\centering
\includegraphics[width=0.95\linewidth]{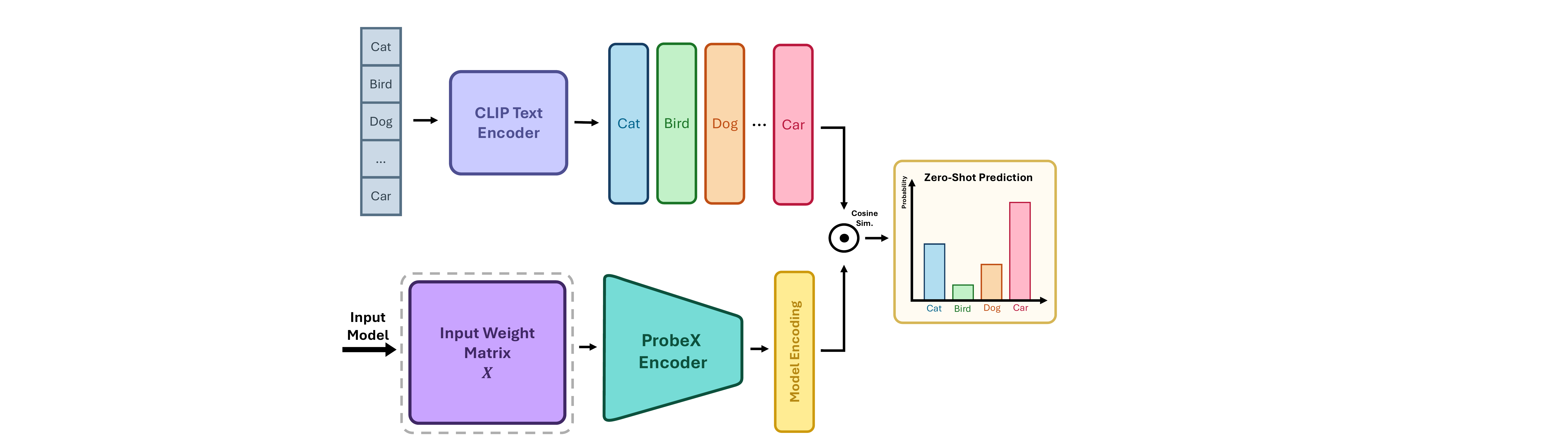}
 \caption{\textit{\textbf{Zero-shot inference overview.}} We align model weights with a pre-trained text encoder for zero-shot model classification. We extract the CLIP text embedding of each class name and use ProbeX to encode the weight matrix $X$ into a shared weight-text embedding space. Classification follows by selecting the text prompt nearest to the model weight representation $e$ using cosine similarity. This creates a CLIP-like zero-shot setting, where model weights from unseen classes are classified via text prompts.}
\label{fig:zeroshot_overview}
\end{figure}

\begin{figure*}[t]
\centering
\includegraphics[width=0.95\linewidth]{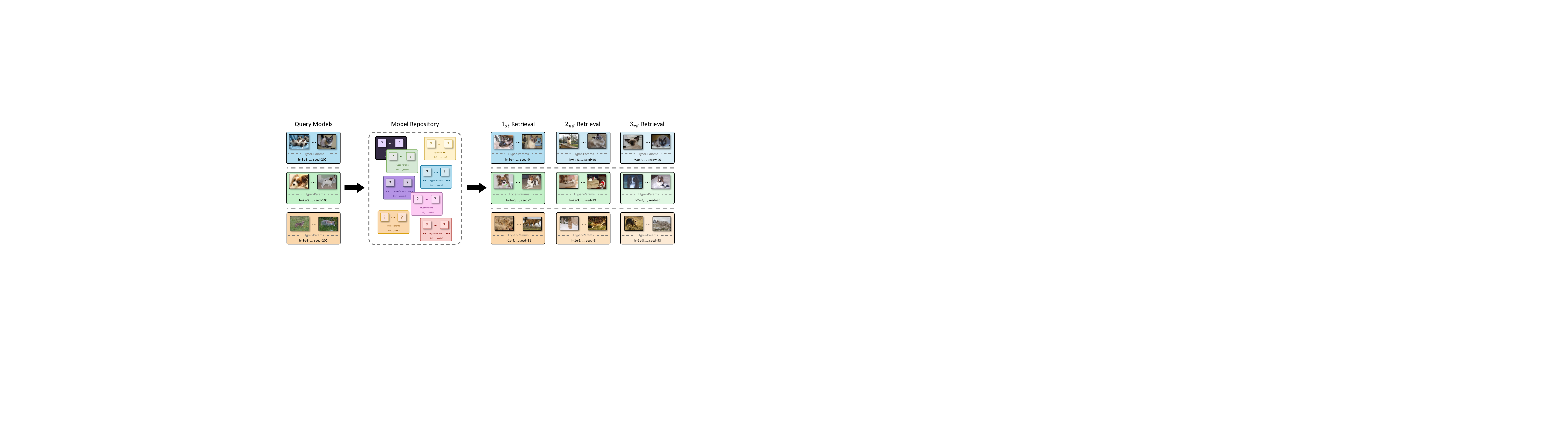}
 \caption{\textit{\textbf{Qualitative retrieval results:}} For each query model, we search for the models trained on the most similar categories, measuring similarity via the cosine distance between the text-aligned ProbeX model representations. We present the $4$ nearest neighbors for three query models, each fine-tuned on a different category. For visualization, we show two of the images used to train the model. Indeed, the retrieved models are of similar animal breeds to the query models, indicating our representations accurately capture fine-grained semantics.}
\label{fig:ret}
\end{figure*}

\noindent \textbf{One-class-classification.}
We further examine our text-aligned representations for detecting out-of-distribution models (OOD).
For each held-out class, we label it as \q{normal} and compute the average kNN distance between all test models and the training set of the normal class. Samples near the normal distribution are considered normal while others are labeled as OOD. We average the results over all classes. In \cref{tab:knn_occ} we report the mean ROC AUC score, using the kNN similarity score for separating normal and OOD models. Indeed, the results show that our method can detect OOD models much more accurately than other methods. This result remains consistent across varying numbers of neighbors, clearly demonstrating that the representation extracted by ProbeX captures more semantic relations.

\section{Ablations}
\label{sec:ablations}
\noindent \textbf{Activation function.} We ablate the need for a non-linear ProbeX using the $SD_{200}$ dataset; results are shown in \cref{tab:relu_ablation}. Interestingly, while the use of ReLU slightly improves in-distribution classification performance ($0.953$ without vs. $0.973$ with), the main benefit is in zero-shot capabilities ($0.564$ without vs. $0.898$ with). This significant difference in zero-shot performance suggests that, while the linear version of ProbeX can effectively represent the training classes, generalizing to unseen classes requires a deeper model.

\begin{table}[t]
    \centering
    \setlength{\tabcolsep}{4pt}
    \caption{\textbf{\textit{kNN and OCC results:}} Average results over all $30$ heldout classes of $SD_{200}$. ProbeX achieves the highest results for both.}

    \begin{tabular}{lccc|ccc}
    & \multicolumn{3}{c}{\textbf{kNN (Acc. $\uparrow$)}} & \multicolumn{3}{c}{\textbf{OCC (mAUC $\uparrow$)}} \\
    \cmidrule(lr){2-4} \cmidrule(lr){5-7}
    \textbf{k} & \textbf{Raw} & \textbf{Dense} & \textbf{ProbeX} & \textbf{Raw} & \textbf{Dense} & \textbf{ProbeX} \\
    \midrule
    1  & 0.833 & 0.502 & \textbf{0.913} & 0.501 & 0.561 & \textbf{0.698} \\
    2  & 0.389 & 0.525 & \textbf{0.933} & 0.502 & 0.573 & \textbf{0.702} \\
    5  & 0.417 & 0.477 & \textbf{0.872} & 0.504 & 0.610 & \textbf{0.720} \\
    All & 0.033 & 0.294 & \textbf{0.428} & 0.507 & 0.681 & \textbf{0.792} \\
    \end{tabular}
    \label{tab:knn_occ}
\end{table}

\noindent \textbf{Text encoder.} We use $SD_{200}$ to ablate the sensitivity of our method to the precise language encoder used. While CLIP performs best ($0.898$), our approach remains effective across different text encoders (e.g., $0.860$ with OpenCLIP \citep{openclip}, and $0.564$ with BLIP2 \citep{blip}), see \cref{app:ablations} for more details.

\section{Discussion}
\label{sec:discussion}

\noindent \textbf{Generalizing to unseen Model Trees.} In this paper, we focus on learning within a closed set of Model Trees. However, new Model Trees are continually added to public repositories. A primary limitation of ProbeX is its inability to generalize to these new Model Trees, requiring training new experts for new trees. Despite this drawback, ProbeX’s lightweight design and the ability to train experts independently in under $10$ minutes allows for quick integration of new experts.

\noindent \textbf{Aligning representations of other models.} When aligning model weights with text representations, we focused on SD models. In preliminary experiments, we found that aligning models from the discriminative split performs well on in-distribution classes but does not generalize to unseen (zero-shot) classes. We hypothesize that the cross-attention layers in SD models facilitate alignment between model weights and text representations. Extending this alignment to other model architectures is left for future work.

\begin{table}[t]
\centering
\caption{\textbf{\textit{Activation ablation on $SD_{200}$:}} Using ReLU slightly improves in-distribution classification, but significantly improves zero-shot classification. This suggests that while linear ProbeX represents training categories well, ReLU enhances generalization.}
    \begin{tabular}{lcc}
    
      & \textbf{In Dist. Acc. $\uparrow$}  & \textbf{Zero-shot Acc $\uparrow$}. \\
    \toprule
    No ReLU   & 0.953 & 0.564 \\
    ReLU      & 0.973 & \textbf{0.898} \\

    \end{tabular}
    \label{tab:relu_ablation}
    \vspace{-5pt}
\end{table}

\noindent \textbf{Deeper ProbeX encoders.} In this work, we used encoders with a single hidden layer. In preliminary experiments, we observed that adding more layers to the encoder reduced performance, probably due to overfitting. An intriguing direction for future research is designing deeper encoders that improve generalization or handle more complex tasks.

\section{Conclusion}
\label{sec:conclusion}
In this paper, we take the first step toward model search based solely on model weights. We identify that learning from models within the same Model Tree is significantly simpler than learning across different trees. This setting is practical as most public models belong to a few large Model Trees. We therefore introduce Probing Expert (ProbeX), a theoretically grounded architecture that scales weight-space learning to large models. As public repositories consist of multiple trees, we propose a Mixture-of-Experts approach. We demonstrate that ProbeX can embed model weights into a shared representation space alongside language embeddings, enabling text-guided zero-shot model classification.

{
    \bibliographystyle{ieeenat_fullname}
    \bibliography{main}
}

\clearpage
\appendix

\section{Proofs}
\label{app:proofs}
\subsection{Proposition 1}
\label{app:proof_1}

\begin{prop:generality}
 Assume $\mathcal{E}_1,\ldots,\mathcal{E}_{r_U}$ are all linear operations and a sufficient number of probes. The dense expert (\cref{eq:pred}) and linear probing network (\cref{eq:probing_output}) have identical expressivity.
\end{prop:generality}

\begin{proof}
We will prove both that the dense expert entails linear probing (1), and that probing entails linear experts (2).

Direction (1) is trivial, as linear probing is a composition of linear operations, it follows that the operation is a linear operation from $\mathbb{R}^{d_{W} \times d_{H}} \rightarrow \mathbb{R}^{d_Y}$. As the dense expert, parameterized as $W \in \mathbb{R}^{d_{W} \times d_{H} \times d_Y}$, can express all linear operations in $\mathbb{R}^{d_{W} \times d_{H}} \rightarrow \mathbb{R}^{d_Y}$, it clearly entails linear probing. 

Direction (2) requires us to prove that we can find a set of matrices $U,\mathcal{E}[1],\mathcal{E}[2],\cdots,\mathcal{E}[r_U],T$ such that
$\textbf{y} = T \sum_l \mathcal{E}[l]X\textbf{u}_l = \sum_{ij}W_{ijk}X_{ij}$ for every $X \in \mathbb{R}^{d_W \times d_H}$ and any $W \in \mathbb{R}^{d_{W} \times d_{H} \times d_Y}$. We show a proof by construction. Let $T = I$ (the identity matrix), $U = I$ and $\mathcal{E}[l]_{ik} = W_{ilk}$. We have:
\begin{equation}
    y_k = (T \sum_l \mathcal{E}[l]X\textbf{u}_l)_k = \sum_{ijl} W_{ilk}X_{ij}\delta_{jl}
\end{equation}
Where $\delta_{jl}$ is $1$ in the diagonal and $0$ otherwise, the $T$ is the identity matrix and cancels out. Summing over $l$, we obtain:
\begin{equation}
    y_k = \sum_{ij} W_{ijk}X_{ij}
\end{equation}
This proves that linear probing can express any dense expert.

\end{proof}

\subsection{Proposition 2}
\label{app:proof_2}

\begin{prop:tucker_decomp_short}
The linear ProbeX (\cref{eq:probex_formulation_with_head}) has identical expressivity as using the dense predictor (\cref{eq:pred}), when the weight tensor $W$ obeys the Tucker decomposition:
\begin{equation*}
    W_{\text{Tucker}} = \sum_{nml} M_{nml} \cdot \textbf{t}_n  \otimes \textbf{v}_m \otimes \textbf{u}_l
\end{equation*}
\end{prop:tucker_decomp_short}

\begin{proof}
The Tucker decomposition expresses a 3D tensor $W \in \mathbb{R}^{d_W \times d_H \times d_Y}$ by the product of a smaller tensor $M \in \mathbb{R}^{r_T \times r_V \times r_U}$ and three matrices $U \in \mathbb{R}^{d_H \times r_U}, V \in \mathbb{R}^{d_W \times r_V}, T \in \mathbb{R}^{d_Y \times r_T}$ as follows:
\begin{equation}
    W = \sum_{nml} M_{nml} \cdot \textbf{t}_n  \otimes \textbf{v}_m \otimes \textbf{u}_l
\end{equation}
Where $\otimes$ is the tensor product, and $\textbf{u}_q, \textbf{v}_q, \textbf{t}_q$ are the $q^{th}$ column vectors of matrices $U,V,T$ respectively.

The expression for the Tucker decomposition in index notation is: 

\begin{equation}
    W_{ijk} = \sum_{nml} T_{kn} M_{nml}  V_{im} U_{jl}
\end{equation}

By linearity, we can reorder the sums as:
\begin{equation}
    W_{ijk} = \sum_{n} T_{kn} \sum_{ml} M_{nml}  V_{im} U_{jl} 
\end{equation}

We can equivalently split tensor $M$ into $r$ matrices $M[1],M[2],\cdots,M[r]$, so that:
\begin{equation}
    W_{ijk} = \sum_{n} T_{kn} \sum_{ml} M[l]_{nm}   V_{im} U_{jl}
\end{equation}

Multiplying tensor $W$ by input matrix $X \in \mathbb{R}^{ d_W \times d_H}$, the result is:
\begin{equation}
    \tilde{y}_k = \sum_{ij} X_{ij} W_{ijk} = \sum_{ij} X_{ij} \sum_{n} T_{kn} \sum_{ml} M[l]_{nm}   V_{im} U_{jl}
\end{equation}

By linearity, we can reorder the sums:
\begin{equation}
    \tilde{y}_k = \sum_{n} T_{kn} \sum_{ml} M[l]_{nm} \sum_{ij} V_{im} X_{ij} U_{jl}  
\end{equation}

Rewriting $U$ using its column vectors this becomes:
\begin{equation}
    \tilde{y}_k = \sum_{n} T_{kn} \sum_{ml} M[l]_{nm} \sum_{i} V_{im} (X\textbf{u}_l)_i  
\end{equation}

Rewriting the sum over $i$ as a matrix multiplication:
\begin{equation}
    \tilde{y}_k = \sum_{n} T_{kn} \sum_{ml} M[l]_{nm} (V^TX\textbf{u}_l)_m  
\end{equation}

Rewriting the sum over $m$ as a matrix multiplication:
\begin{equation}
    \tilde{y}_k = \sum_{n} T_{kn} \sum_{l} (M[l]V^TX\textbf{u}_l)_n  
\end{equation}

Rewriting the sum over $n$ as a matrix multiplication, we finally obtain:
\begin{equation}
    \tilde{y} = T\sum_l M[l]V^TX\textbf{u}_l  
\end{equation}

\end{proof}

\clearpage
\section{Additional discussion}
\label{app:additional_discussion}
\subsection{Mechanistic vs. functional weight learning} 
\citet{schmidhuber} distinguished between two approaches to weight-space learning. The mechanistic approach treats the weights as input data and learns directly from them, while the functionalist approach (e.g., probing) interacts only with a model’s inputs and outputs. Although the functionalist approach bypasses weight-space-related nuisance factors such as permutations or Model Trees, it treats the entire model as a black box, limiting its scope. ProbeX can be seen as a blend of both approaches, enabling us to operate at the weight level while engaging with the function defined by the weight matrix. This approach may facilitate the study of different model layers’ functionalities. For instance, in the case of the MAE and Sup. ViT Model Trees, which share the same architecture, the most effective layer for our task differed between the two. This suggests that, despite having the same architecture, the two models utilize their layers for different functions. 

Similarly, for our aligned representations, the best-performing layer is a \q{query} layer in the U-Net’s encoder. However, examining the top 10 best-performing layers by in-distribution accuracy reveals that the specific \q{query} layer chosen is critical, resulting in a $6.6\%$ difference in zero-shot accuracy between the best and second-best layers. Additionally, while two of the top 10 layers are \q{out} layers and perform well on in-distribution samples, their performance drops sharply on the zero-shot task, causing a rank decrease of five places. \Cref{tab:rep_align_layer_breakdown} lists the top 10 layers by in-distribution validation accuracy alongside their zero-shot task results.

\subsection{Self-supervised learning vs. aligning representations} 
Here, we align model weights with existing representations. While weight-space self-supervised (SSL) learning \citep{schurholt2022hyper,schurholt2021self,sane} do not depend on external representations, they typically require carefully crafted augmentations and priors. Designing such augmentations for model weights is non-trivial as key nuisance factors are still being identified. We hope our work accelerates research on new weight-space SSL methods.

\subsection{Other weight-space learning tasks}
In this paper we focused on predicting the categories in a model's training dataset. However, many more weight-space learning tasks exists. As demonstrated in \cref{prop:generality}, our probing formulation is equivalent to the weight formulation, suggesting that ProbeX can potentially perform any task achievable by other mechanistic approaches. Since our focus has been on predicting the model's training dataset categories and their connection to text-based representations, extending ProbeX to these additional tasks is left for future work.

\begin{table}[t!]
\centering
\caption{\textbf{\textit{Best performing layers of $SD_{200}$:}} Rankings differ significantly between in-distribution and zero-shot tasks. Numbers in $(\cdot)$ indicate the amount the layer moved up or down in rank.}
\begin{tabular}{lcc}
\textbf{Layer Name} & \begin{tabular}[c]{@{}c@{}}\textbf{In Dist.} $\uparrow$\\ \textbf{ Acc.}\end{tabular} &\begin{tabular}[c]{@{}c@{}}\textbf{Zero-shot} $\uparrow$\\ \textbf{Acc.}\end{tabular} \\
\toprule
down.2.attentions.1.attn2.q     & 0.974 & 0.898 \rankup{0} \\
up.1.attentions.1.attn2.q       & 0.958 & 0.832 \rankdown{2} \\
up.1.attentions.0.attn2.q       & 0.952 & 0.850 \rankup{1} \\
up.1.attentions.1.attn2.out.0   & 0.946 & 0.665 \rankdown{5} \\
up.1.attentions.2.attn2.q       & 0.944 & 0.822 \rankdown{1} \\
up.1.attentions.2.attn2.out.0   & 0.920 & 0.641 \rankdown{5} \\
down.2.attentions.0.attn2.q     & 0.916 & 0.830 \rankup{2} \\
up.2.attentions.2.attn2.k       & 0.872 & 0.735 \rankup{0} \\
mid.attentions.0.attn2.q        & 0.866 & 0.848 \rankup{6} \\
up.2.attentions.1.attn2.k       & 0.830 & 0.760 \rankup{3} \\
\end{tabular}
\label{tab:rep_align_layer_breakdown}
\end{table} 

\begin{figure}[t]
    \centering
    \includegraphics[width=\linewidth]{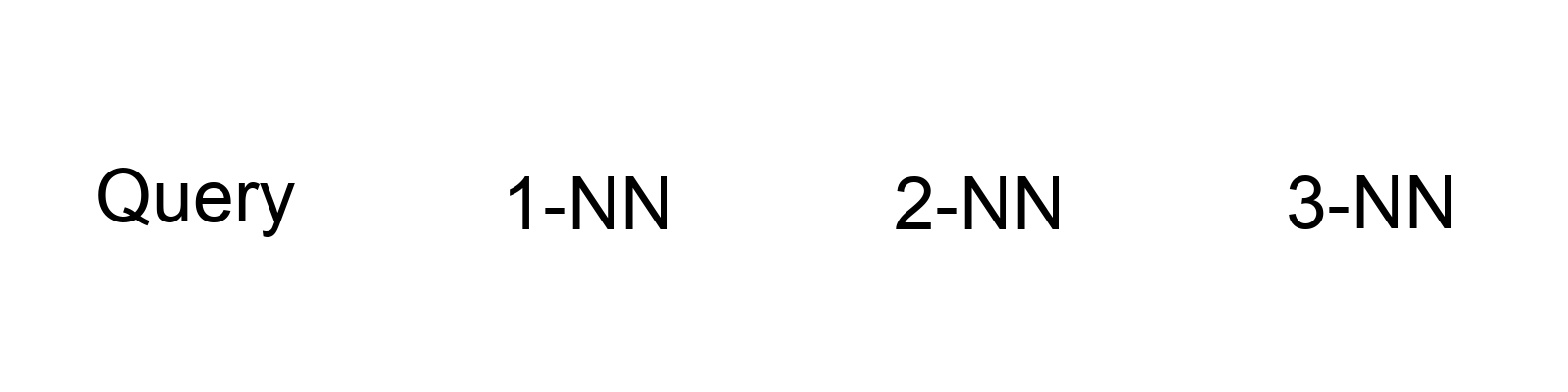} \\
    \includegraphics[width=\linewidth]{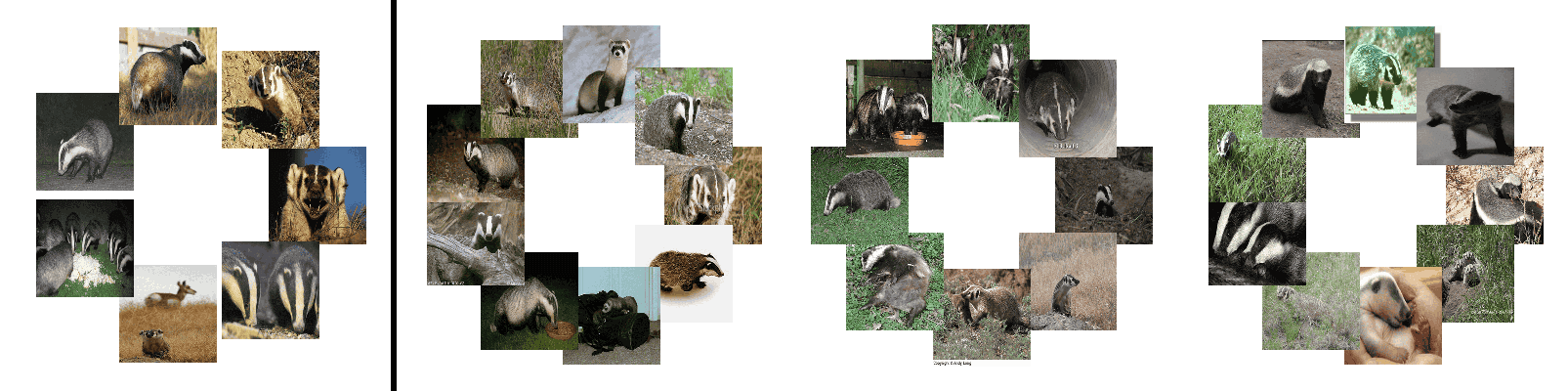} \\
    \includegraphics[width=\linewidth]{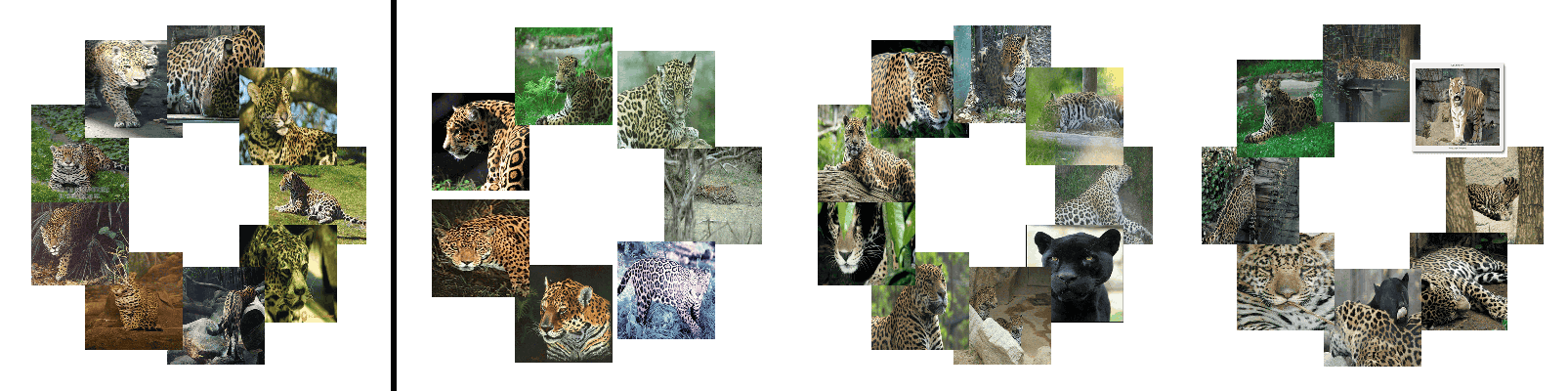} \\
    \caption{\textit{\textbf{Additional model retrieval results:}} Retrieval is performed using model weights, to visualize each model we use the set of all the images that were used to fine-tune the model.}
    \label{fig:retrieval_short}
\end{figure}

\section{Mixture-of-Experts router}
\label{app:moe}
As described in \cref{sec:method_moe}, when handling Model Graphs with multiple Model Trees, we use a mixture-of-experts approach. This involves first learning a routing function and then training a separate ProbeX model for each Model Tree.

To implement the routing function, we perform hierarchical clustering on the $\ell_2$ pairwise distances between models in the Model Graph. By calculating distances for a single model layer, this stage is significantly accelerated, enabling us to cluster Model Graphs with up to 10,000 models in under 5 minutes. Once clustering is complete, the routing function assigns each model to the nearest cluster based on $\ell_2$ distance. The number of Model Trees is determined using the dendrograms produced by hierarchical clustering. We use the \texttt{scipy} \citep{scipy} implementation with default hyperparameters.

In practice, this simple routing function achieved \textit{perfect accuracy} every time.

\section{Additional model retrieval results}
In \cref{sec:retrieval}, we presented results for the task of model retrieval. Here, we provide results for \textit{all} held-out models in $SD_{200}$. These results are not cherry-picked, and each model is visualized using the full set of images that were used for its fine-tuning. In \cref{fig:retrieval_short}, we display two additional results, in \cref{fig:retrieval_long1,fig:retrieval_long2,fig:retrieval_long3} present the remaining results.

\section{Additional ablations}
\label{app:ablations}
We provide additional ablations and expand on the ones from the manuscript.  

\begin{figure}[t]
    \centering
    \includegraphics[width=\linewidth]{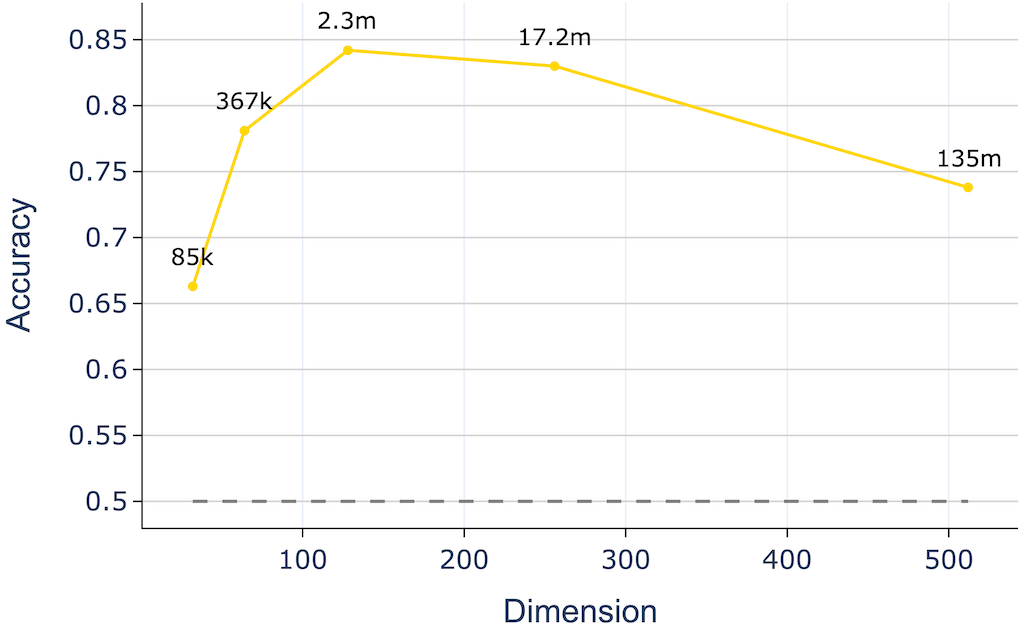}
    \caption{\textit{\textbf{$r_{(\cdot)}$ dimension ablation:}} We ablate the effect of changing the dimension of all $r_U, r_V$ and $r_T$ jointly. We can see that beyond some point the performance drops.}
    \label{fig:r_all}
\end{figure}

\begin{figure}[t]
    \centering
    \includegraphics[width=\linewidth]{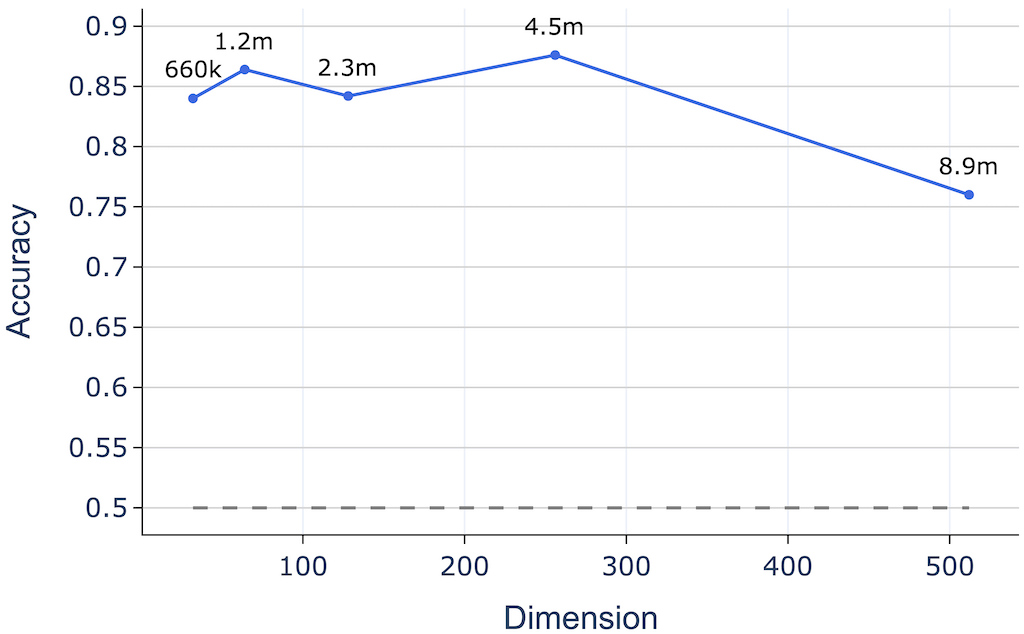}
    \caption{\textit{\textbf{Number of probes ($r_U$) ablation:}} We fix $r_V$ and $r_T$ to $128$ and change the number of probes ($r_U$). We can see that too many probes decreases the performance.}
    \label{fig:r_u}
\end{figure}

\begin{figure}[t]
    \centering
    \includegraphics[width=\linewidth]{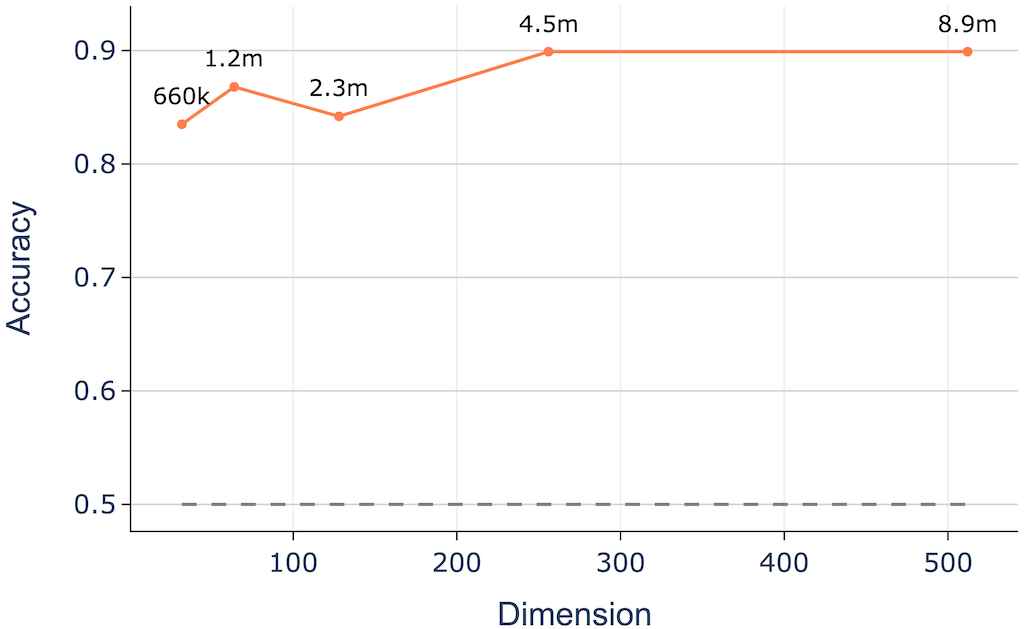}
    \caption{\textit{\textbf{Probe dimension ($r_V$) ablation:}} We fix $r_U$ and $r_T$ to $128$ and change the probe dimension ($r_V$). We can see that even a small probe dimension already results in good performance and that increasing it does not help beyond some point.}
    \label{fig:r_v}
\end{figure}

\begin{figure}[t]
    \centering
    \includegraphics[width=\linewidth]{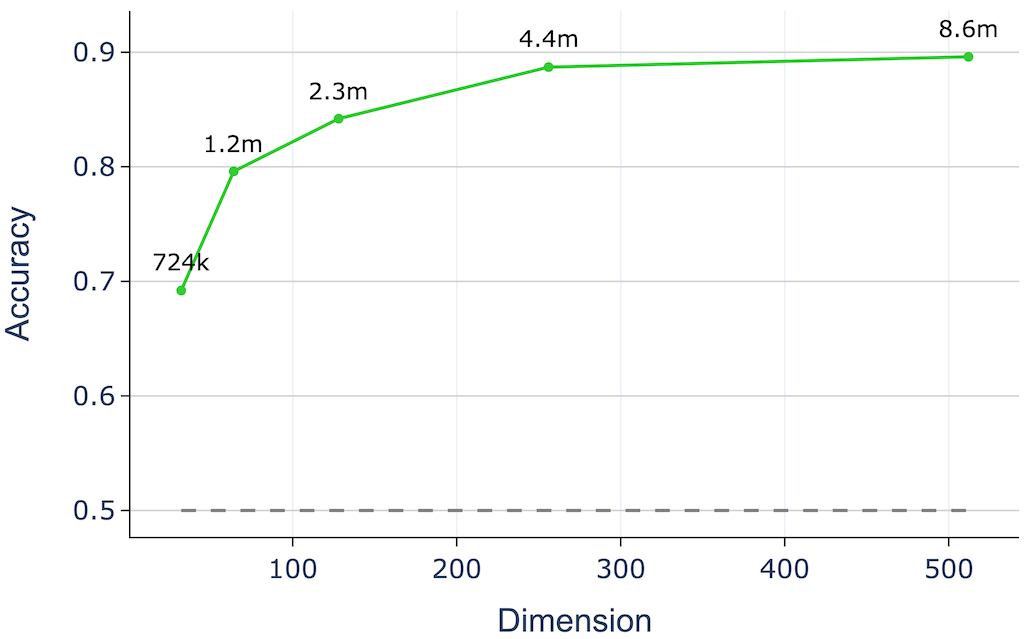}
    \caption{\textit{\textbf{Encoding dimension ($r_T$) ablation:}} We fix $r_U$ and $r_V$ to $128$ and change the encoding dimension ($r_T$). We can see that the size of the model encoding plays an important role in the performance of our method.}
    \label{fig:r_t}
\end{figure}

\subsection{$r_U,r_V,r_T$ size ablation} 
We ablate the effect of the dimensions $r_U$, $r_V$, and $r_T$ using the Sup. ViT Model Tree. We begin by examining the impact of jointly increasing all dimensions. As shown in \cref{fig:r_all}, increasing the size improves performance up to a point ($128$), after which performance begins to decline. When jointly adjusting all dimensions, the larger model size appears to be responsible for this drop. However, when we vary each dimension independently while fixing the other two at $128$, we observe a different pattern.

Starting with the number of probes ($r_U$), as shown in \cref{fig:r_u}, increasing the number of probes has minimal effect on performance until a threshold ($256$), beyond which performance drops significantly. This decline may explain the performance drop in \cref{fig:r_all}, even without an extreme increase in the parameter size.

In \cref{fig:r_v}, we observe that changing the dimension of the probes ($r_V$) has little impact on performance. Lastly, \cref{fig:r_t} shows that increasing the dimension of the encoding ($r_T$) has a dramatic effect, significantly improving performance.

\subsection{Deeper ProbeX encoders} Here, we evaluate whether deeper, non-linear ProbeX encoders outperform our single hidden-layer encoder. Specifically, we stack additional dense layers followed by non-linear activations and assess their performance. This experiment is conducted for each architecture in the Model-J dataset (i.e., ViT, ResNet, and Stable Diffusion). As shown in \cref{fig:deeper_encoders_abl_desc,fig:deeper_encoders_abl_gen}, deeper encoders tend to overfit, leading to reduced performance.

\begin{figure}[t]
    \centering
    \includegraphics[width=\linewidth]{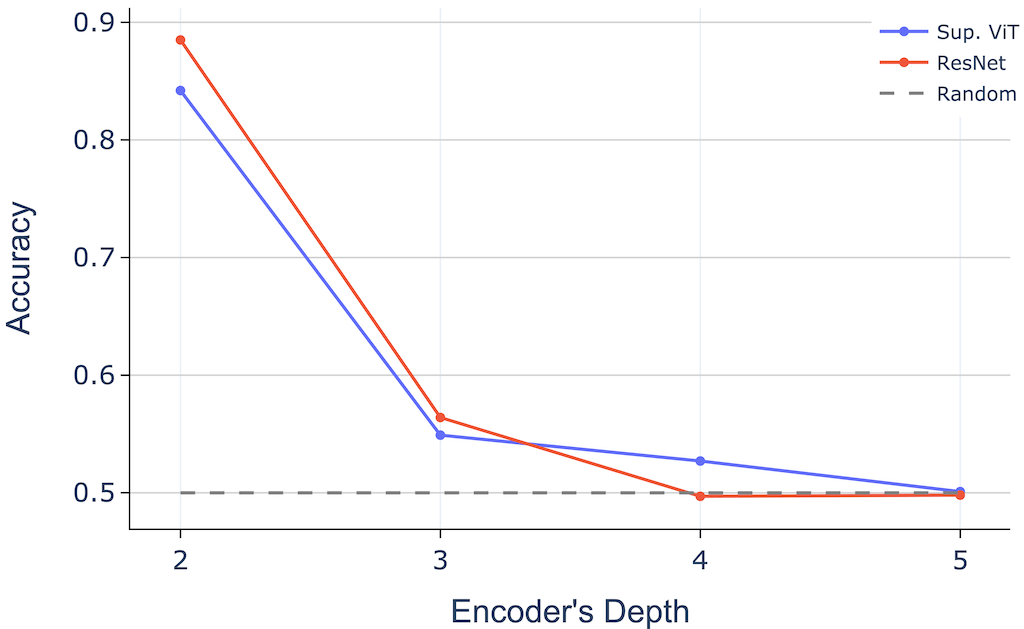}
    \caption{\textit{\textbf{Deeper ProbeX encoder ablation - discriminative}}}
    \label{fig:deeper_encoders_abl_desc}
\end{figure}

\begin{figure}[t]
    \centering
    \includegraphics[width=\linewidth]{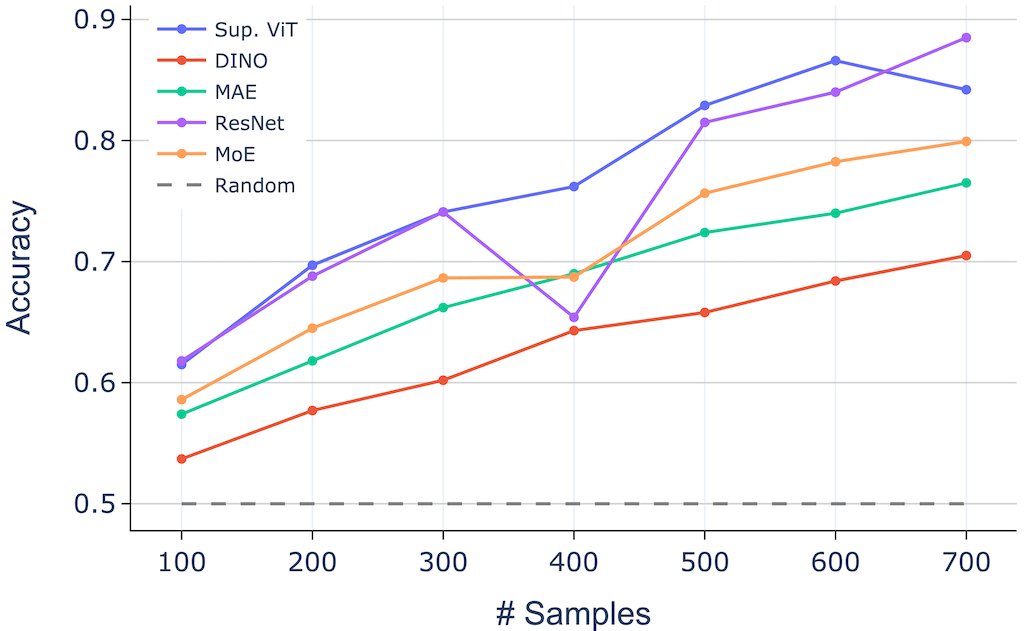}
    \caption{\textit{\textbf{Dataset size ablation}}}
    \label{fig:dataset_size_abl}
\end{figure}

\subsection{Dataset size} We examined the effect of dataset size on accuracy. Indeed, in \cref{fig:dataset_size_abl} we see that as discussed in the motivation, models that belong to the same Model Tree have positive transfer.

\begin{figure}[t]
    \centering
    \includegraphics[width=\linewidth]{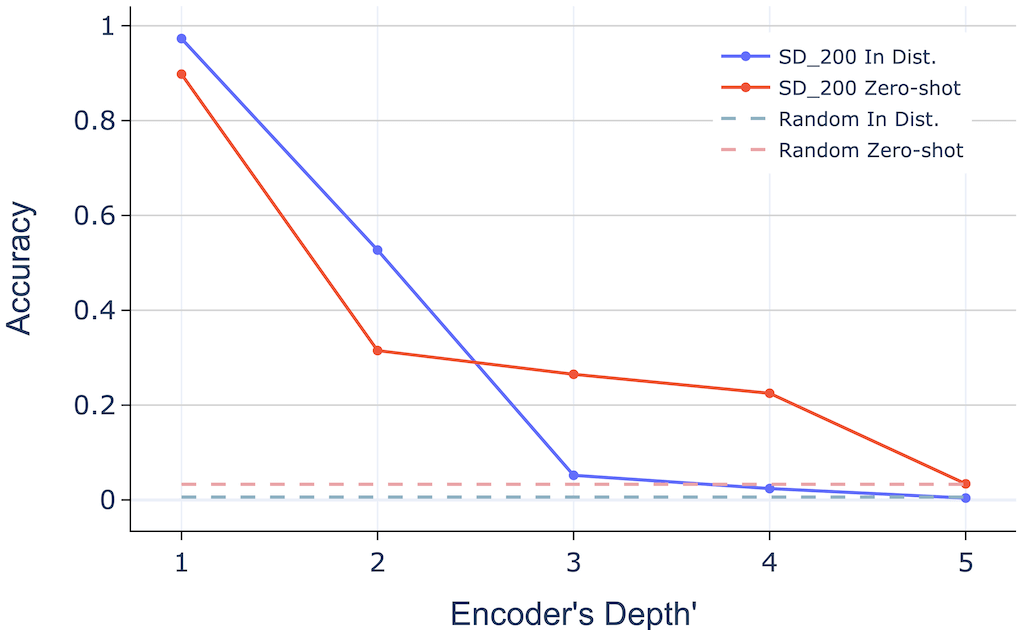}
    \caption{\textit{\textbf{Deeper ProbeX encoder ablation - $SD_{200}$}}}
    \label{fig:deeper_encoders_abl_gen}
\end{figure}

\subsection{Text encoder} We ablate whether our success in aligning model weights with CLIP rerepresentations is due to the fact Stable Diffusion was originally trained with CLIP. We perform the zero-shot experiment using $SD_{200}$ and the following text encoders. The results in \cref{tab:text_encoder_ab}, suggest that while CLIP performs best, our approach remains effective across different text encoders. This shows the robustness of ProbeX to the choice of text backbone.

\begin{table}[t]
\centering
\caption{\textbf{\textit{Text-Encoder Ablation on $SD_{200}$:}} We ablate the sensitivity of the representation alignment to different text encoders using the zero-shot experiment. While CLIP performs best, as expected due to Stable Diffusion’s training, our approach remains effective across various text encoders, demonstrating robustness to the choice of text backbone}
\begin{tabular}{lc}
\textbf{Encoder} & \textbf{Acc. $\uparrow$} \\
\midrule
   BLIP2    & 0.564 \\
  OPENCLIP  & 0.860 \\
  CLIP      & \textbf{0.898} \\

\end{tabular}
\label{tab:text_encoder_ab}
\end{table}

\section{Hugging Face Model Graph analysis}
\label{app:hf_breakdown}

Our presented statistics regarding Model Trees are based on the \q{hub-stats} Hugging Face dataset\footnote{\url{https://huggingface.co/datasets/cfahlgren1/hub-stats}}. This dataset, maintained by Hugging Face, is automatically updated daily with statistics about Hugging Face models, datasets, and more. We used a version from late September 2024, when there were \q{only} about $800{,}000$ models hosted on Hugging Face. We utilized the \texttt{base\_model} property from model cards and aggregated based on it. However, since not all models on Hugging Face use this property, these statistics are not $100\%$ accurate and may contain some bias. Additionally, \cref{fig:hf_growth} also uses the \q{hub-stats} dataset and is based on the graphs shown at \url{https://huggingface.co/spaces/cfahlgren1/hub-stats}.

\begin{table}[t!]
\centering \caption{\textbf{\textit{Hyperparameters overview - Discriminative split}}}
    \resizebox{\linewidth}{!}{%
        \begin{tabular}{l@{\hskip5pt}c@{\hskip5pt}}    
            \textbf{Name} & \textbf{Value} \\
            \midrule
            \texttt{lr} & \begin{tabular}[c]{@{}c@{}}$[5e-4, 3e-4, 1e-4, 9e-5,$ \\$7e-5, 5e-5, 3e-5]$\end{tabular}\\
            \texttt{lr\_scheduler} & \begin{tabular}[c]{@{}c@{}}linear, cosine, cosine-with-restarts,\\ constant, constant-with-warmup\end{tabular}\\
            \texttt{weight\_decay} & \begin{tabular}[c]{@{}c@{}}$[5e-2,3e-2,1e-2,9e-3,$\\$7e-3,5e-3]$\end{tabular}\\ 
            \texttt{epochs} & $[2-9]$ \\
            \texttt{random\_crop} & T,F\\
            \texttt{random\_flip} & T,F\\
            \texttt{batch\_size} & $64$ \\
            \texttt{fine-tuning type} & Full Fine-tuning\\
        \end{tabular}
    }
\label{tab:desc_hparams_overview}
\end{table}

\section{Dataset details}
\label{app:dataset_details}
Existing weight-space learning datasets and model zoos \citep{schurholt2022model,honegger2023sparsified} primarily consist of models that are randomly initialized. This means that each model in such datasets serves as the root of a distinct Model Tree containing only that model. As demonstrated in \cref{sec:motivation}, learning from such Model Graphs is significantly more challenging, highlighting the need for our approach of learning within Model Trees. Furthermore, existing datasets primarily consist of small models, typically with only thousands of parameters per model. \textit{As such, we cannot utilize the existing and established weight-space learning datasets.} 

To address this, we introduce the Model Jungle dataset (Model-J), which simulates the structure of public model repositories. Each of our fine-tuned models is trained using a set of hyperparameters sampled uniformly at random. Discriminative models share the same set of possible hyperparameters, summarized in \cref{tab:desc_hparams_overview}. Generative models, in contrast, use a different set of hyperparameters detailed in \cref{tab:gen_hparams_overview}. Notably, in the generative split, our Model Trees have multiple levels of hierarchy, as models were fine-tuned from SD1.2 to SD1.5. This structure is designed to simulate public model repositories, where Model Trees often exhibit multiple levels of hierarchy. In \cref{fig:supervised_hist,fig:dino_hist,fig:mae_hist,fig:resnet_101_hist} we provide a summary of the test accuracy the models in the discriminative split converged to. In \cref{fig:supervised_acc_vs_lr,fig:dino_acc_vs_lr,fig:mae_acc_vs_lr,fig:resnet_101_acc_vs_lr} we plot these accuracies as a function of the model's learning rate.

For the discriminative split, we use the following models as the Model Tree roots taken from \textit{Hugging Face}: 
\begin{itemize}
    \item \url{https://huggingface.co/google/vit-base-patch16-224}
    \item \url{https://huggingface.co/facebook/vit-mae-base}
    \item \url{https://huggingface.co/facebook/dino-vitb16}
    \item \url{https://huggingface.co/microsoft/resnet-101}
\end{itemize}

\begin{table}[t!]
\centering \caption{\textbf{\textit{Hyperparameters overview - generative Split}}}
    \resizebox{\linewidth}{!}{%
        \begin{tabular}{l@{\hskip5pt}c@{\hskip5pt}}    
            \textbf{Name} & \textbf{Value} \\
            \midrule
            \texttt{\# images} & [5-10]\\ 
            \texttt{lr} & \begin{tabular}[c]{@{}c@{}}$[5e-4, 3e-4, 1e-4, 9e-5,$ \\$7e-5, 5e-5, 3e-5]$\end{tabular}\\
            \texttt{prompt\_template} &   \begin{tabular}[c]{@{}c@{}}a photo of a, a picture of a,\\ a photograph of a, an image of a,\\ cropped photo of the,  a rendering of a\end{tabular}\\
            \texttt{base\_model} & [SD1.2, SD1.3, SD1.4, SD1.5] \\
            \texttt{steps} & $[450,500,550,600,650,700]$ \\
            \texttt{fine-tuning type} & LoRA\\
            \texttt{random\_crop} & T,F\\
            \texttt{rank} & 16 \\
            \texttt{batch\_size} & $1$ \\
            
        \end{tabular}
    }
\label{tab:gen_hparams_overview}
\end{table}

\begin{table}[t!]
\centering
\caption{\textbf{\textit{Model Jungle dataset summary.}} We train over $14{,}000$ models, covering different architectures, tasks and model sizes. Each model uses randomly sampled hyper parameters}
\begin{tabular}{l@{\hskip6pt}l@{\hskip5pt}l@{\hskip5pt}c@{\hskip5pt}c@{\hskip5pt}}
\textbf{Name} & \textbf{FT Type} & \textbf{Task} & \textbf{Size} & \textbf{\# Classes} \\
\toprule
DINO & Full FT &\begin{tabular}[l]{@{}l@{}} Att. classification\end{tabular}  & 1000 & 50/100  \\                                          
MAE & Full FT &\begin{tabular}[l]{@{}l@{}} Att. classification\end{tabular}  & 1000 & 50/100  \\                                          
Sup. ViT & Full FT & Att. classification & 1000 & 50/100 \\         
ResNet   & Full FT &Att. classification & 1000 & 50/100    \\ 
$SD_{200}$ & LoRA & Fine-grained & 5000 & 200              \\ 
$SD_{1k}$  & LoRA & Few shot & 5000 & 1000             \\ 
\end{tabular}
\label{tab:dataset_overview}
\end{table} 

\section{Implementation details}
\label{app:implementation_details}

\begin{figure}[t]
    \centering
    \includegraphics[width=\linewidth]{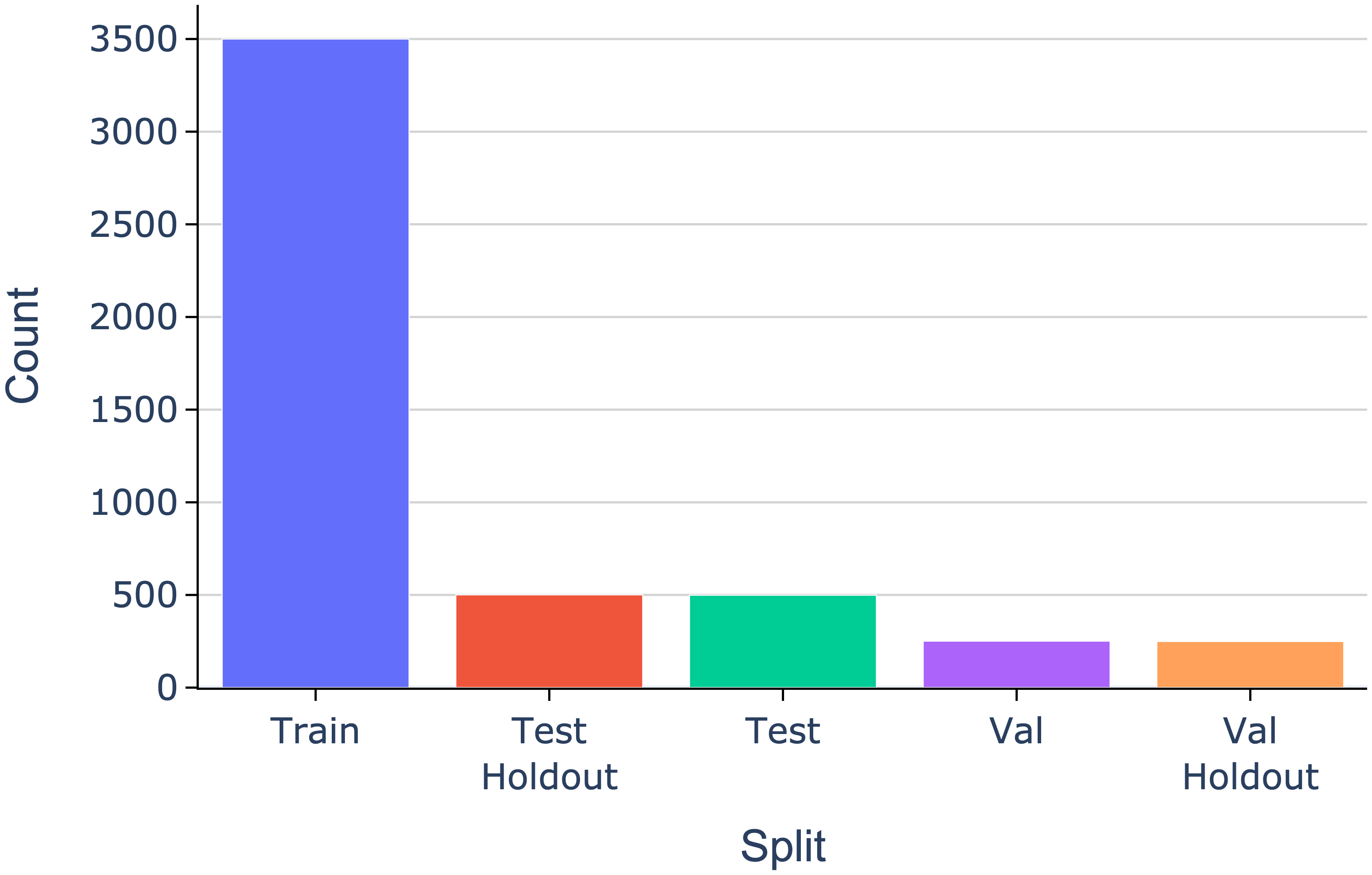}
    \caption{\textit{\textbf{Distribution of splits in the generative split}}}
    \label{fig:sd_1k_split}
\end{figure}

\begin{figure*}[t]
    \centering
    \begin{minipage}[t]{0.48\linewidth}
        \centering
        \includegraphics[width=\linewidth]{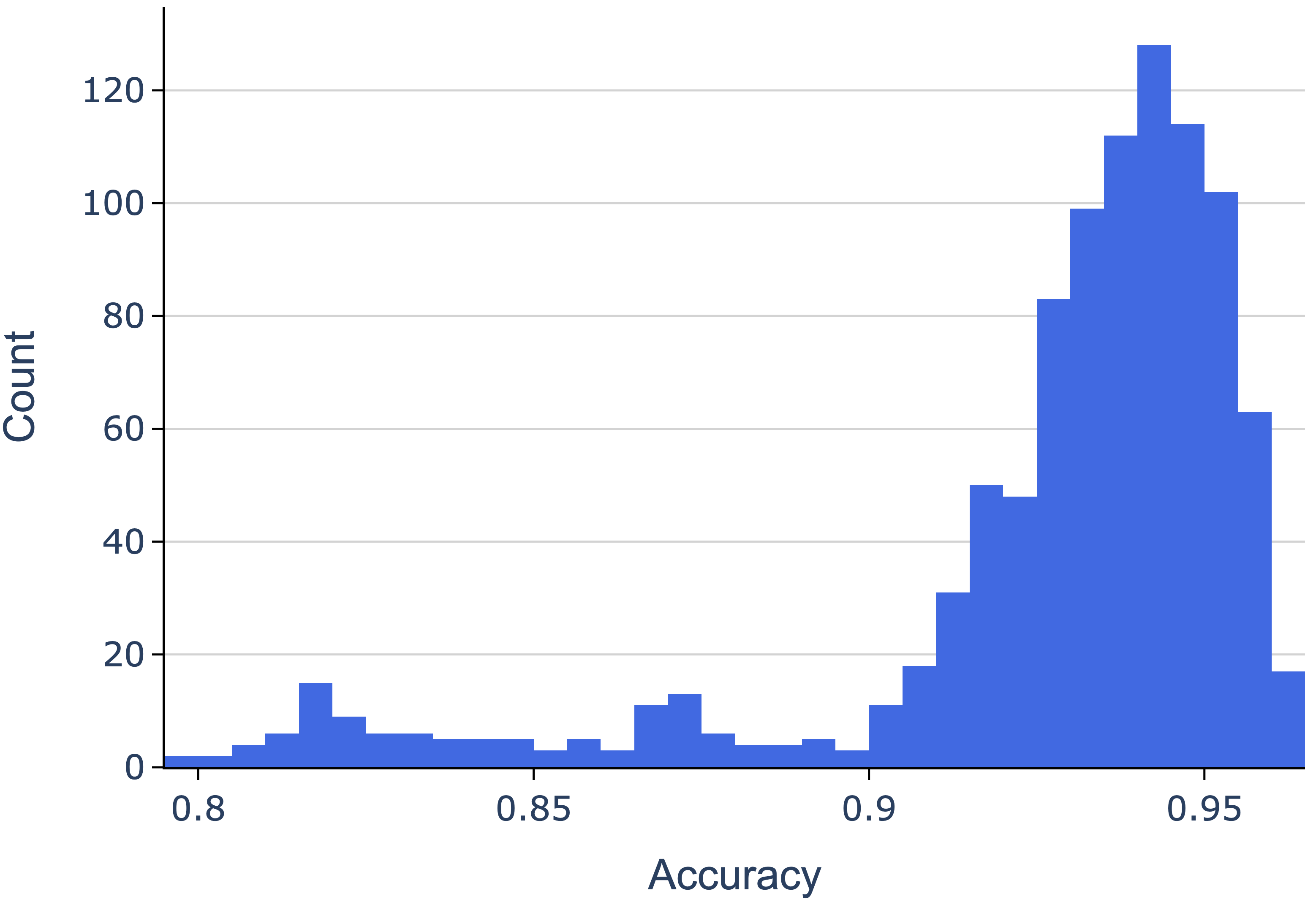}
        \caption{\textbf{\textit{Sup. ViT - Test accuracy distribution}}}
        \label{fig:supervised_hist}
    \end{minipage}
    \hfill
    \begin{minipage}[t]{0.48\linewidth}
        \centering
        \includegraphics[width=\linewidth]{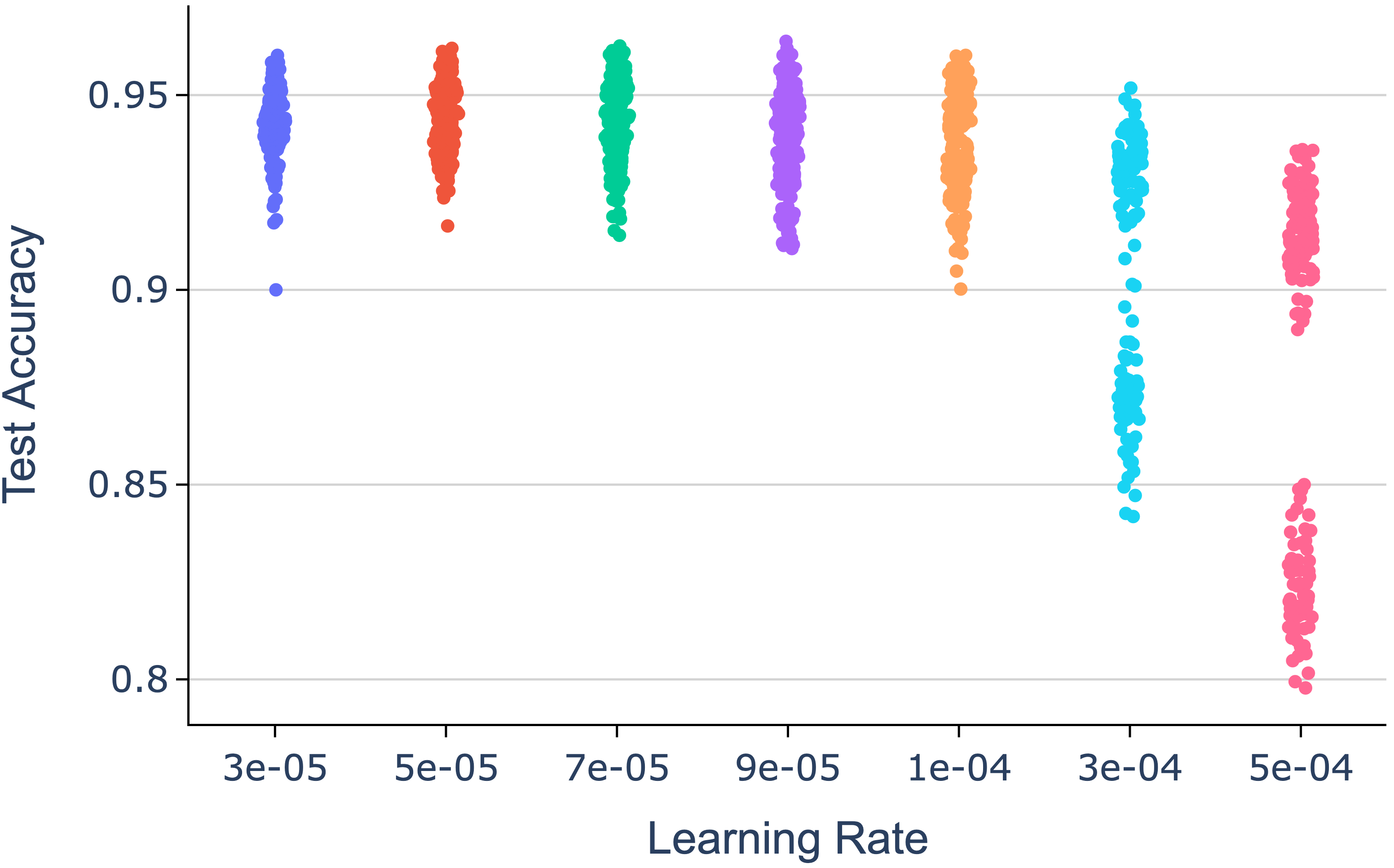}
        \caption{\textbf{\textit{Sup. ViT - Effect of learning rate on test accuracy}}}
        \label{fig:supervised_acc_vs_lr}
    \end{minipage}
    \label{fig:supervised_figures}
\end{figure*}

\subsection{Experimental setup} We use the Model-J dataset presented in \cref{sec:dataset}, split into $70/10/20$ for training, validation, and testing. Given the significant variation in results between layers, we train ProbeX for $500$ epochs on each layer and select the best layer and epoch based on the validation set. We use the Adam optimizer with a weight decay of $1e{-}5$ and a learning rate of $1e{-}3$. The number of probesm probe dimensions, and encoder dimension are set to $r_U=r_V=r_T=128$.

\subsection{Baselines}  
\subsubsection{Neural Graphs baseline}  
As mentioned in the \cref{sec:experiments}, we attempted to use \citep{kofinas2024graph} as a baseline but were unable to scale the method to models in our dataset. Here, we provide additional details about this attempt. Neural Graphs \citep{kofinas2024graph} is a graph-based approach that treats each bias in the network as a node and each weight as an edge. These methods scale quadratically with the number of neurons, leading to computational challenges when applied to larger models. To address this, we adapted the Neural Graphs approach to the single-layer case, but even in this simplified scenario, it required a relatively low hidden dimension to run on a $24$GB GPU. Since this baseline yielded near-random results in our experiments with discriminative models, we chose not to include its results in the tables.  

\subsubsection{StatNN}  
For the discriminative split, we used StatNN as a baseline by training XGBoost on the StatNN features. To compare StatNN in the case of text-aligned representations, we replaced XGBoost with an MLP, allowing the baseline to be trained with the same contrastive objective used for ProbeX. We developed two StatNN variants: i) $StatNN_{Linear}$: A single linear layer trained on top of the StatNN features. ii) $StatNN_{MLP}$: A deeper architecture designed to match the parameter count of our method.

\clearpage

\begin{figure}[t]
    \centering
    \includegraphics[width=\linewidth]{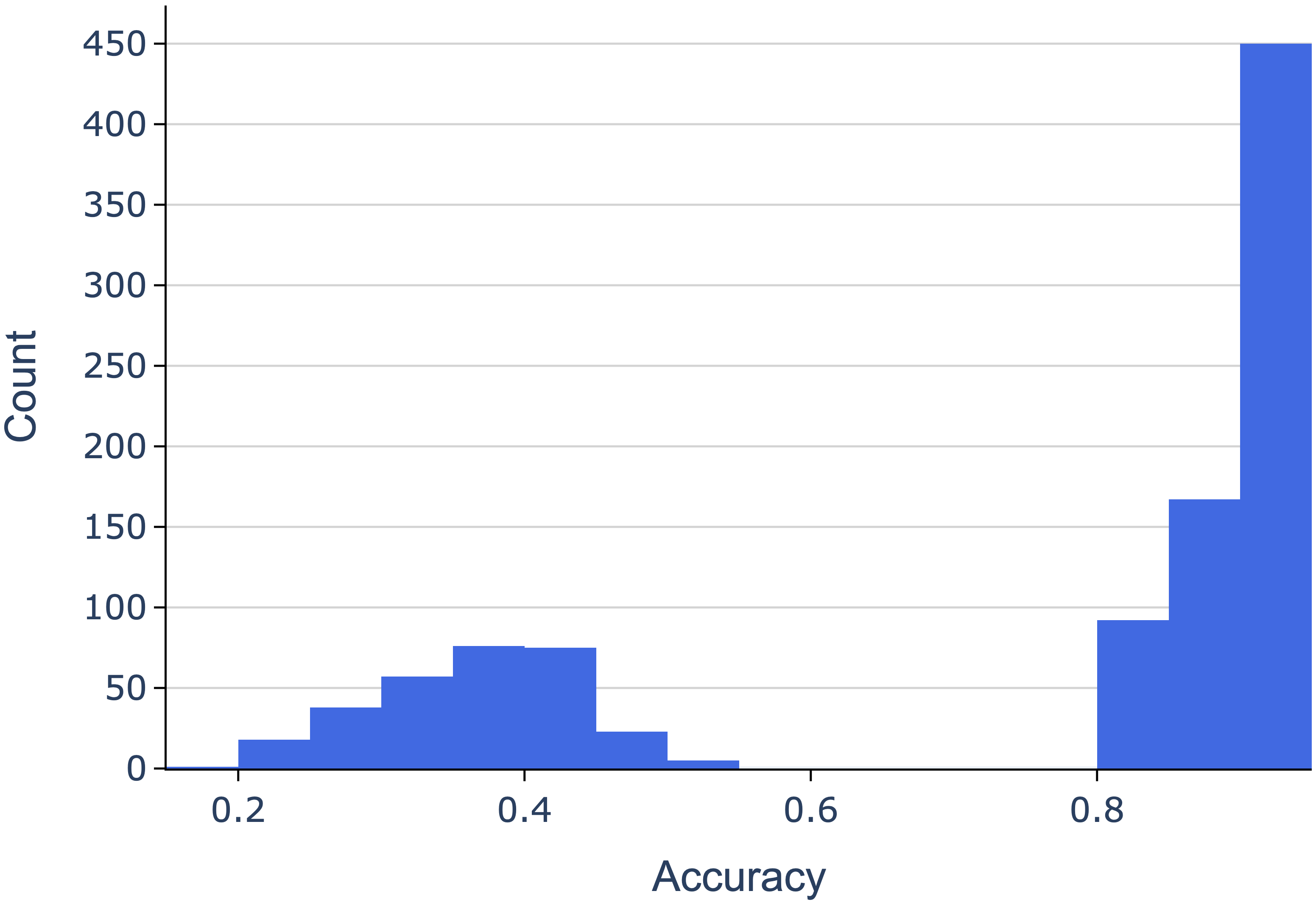}
    \caption{\textbf{\textit{DINO - Test accuracy distribution}}}
    \label{fig:dino_hist}
\end{figure}

\begin{figure}[t!]
    \centering
    \includegraphics[width=\linewidth]{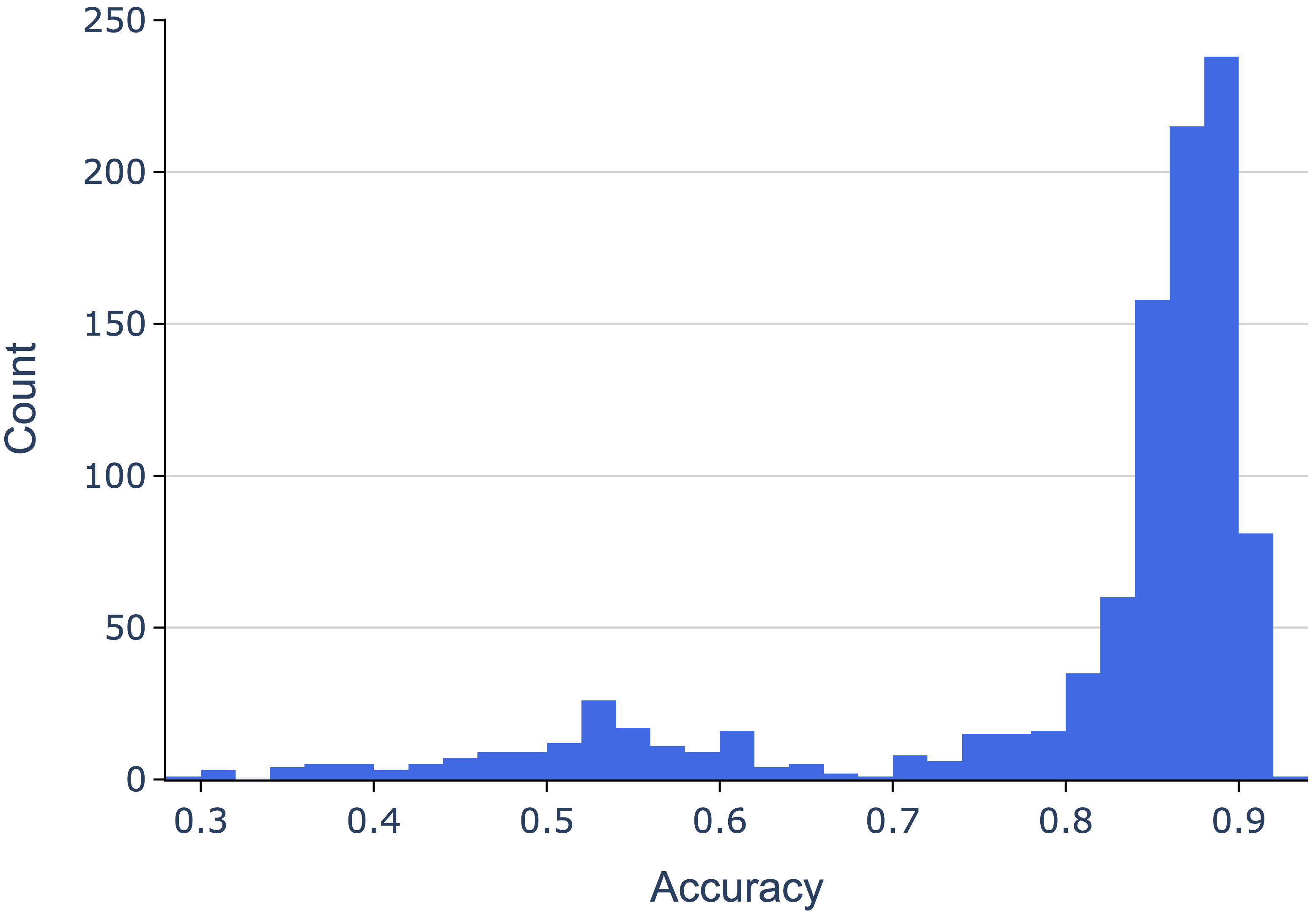}
    \caption{\textbf{\textit{MAE - Test accuracy distribution}}}
    \label{fig:mae_hist}
\end{figure}

\begin{figure}[t!]
    \centering
    \includegraphics[width=\linewidth]{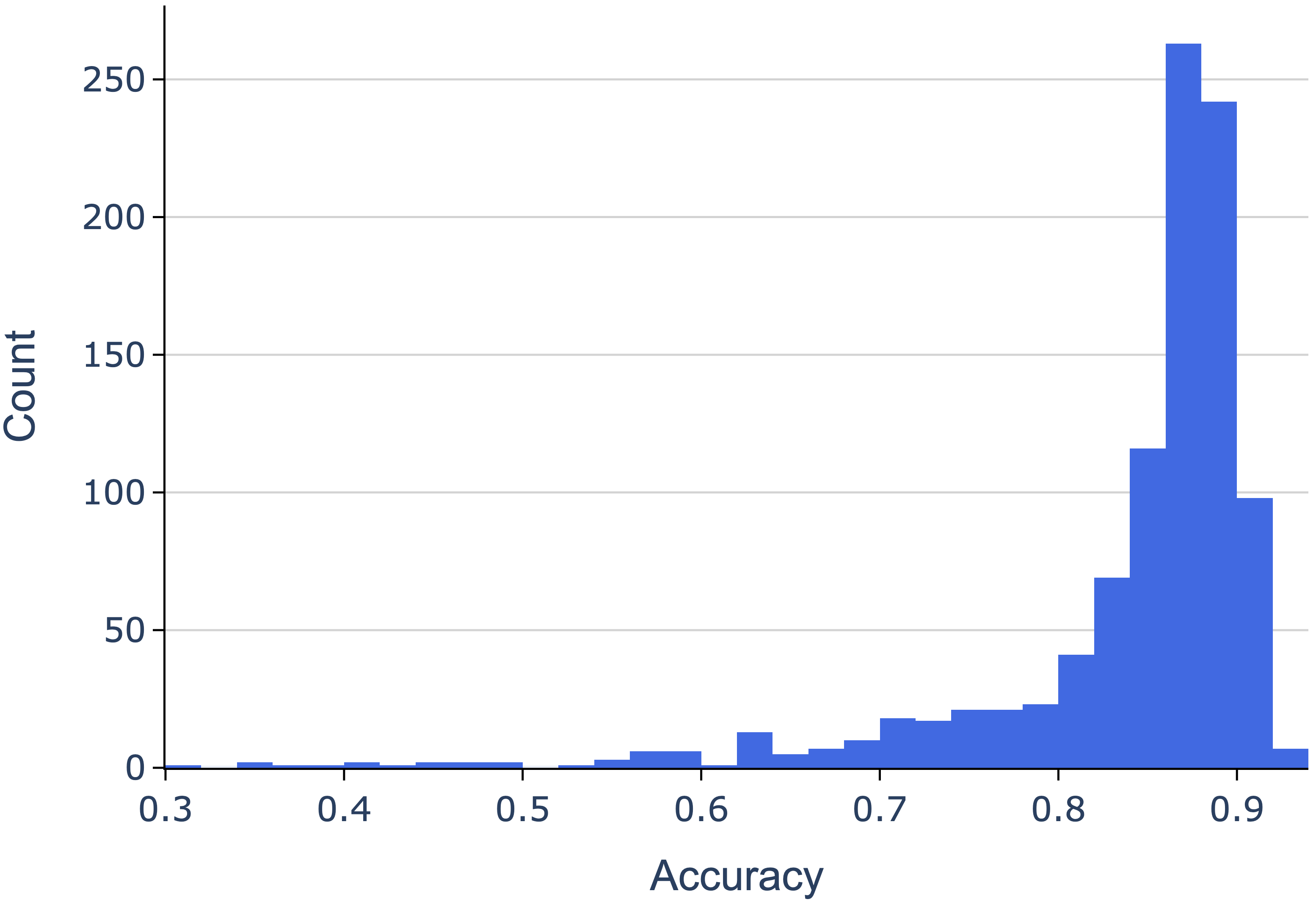}
    \caption{\textbf{\textit{ResNet - Test accuracy distribution}}}
    \label{fig:resnet_101_hist}
\end{figure}

\begin{figure}[t]
    \centering
    \includegraphics[width=\linewidth]{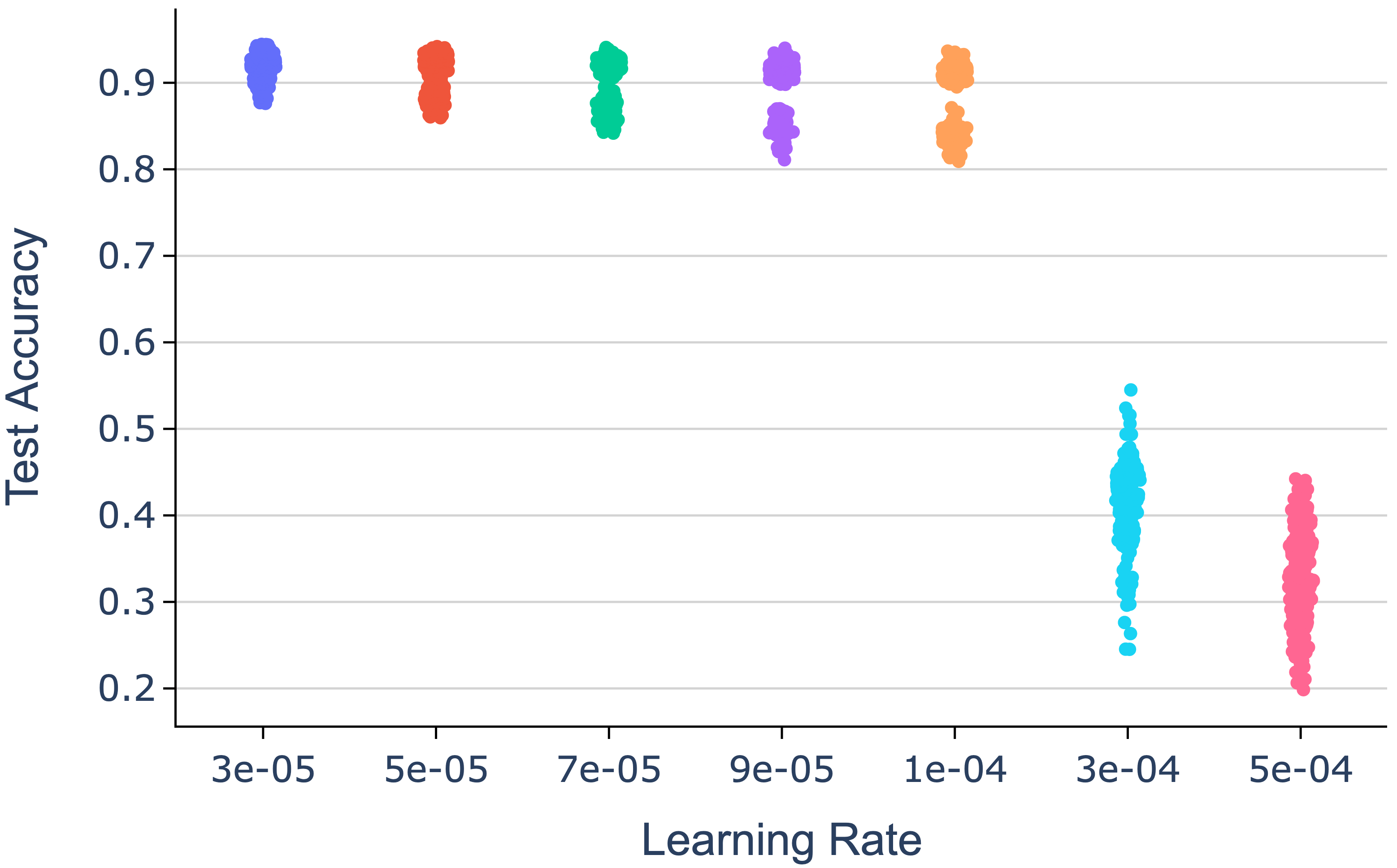}
    \caption{\textbf{\textit{DINO - Effect of learning rate on test accuracy}}}
    \label{fig:dino_acc_vs_lr}
\end{figure}

\begin{figure}[t!]
    \centering
    \includegraphics[width=\linewidth]{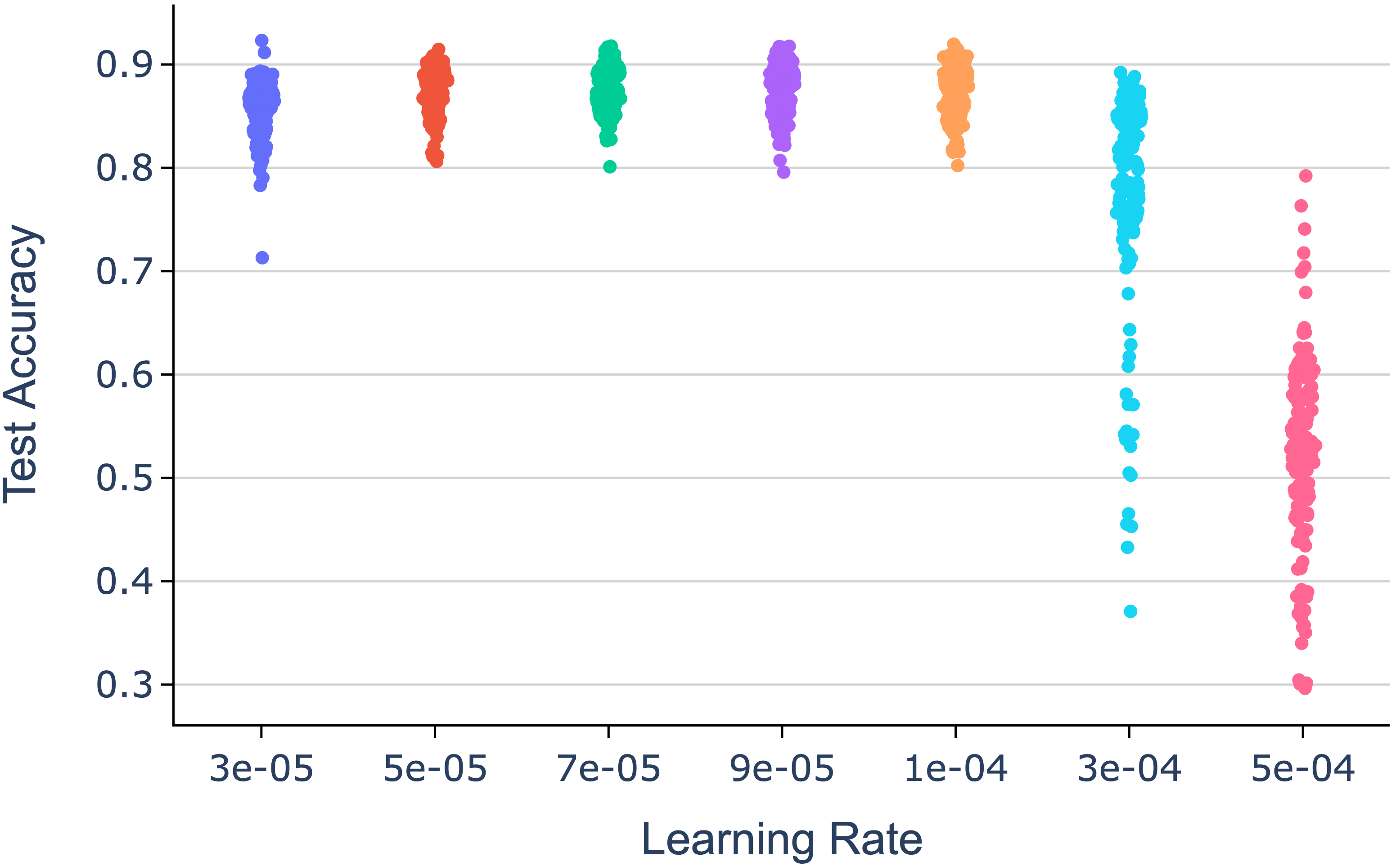}
    \caption{\textbf{\textit{MAE - Effect of learning rate on test accuracy}}}
    \label{fig:mae_acc_vs_lr}
\end{figure}

\begin{figure}[t!]
    \centering
    \includegraphics[width=\linewidth]{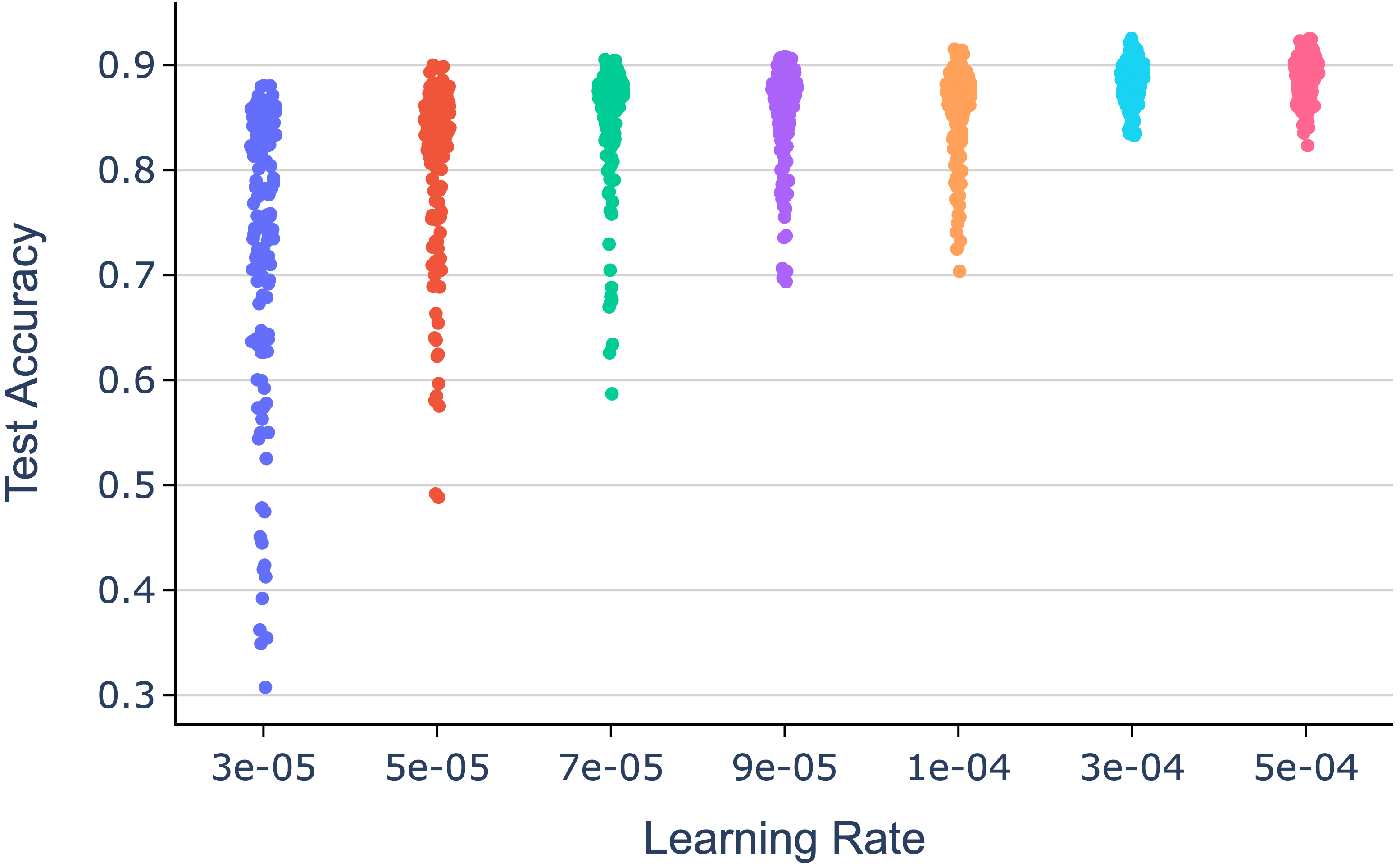}
    \caption{\textbf{\textit{ResNet - Effect of learning rate on test accuracy}}}
    \label{fig:resnet_101_acc_vs_lr}
\end{figure}

\begin{figure}[t]
    \centering
    \includegraphics[width=0.88\linewidth]{figs/appendix/retrieval/title_row.png} \\ 
    \includegraphics[width=0.88\linewidth]{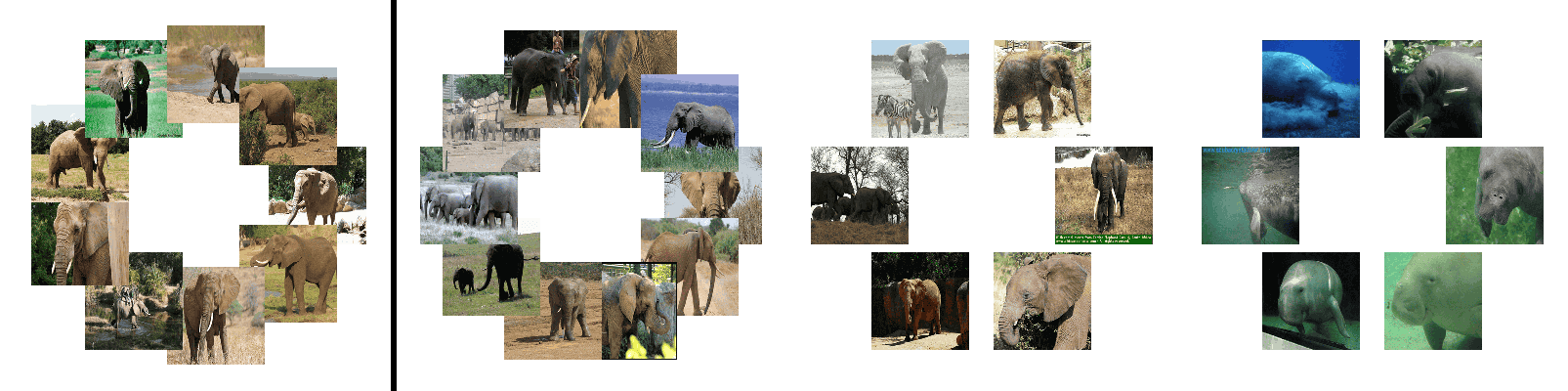} \\ 
    \includegraphics[width=0.88\linewidth]{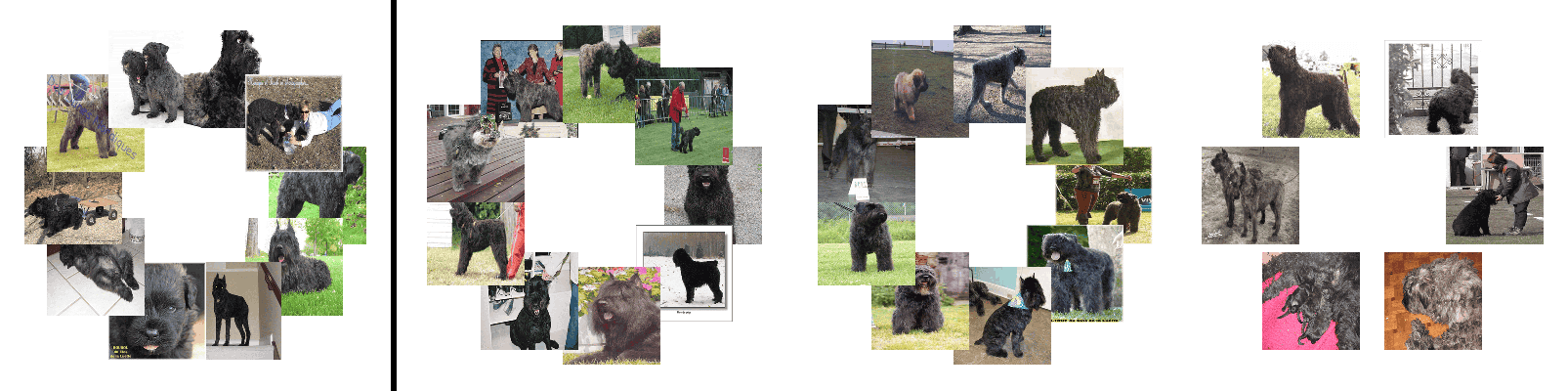} \\ 
    \includegraphics[width=0.88\linewidth]{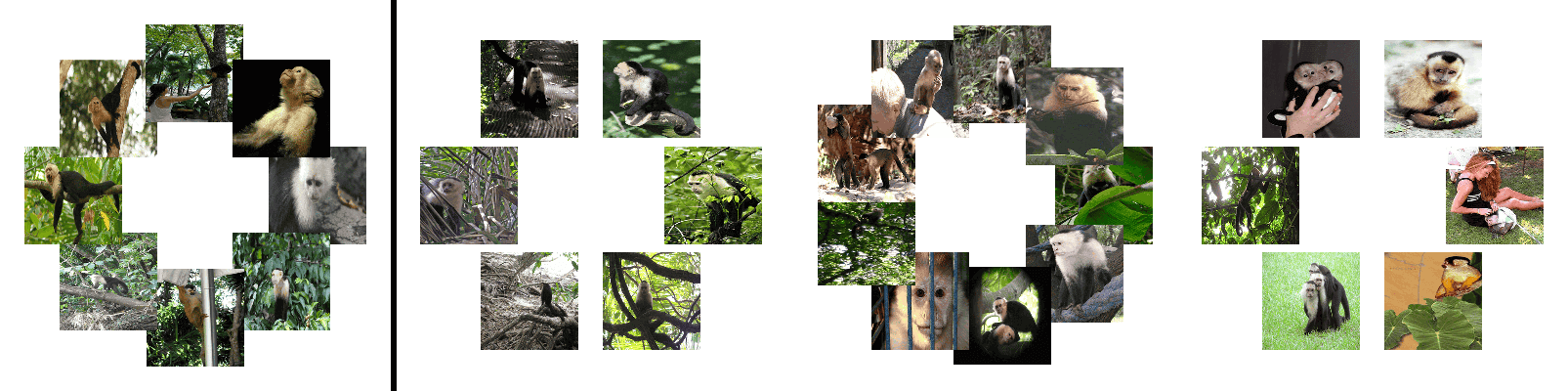} \\ 
    \includegraphics[width=0.88\linewidth]{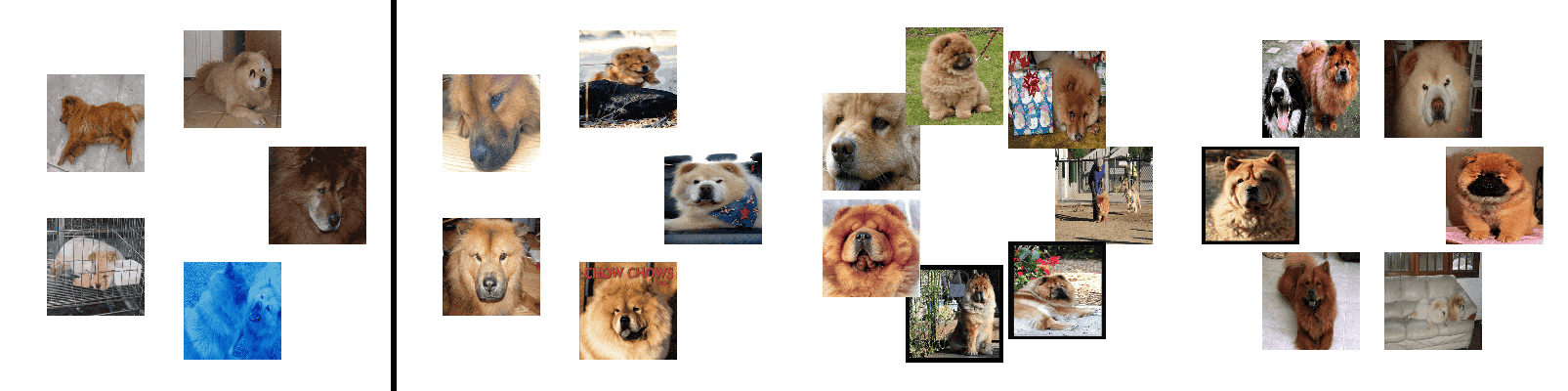} \\ 
    \includegraphics[width=0.88\linewidth]{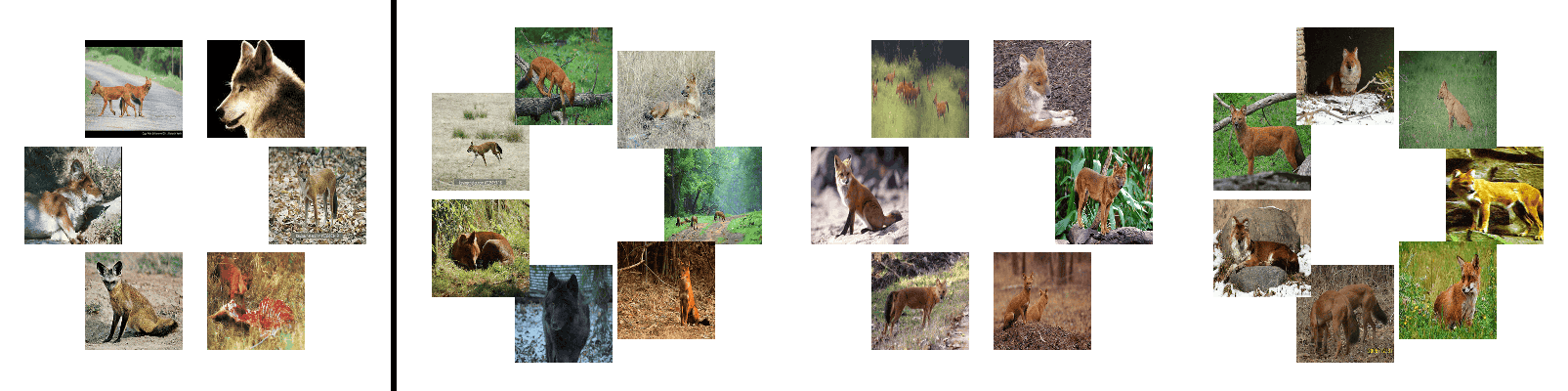} \\ 
    \includegraphics[width=0.88\linewidth]{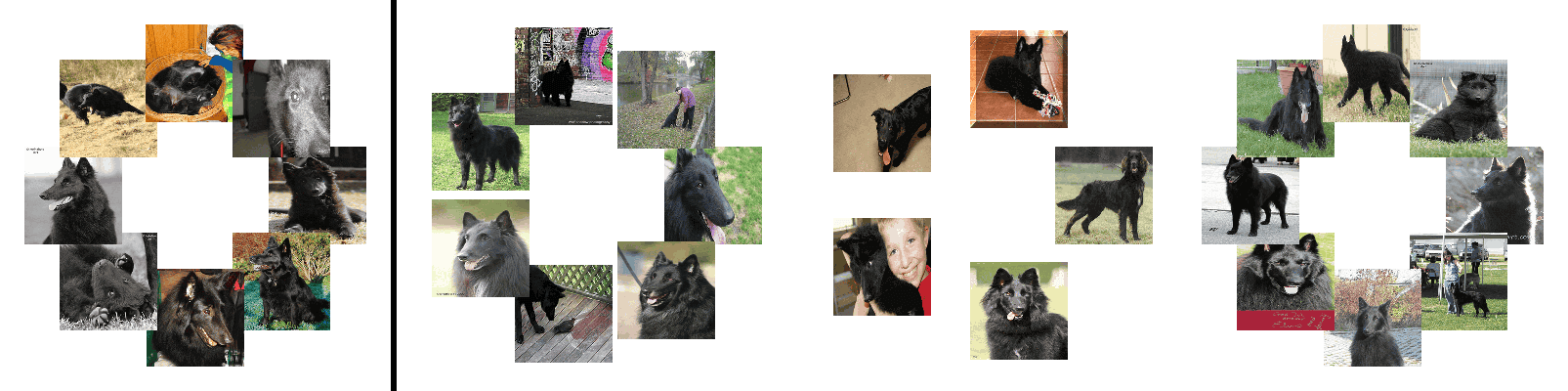} \\ 
    \includegraphics[width=0.88\linewidth]{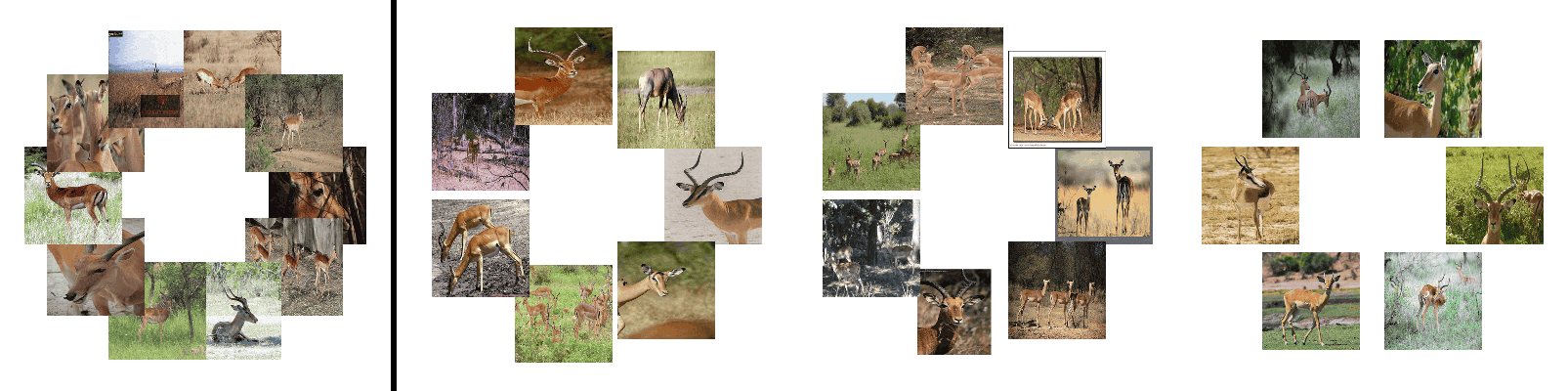} \\ 
    \includegraphics[width=0.88\linewidth]{figs/appendix/retrieval/jaguar_row.png} \\ 
    \includegraphics[width=0.88\linewidth]{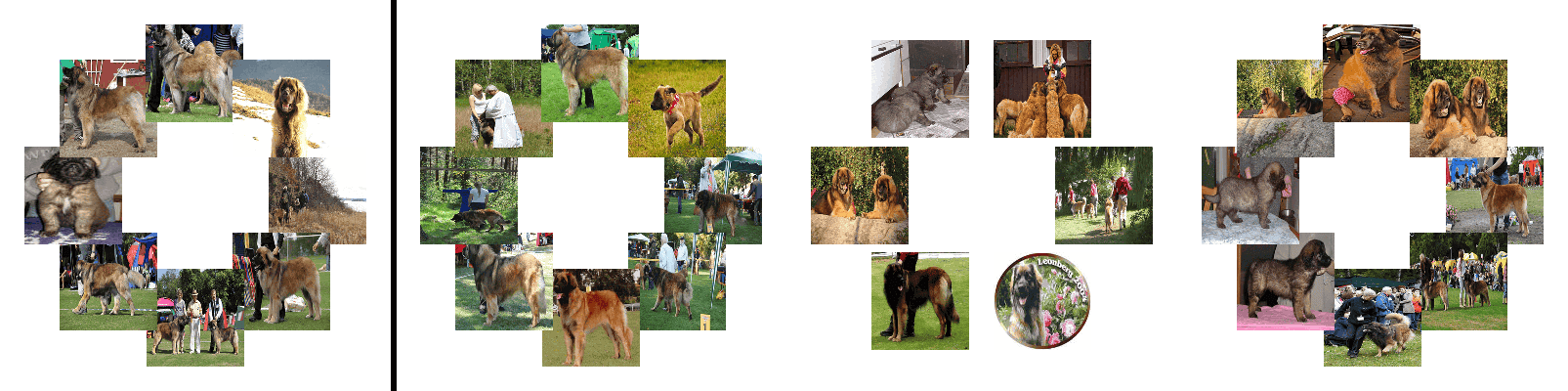} \\
    \includegraphics[width=0.88\linewidth]{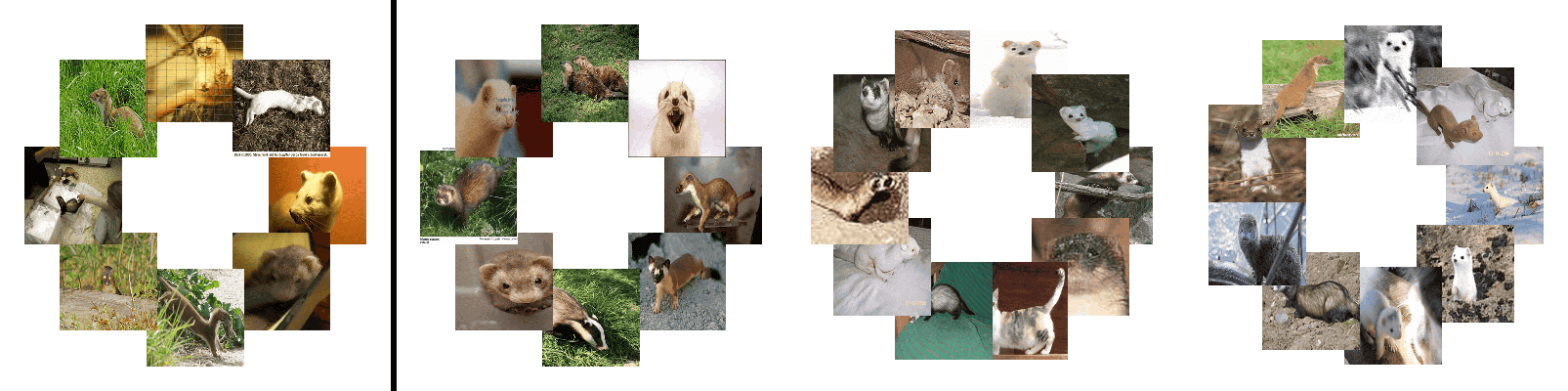} \\ 
    \caption{\textit{\textbf{Additional non-cherry picked retrieval results (1/3):}} Retrieval is performed using model weights, to visualize each model we use the set of all the images that were used to fine-tune the model.}
    \label{fig:retrieval_long1}
\end{figure}

\begin{figure}[t]
    \centering
    \includegraphics[width=0.88\linewidth]{figs/appendix/retrieval/title_row.png} \\ 
    \includegraphics[width=0.88\linewidth]{figs/appendix/retrieval/badger_row.png} \\ 
    \includegraphics[width=0.88\linewidth]{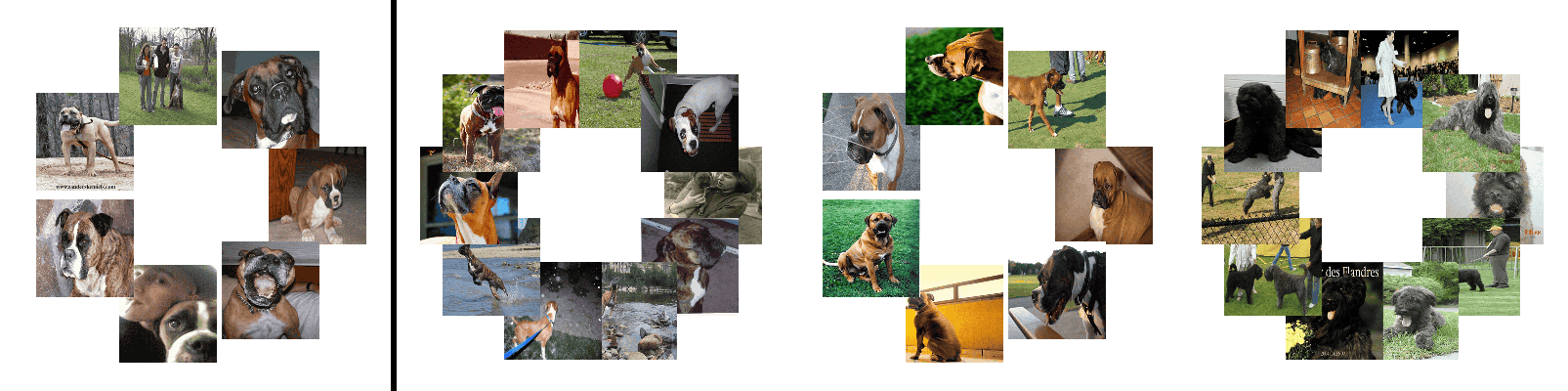} \\ 
    \includegraphics[width=0.88\linewidth]{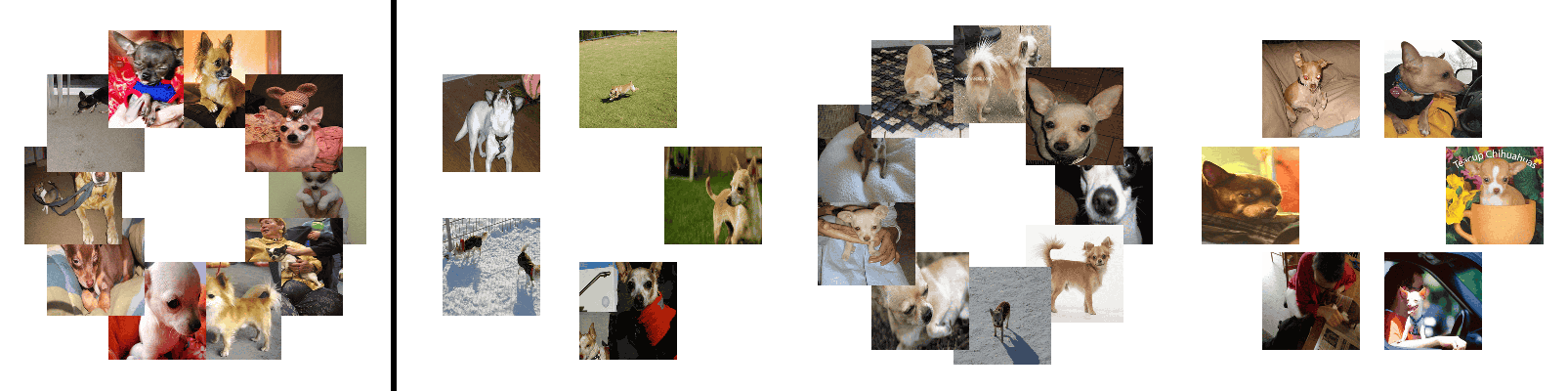} \\ 
    \includegraphics[width=0.88\linewidth]{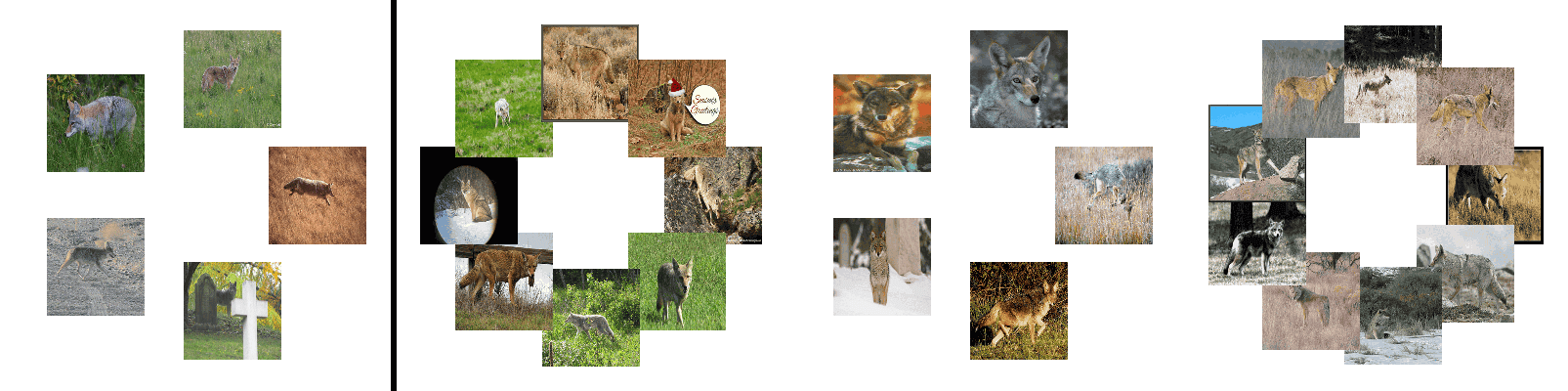} \\ 
    \includegraphics[width=0.88\linewidth]{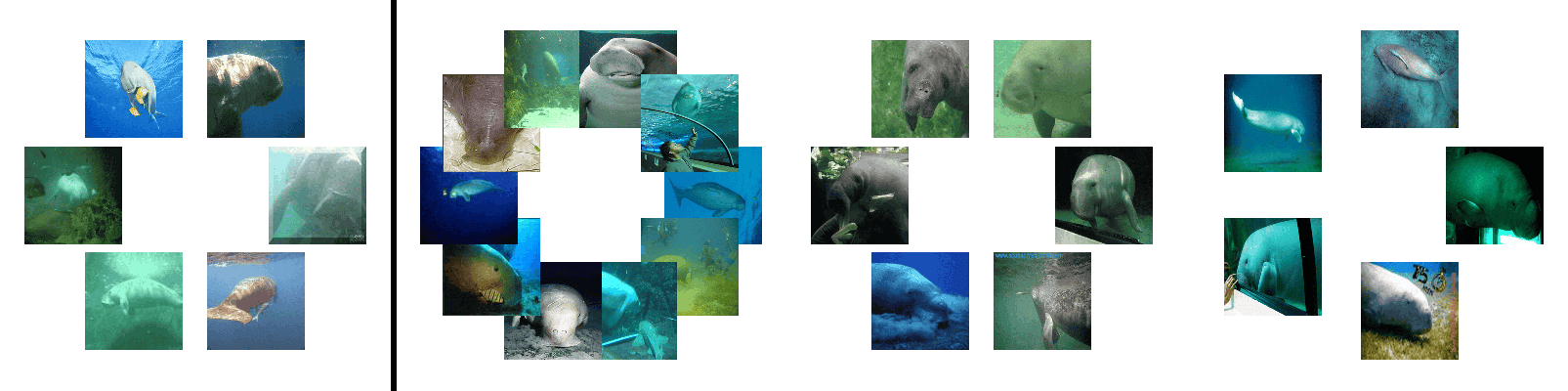} \\ 
    \includegraphics[width=0.88\linewidth]{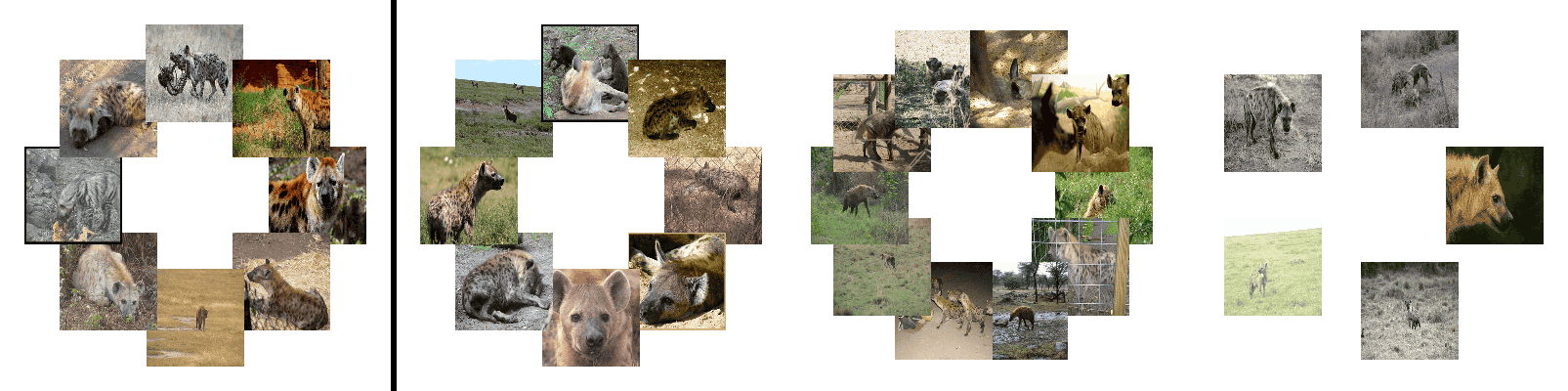} \\ 
    \includegraphics[width=0.88\linewidth]{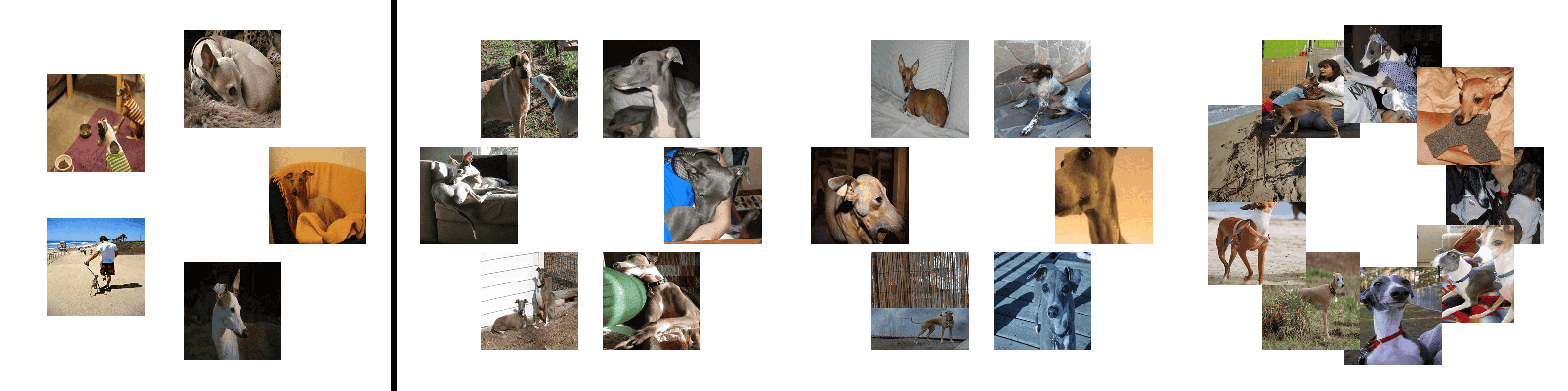} \\ 
    \includegraphics[width=0.88\linewidth]{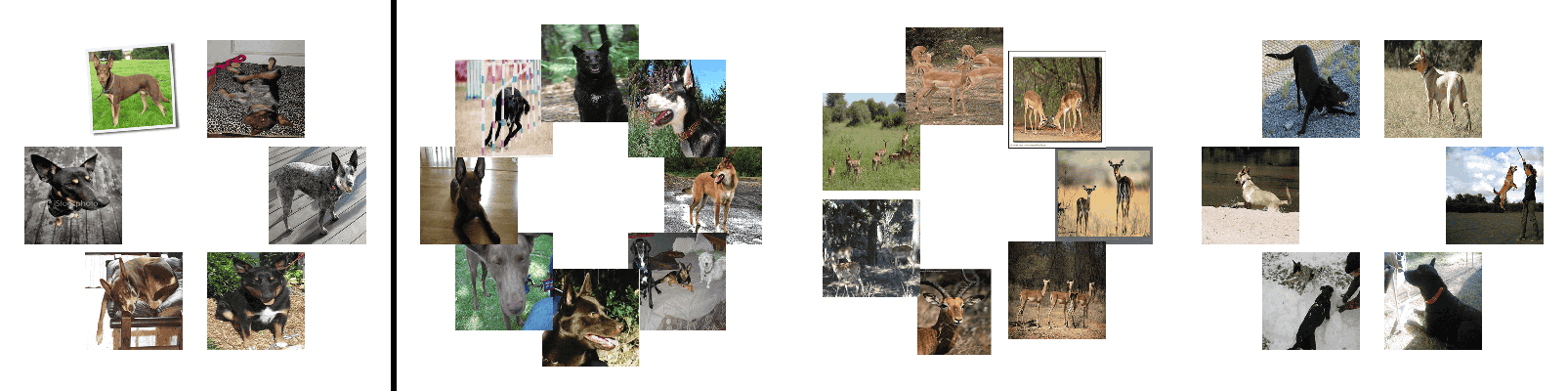} \\ 
    \includegraphics[width=0.88\linewidth]{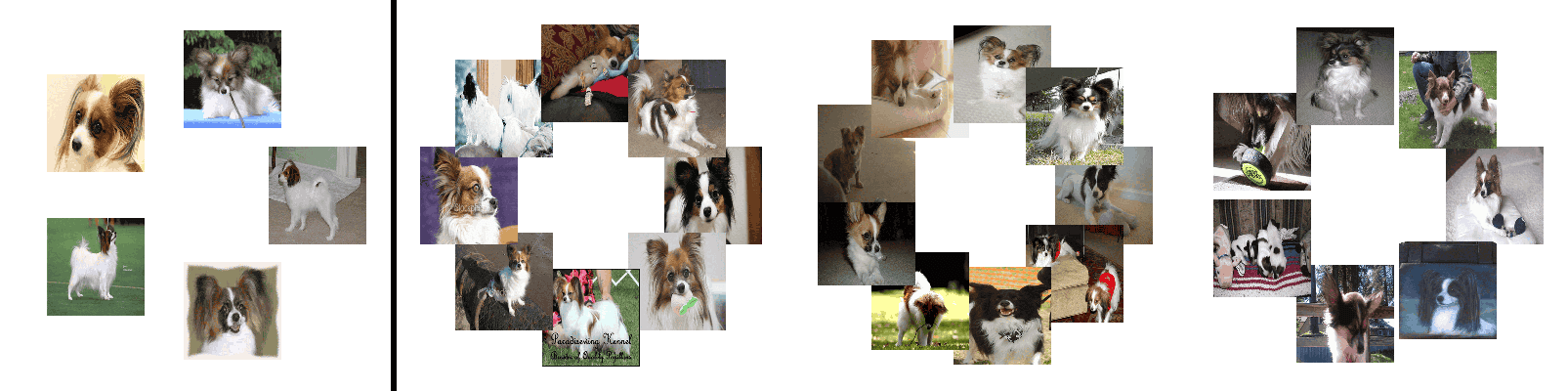} \\
    \includegraphics[width=0.88\linewidth]{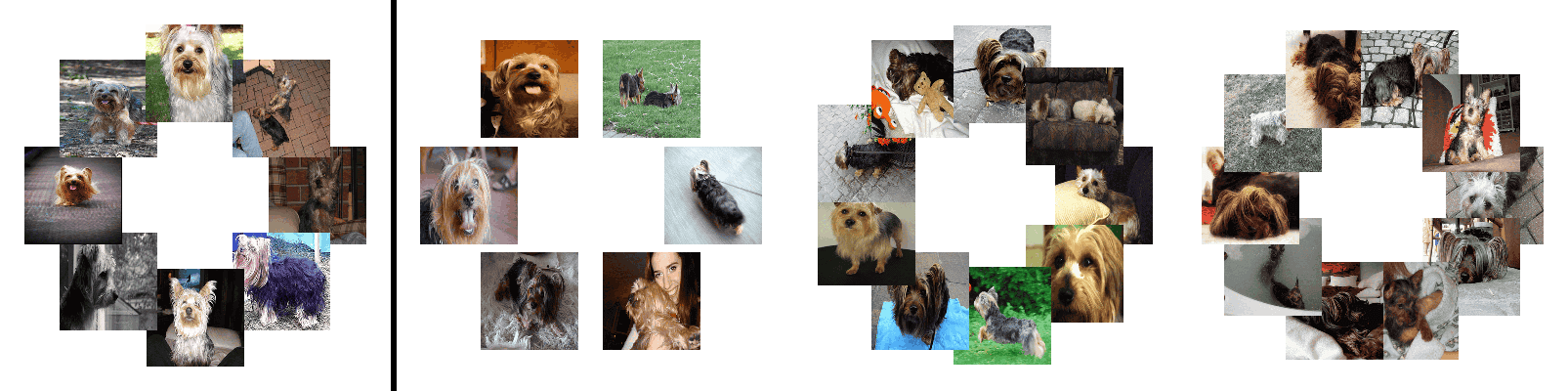} \\ 
    \caption{\textit{\textbf{Additional non-cherry picked retrieval results (2/3):}} Retrieval is performed using model weights, to visualize each model we use the set of all the images that were used to fine-tune the model.}
    \label{fig:retrieval_long2}
\end{figure}

\begin{figure}[t]
    \centering
    \includegraphics[width=0.88\linewidth]{figs/appendix/retrieval/title_row.png} \\ 
    \includegraphics[width=0.88\linewidth]{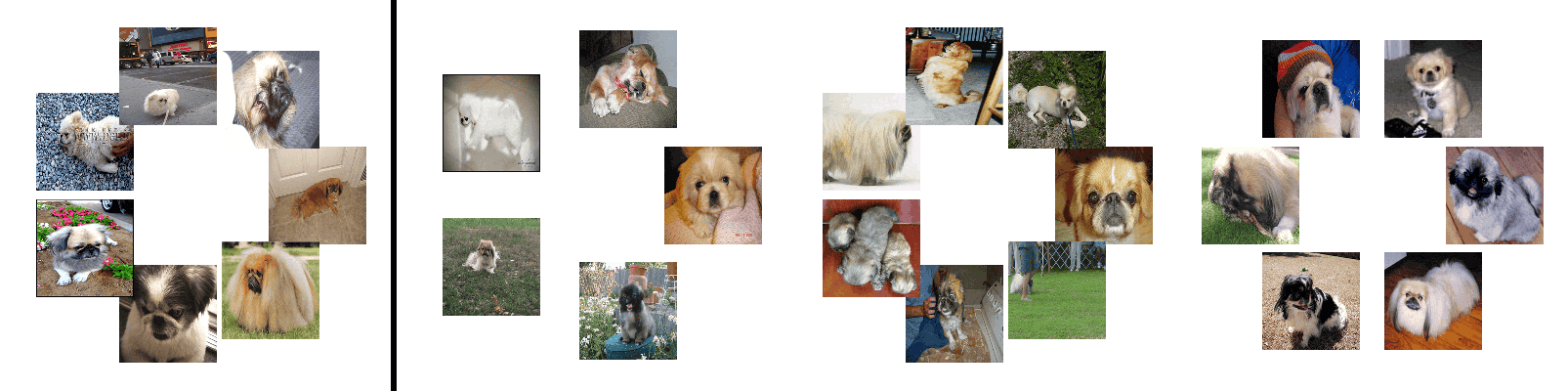} \\ 
    \includegraphics[width=0.88\linewidth]{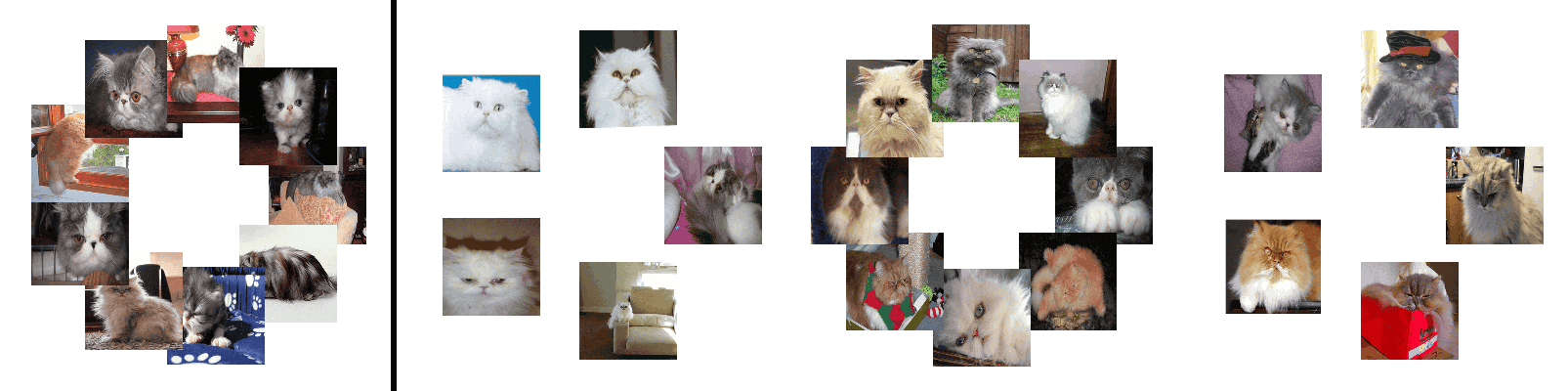} \\ 
    \includegraphics[width=0.88\linewidth]{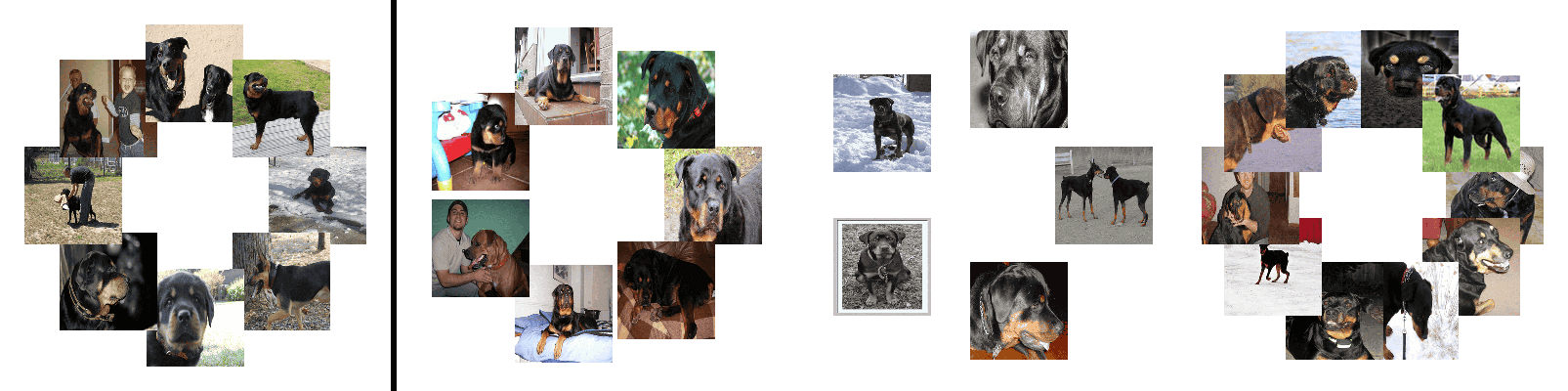} \\ 
    \includegraphics[width=0.88\linewidth]{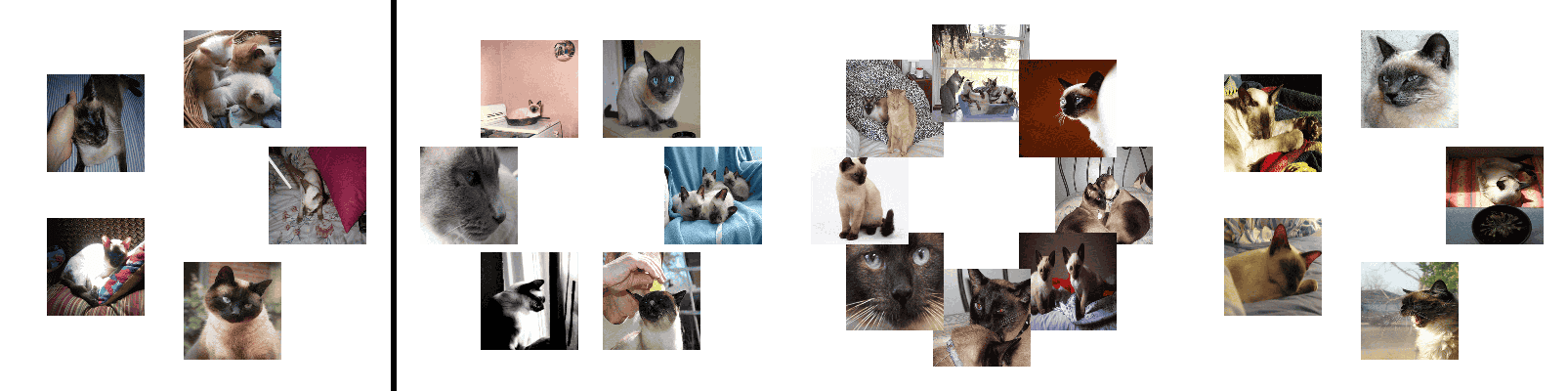} \\ 
    \includegraphics[width=0.88\linewidth]{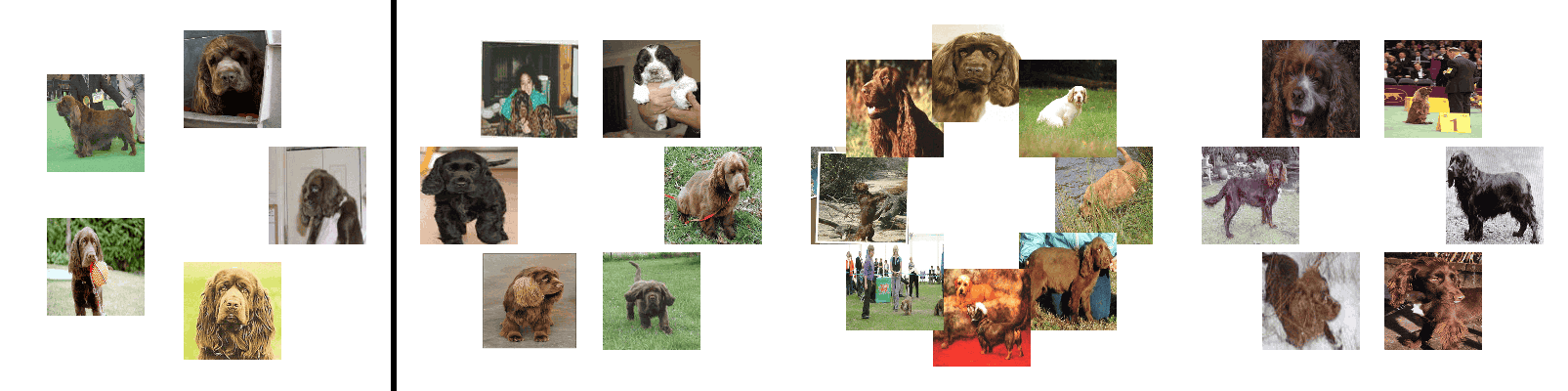} \\ 
    \includegraphics[width=0.88\linewidth]{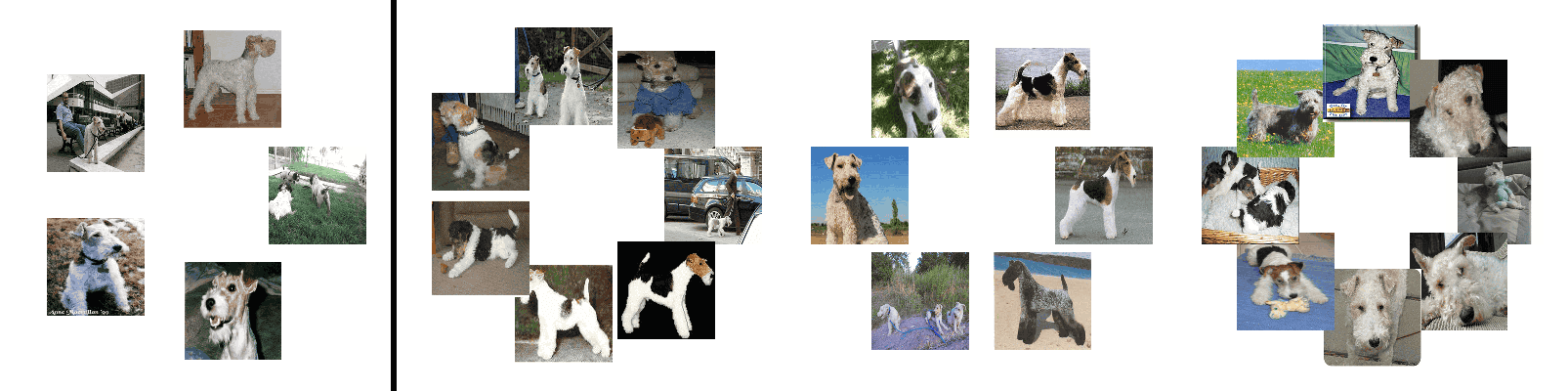} \\
    \includegraphics[width=0.88\linewidth]{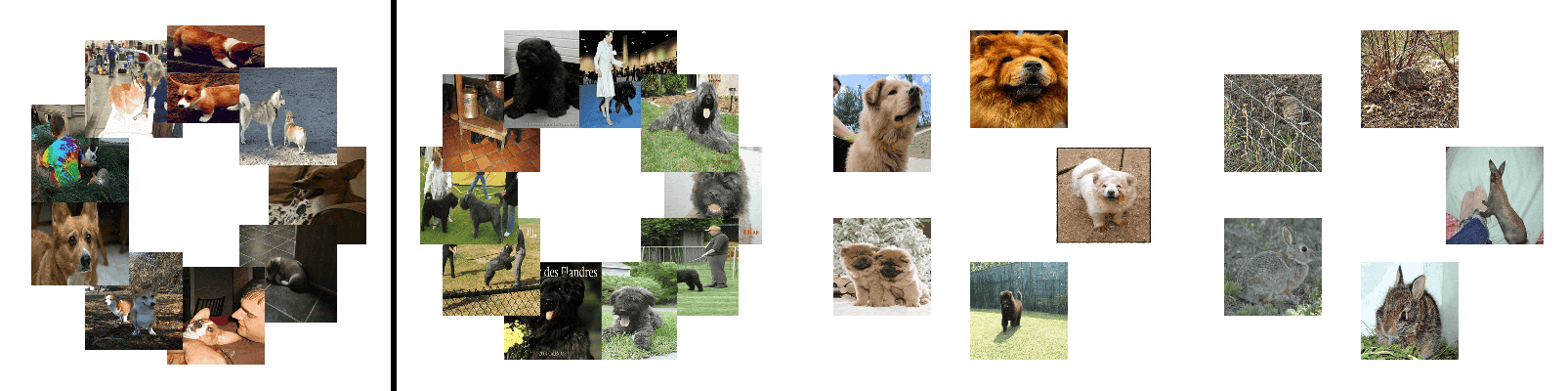} \\ 
    \includegraphics[width=0.88\linewidth]{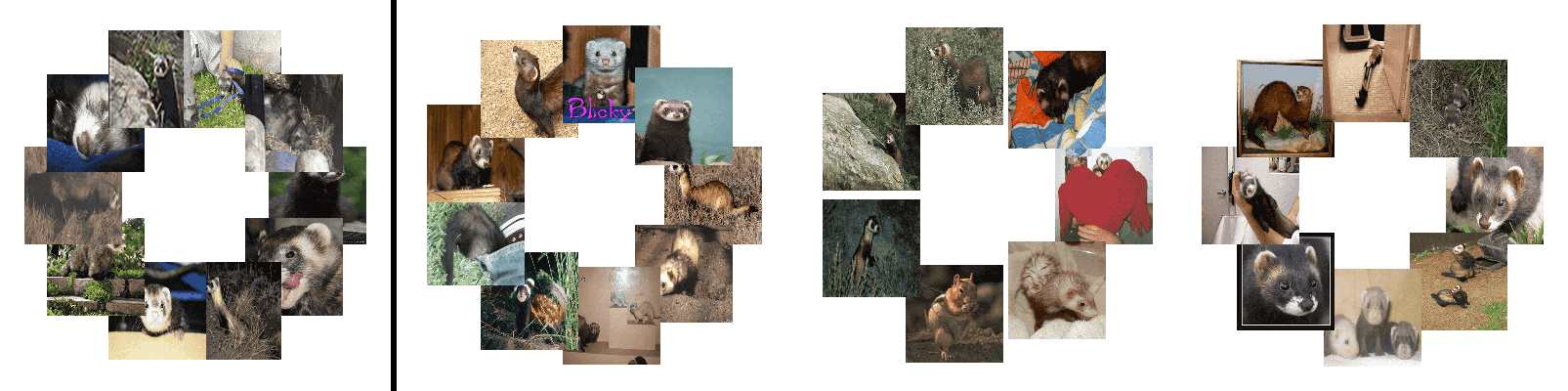} \\ 
    \includegraphics[width=0.88\linewidth]{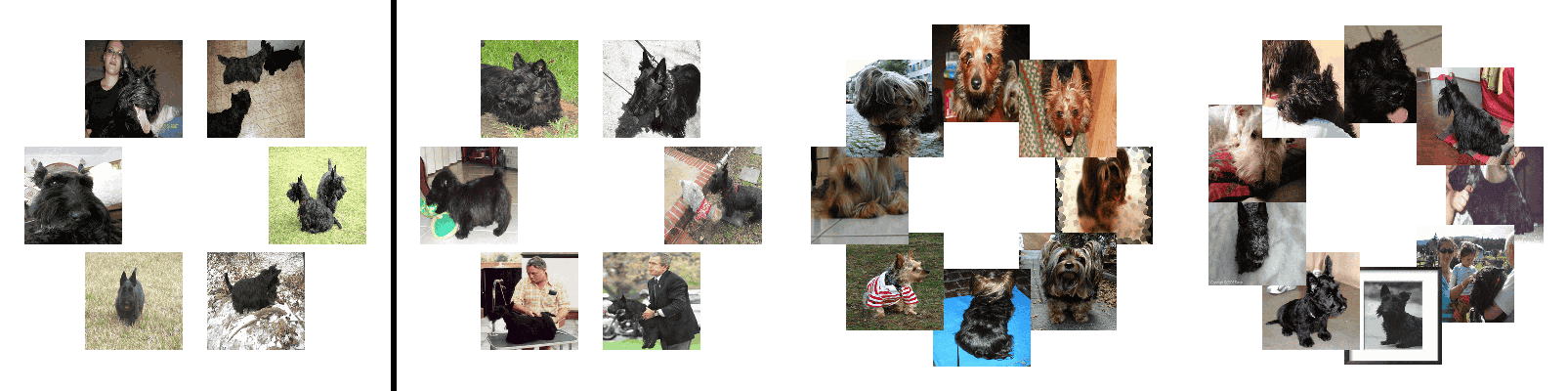} \\ 
    \includegraphics[width=0.88\linewidth]{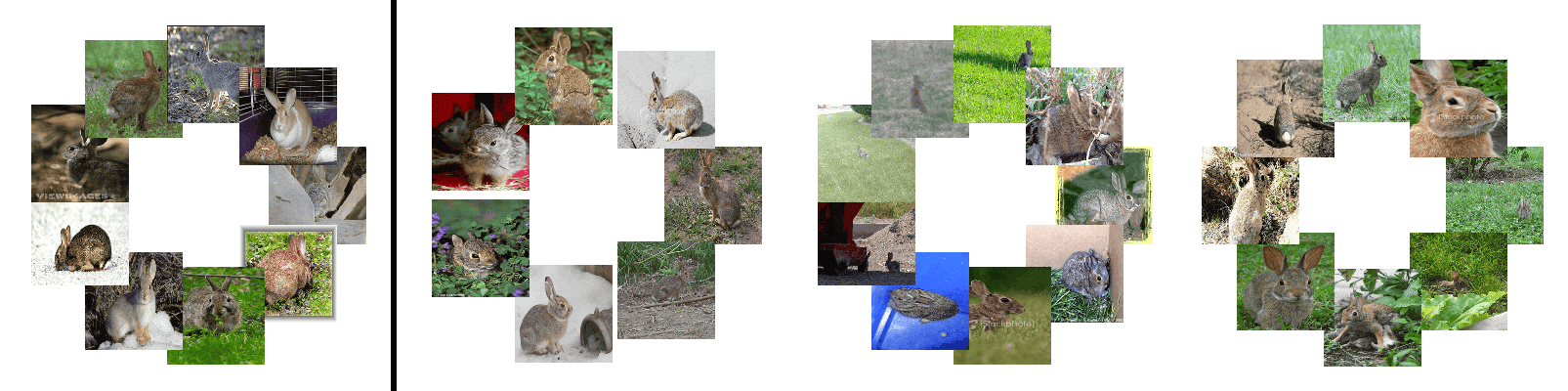} 
    \caption{\textit{\textbf{Additional non-cherry picked retrieval results (3/3):}} Retrieval is performed using model weights, to visualize each model we use the set of all the images that were used to fine-tune the model.}
    \label{fig:retrieval_long3}
\end{figure}

\end{document}